\def\xv{{\mathbf x}}
\def\yv{{\mathbf y}}
\def\zv{{\mathbf z}}
\def\vv{{\mathbf v}}
\def\wv{{\mathbf w}}
\def\fv{{\mathbf f}}
\def\gv{{\mathbf g}}
\def\hv{{\mathbf h}}
\def\dv{{\mathbf d}}
\def\nv{{\mathbf n}}
\def\mv{{\mathbf m}}
\def\qv{{\mathbf q}}
\def\tv{{\mathbf t}}
\def\Jm{{\mathbf J}}
\def\xiv{{\mathbf \xi}}
\def\g{\hat{f}}
\def\xiv{{\mathbf \xi}}
\def\R{{\mathbb R}}
\def\E{{\mathbb E}}
\def\N{{\mathbb N}}
\def\S{{\mathcal S}}
\def\I{{\mathcal I}}
\def\RM{{\mathcal R}}
\def\v1{{\mathbf 1}}
\def\cc{{\mathbf c}}
\DeclareMathOperator{\Rg}{Rg}
\newtheorem{thm}{Theorem}[section]
\newtheorem{proposition}[thm]{Proposition}
\begin{document}

\title{Deterministic Decoupling of Global Features \\ and its Application to Data Analysis} 

\author{Eduardo~Mart\'{i}nez-Enr\'{i}quez, 
        Mar\'ia del Mar~Gonz\'{a}lez, 
        and~Javier~Portilla
\IEEEcompsocitemizethanks{\IEEEcompsocthanksitem E. Mart\'{i}nez-Enr\'{i}quez and
J. Portilla are with the Instituto de \'{O}ptica, CSIC, Spain.  \protect\\
E-mails: eduardo.martinez@io.cfmac.csic.es, javier.portilla@csic.es
\IEEEcompsocthanksitem M. Gonz\'{a}lez is with Dept. of Mathematics, Universidad Aut\'{o}noma de Madrid and ICMAT, Spain.\\
E-mail: mariamar.gonzalezn@uam.es
}
\thanks{This work has been funded by the Spanish Government grants FIS2016-75891-P, PID2020-118071GB-
I00 and PID2020-113596GB-I00. Additionally, Grant RED2018-102650-T funded by MCIN/AEI/ 10.13039/501100011033, and the ``Severo Ochoa Programme for Centers of Excellence in R\&D'' (CEX2019-000904-S).}
}


\IEEEtitleabstractindextext{%
\begin{abstract}
We introduce a method for deterministic decoupling of global features and show its applicability to improve data analysis performance, as well as to
open new venues for feature transfer.
We propose a new formalism that is based on defining transformations on submanifolds, by following trajectories along the features' gradients.
Through these transformations we define a {\em normalization} that, we demonstrate, allows for decoupling differentiable features. 
By applying this to sampling moments, we obtain a quasi-analytic solution for the {\em orthokurtosis}, a normalized version of the kurtosis that is not just decoupled from mean and variance, but also from skewness.
We apply this method in the original data domain and at the output of a filter bank to regression and classification problems based on global descriptors, obtaining a consistent and significant improvement in performance as compared to using classical (non-decoupled) descriptors.
\end{abstract}

\begin{IEEEkeywords}
feature-based data analysis, feature redundancy, feature decoupling, nested normalization, feature transfer, local de-correlation, orthokurtosis, regression, classification.
\end{IEEEkeywords}}

\maketitle

\IEEEdisplaynontitleabstractindextext

\IEEEpeerreviewmaketitle

\IEEEraisesectionheading{\section{Introduction}\label{sec:introduction}}

\IEEEPARstart{D}{ata} analysis relies on the statistical distribution of the observed samples.
Usually, some {\em features} (i.e., real functions) are applied to extract relevant information from the observed data. In the Machine Learning field there are two basic scenarios for data analysis: (i) the {\em classical} one, where the set of features is chosen ad-hoc;  
(ii) the Deep Learning scenario, which involves automatically learning the features from the input data.
In all cases, when features are used, {\em the observed dependencies among them come both from the statistical behavior of the data and from the features' joint algebraic structure}. 

To illustrate this problem imagine that we analyze vectors representing 1-D signals, extracting some marginal sample moments and the sample auto-correlation. We will find a strong dependency between skewness and kurtosis, and also between consecutive correlation factors, e.g., $\rho(1)$ and $\rho(2)$, {\em even when the input data are i.i.d. samples}.
The reason is that the two mentioned pairs of features, like many others, are {\em algebraically  coupled}. As a consequence, their joint range is not just the outer product of their marginal ranges: some combinations of (independently) valid feature values are incompatible for the same input data. E.g., a skewness of 10 and a kurtosis of 3, or $\rho(1) = 0.9$ and $\rho(2) = 0$. 

Feature coupling, thus, produces spurious redundancy, a sort of {\em feature entanglement}, regardless of (and in addition to) the redundancy derived from input data statistics. It causes difficulties for analysis, processing, and simulation. Having a joint feature range with a very intricate topology (full of ``holes'' and complex boundaries' structure) is an obstacle to interpreting the role of each feature separately from the others. It also complicates the manipulation of the samples, in case we wanted to average feature values~\cite{PS00}, study the effect of modifying the value of a particular feature independently of the others, or simulating data by imposing some values onto their features~\cite{MMSP2020}. An interesting example of these problems appeared in~\cite{Balas2006}, which addressed the problem of simplifying the set of features used in~\cite{PS00} to visually describe texture samples. 

Despite its large negative impact on data analysis and processing, much less effort has been devoted in the literature to study and reverse deterministic feature coupling compared to statistical data modeling. Note that, when algebraic coupling between features exists (as in the examples mentioned above), conventional techniques such as PCA, ICA, or even more advanced non-linear ICA (see, e.g., \cite{Lee-Verleysen}), do not provide the right tools for disentangling the involved joint feature vector structures. Even in the ideal scenario of perfectly modeling all dependencies, a purely statistical approach applied to the observed features would mix up the two sources of redundancy, namely, statistical and algebraic. It is advantageous to address separately these two redundancy sources, as it is usual to apply the same type of features for diverse statistical distributions (even in ANNs, when doing {\em transfer learning}~\cite{PanYang2010}). Therefore, many different real problems on different data distributions using the same features will benefit from their decoupling.

Here we propose a mathematical and algorithmic framework for decoupling a set of given features, in the sense of finding another set of similar functions with their gradients mutually orthogonal everywhere in the domain - and, as a consequence, with their ranges decoupled. We study the mathematical conditions under which that is possible. We also study less favorable scenarios where only a limited and/or approximated decoupling is possible. 
We demonstrate the practical application of the proposed method to several examples of data analysis, namely, statistical regression and textured image classification.
Some of the seminal ideas and applied results presented has been published in three conference proceedings~\cite{Portilla:ICIP:2018,Martinez:ICASSP:2020,MMSP2020}. In this work we provide a solid framework, unifying, extending, and giving the necessary mathematical rigor to our previous results.

As concrete study cases, here we have focused on marginal moments and on the second-order moments at the output of a set of filters.
Marginal moments are widely used in the signal processing and statistics literature, for analysis tasks such as estimation (e.g., the method of moments), detection, regression, classification, etc. ~\cite{Pareto,Skew_original,Skew_distribution,Weibul_original,Weibul_estimation,Bovik:PAMI:1990}, and also for synthesis-by-analysis~\cite{HB95,PN96,PS00}. Typically they are used either implicitly, as empirical marginal histograms, or in their standardized form, and up to fourth order: sample mean, variance, skewness, and kurtosis. 
Whereas, as shown here, the first three standardized moments are already mutually decoupled, that is not the case for the skewness and kurtosis. The problem of the skewness-kurtosis coupling has been pointed out by several authors~\cite{Blest2003,Pearson16,Sharma2013}, but it had not been fully solved.
In this respect, one of the main contributions of this paper is presenting a normalized version of the fourth-order sample moment, the {\em orthokurtosis}, which is not just decoupled from the sample mean and variance, but also from the skewness. This new statistical function is fully consistent with the previous standardized sample moments (mean, variance and skewness),
that result from applying our decoupling technique to the first three raw moments.
In addition, the orthokurtosis calculation has a modest computational cost.
By using this new fourth-order feature, instead of the classical kurtosis, we obtain a dramatic accuracy gain in several regression problems (see Subsection~\ref{subsec:regression}).
Furthermore, higher-than-four order moments have been very rarely used (see exceptions in, e.g., ~\cite{Pearson1894,Kalai2010}) because of their instability and mutual redundancy.
By decoupling the marginal moments (exactly or approximately) here we are able to exploit very high order decoupled moments (up to 10th order) and demonstrate their positive impact for texture classification  (see Subsection~\ref{subsec:marg_mom}). 

Banks of convolutional filters, on the other hand, are a classical tool in signal processing, 
with a huge field of application, including early human vision modelling and image/audio analysis, processing and synthesis. In addition, they have also been incorporated~\cite{Fukushima1980, Lecun1998} into artificial neural networks (ANNs) for signals having spatial dimensions (image, video, 3-D, etc.) with tremendous impact.
In neural science, they have long been used to model the responses at early stages of animal and human visual and auditory systems~\cite{Hubel-Wiesel1959,Daugman1988CompleteD2,Navarro1997ImageRW,Dau1997}.
The latter image/audio representations 
share the feature of being redundant, thus avoiding the artifacts that plague critically-sampled linear transformations (e.g., orthogonal or bi-orthogonal wavelets~\cite{Daubechies1992TenLO}). 
However, redundancy in non-orthogonal linear representations demands paying a high price, namely, the algebraic coupling of undecimated sub-bands (outputs of the filters).
Here we address the problem of deterministically decoupling the second-order moments at the output of a filter bank, with direct application, besides analysis, to transfer~\cite{SIIMS2022} and synthesis~\cite{MMSP2020}. 
In Section~\ref{sec:applications} we 
show how the gradients of the resulting {\em decoupled} features are virtually orthogonal for white noise samples and very close to orthogonal for photographic textured image patches.
Furthermore, we demonstrate how approximately decoupling not just variance, but also higher-order moments, at the output of a bank of filters, results in an important performance boost in texture classification (Subsection~\ref{subsec:dec_TIL}).

This paper is organized as follows. Section~\ref{sec:decoupling} sets the mathematical foundations of the method, that allow, in favorable cases, to transform a given feature by decoupling it from a set of other given features. Section~\ref{sec:NeNs} proposes algorithms (based on the Nested Normalization concept, NeN) to obtain a hierarchically ordered set of mutually decoupled features and to transfer features from one observed sample to another.
Section~\ref{sec:study_cases} addresses in detail two study cases of features for being decoupled, namely marginal moments, and the second-order moments at the output of a filter bank. Section~\ref{sec:local_decorrelation} addresses analytically the local de-correlation effect of feature decoupling,
and why this improves parameter discrimination, in regression problems.   
Section~\ref{sec:applications} is devoted to showing how the proposed method actually decouples the studied features, and its practical impact when it is applied to data analysis (regression and classification). Section~\ref{sec:conclusion} concludes the paper. In addition, some technical and/or very detailed contents have been encapsulated in appendices, for readability and reproducibility sake.

\section{Deterministic Decoupling of Global Features}
\label{sec:decoupling}

In this section we propose a method for, given a set of features ${\cal S}$ and another unrelated feature $g$, finding a transformed feature $\hat g$, 
identical to $g$ on a high dimensional submanifold, that is {\em decoupled} to every feature in ${\cal S}$. 

\subsection{Preliminary Concepts}
\subsubsection{Global Shift-Invariant Features}
In this paper we associate a finite discrete signal made of $N$ samples with a vector  $\xv\in \R^N$, possibly lexico-graphically reordered, if the signal support is a multi-dimensional array. We will term a {\em feature} $f$ of that vector $\xv$ a differentiable real function $f: \overline\Omega \subset \R^N\rightarrow \R$ for a domain $\overline\Omega$.
We define here {\em global feature} a feature that depends on all vector's coefficients.

In this paper we will focus on \emph{shift-invariant} features, a special case of global features\footnote{The only non-global shift-invariant features are the trivial functions $f(\xv) = c$, where $c$ is a real constant.}.
Within them, we will exemplify the application of our method to features of the form:
\begin{equation}
    \label{eq:gf}
    f(\xv) = \frac{1}{N} \sum_{n=1}^{N}{[\mv(\xv)]_n},
\end{equation}
with $\mv: \overline\Omega \rightarrow \R^N$ being a differentiable shift-equivariant (i.e., commuting with shift operations), or shift-invariant, mapping, assuming a shift operation with boundary conditions (e.g., circular) has been defined.
This kind of functions, being averages, play the role of sample statistics, like marginal moments, correlation coefficients, moments at the output of filters, etc.

\subsubsection{Decoupled Features}
We say that two features $f_i$ and $f_j$ are {\em algebraically decoupled} (from now on just {\em decoupled}) on a subset of $\overline\Omega$ 
iff
\begin{equation}
\label{eq:dec_fea}
\nabla f_i(\xv) \cdot \nabla f_j(\xv) = 0, \quad\text{for all } \xv \text{ in that subset}.
\end{equation}
We extend this concept to a set of features ${\cal S} = \{f_j, j = 1\dots M\}$ by terming that the features of a set are decoupled, iff they are mutually decoupled, i.e., iff all possible pairs of features within that set $\{(i,j)\,:\, i,j \in \{1..M\}, i\neq j\}$ are decoupled. Similarly, we say that a feature is decoupled to a (decoupled or not) set of features iff it is decoupled to each of the features in that set.

It is worth pointing out two special cases, namely, when features are trivially decoupled and when they are trivially coupled.
We term \emph{trivially decoupled} features those for which there exists at least one orthogonal basis where they have disjoint supports\footnote{Note that they can not have disjoint supports in the original domain if they are both global.} (e.g. Fourier, orthogonal wavelets, etc.). Here we refer to support, in a given domain, as the subset of vector indices the feature depends on.
On the other extreme, a feature map $\fv$ is \emph{trivially coupled} iff it exists at least one non-degenerate function $F:\R^M\rightarrow\R$ such that $F(\fv(\xv))=0,\forall \xv\in\overline\Omega$.
In this paper, we present methods for decoupling features assuming none of those situations happens (for which decoupling is either unnecessary or impossible, respectively). 

\subsubsection{Normalization map}\label{subsection:normalization-definition}

The construction of a normalization will be key in our decoupling process. 

Let ${\cal S} = \{f_j:\overline \Omega\rightarrow\R, j = 1\dots M\}$ be a set of features, $\Omega$ a subset of $\overline \Omega$,  $\hat\xv_\S(\xv):\Omega\rightarrow\Omega$ be a continuous non-constant mapping, and $\vv^{ref}$ a vector made of $\{v_j^{ref},j=1\dots M\}$  (the {\em reference values}), some jointly compatible given reference values for the features in $\S$.
We say that $\hat\xv_\S(\xv;\vv^{ref})$ is a {\em normalization w.r.t. $\S$} and $\vv^{ref}$ in $\Omega$ iff it holds that 
\begin{itemize}
\item[(i)] $\{f_j(\hat\xv_\S(\xv;\vv^{ref})) = v_j^{ref}\}_{j=1}^{M}$; 
\item[(ii)] if $\{f_j(\xv) = v_j^{ref}\}_{j=1}^{M}$ then $\hat\xv_\S(\xv;\vv^{ref}) = \xv$. 
\end{itemize}
Note that previous conditions imply that every normalization is idempotent:
\begin{equation}\label{idempotent}
\hat\xv_\S(\hat\xv_\S(\xv;\vv^{ref});\vv^{ref}) = \hat\xv_\S(\xv;\vv^{ref}).
\end{equation}

We now set up some notation for this set of reference values which will be useful in our exposition. Let $\fv_\S:\overline\Omega\rightarrow \R^M$ be the vector transformation made of the ordered features in $\S$, $[f_1(\xv)\dots f_M(\xv)]$, and set $\vv^{ref}$ to be an $M$-dimensional vector.
We define a {\em reference manifold} as 
$\RM_\S(\vv^{ref}) = \fv_\S^{-1}(\vv^{ref})$,
i.e., the set of vectors $\xv$ 
such that $\fv_\S(\xv)=\vv^{ref}$ .

For being a valid set of reference values for the features, $\vv^{ref}$ must be made of jointly compatible values of the functions in an algebraic sense, i.e., $\{\xv: \fv_\S(\xv) =\vv^{ref}\}\neq\emptyset$.
In addition, we will assume a \emph{non-degeneracy} hypothesis, under which all feature gradients are linearly independent at every point (this will be condition C1 in Subsection \ref{subsection:invariant-mapping}). This is a stronger condition than the set of features not being trivially coupled, and it implies that the dimension of the reference manifold is {\em everywhere} $N-M$.

\subsection{Motivating example: decoupling two features}\label{normalization-one-feature}

In order to motivate the general algorithm, let us explain the method in the case of two features.

\subsubsection{Gradient systems}
\label{subsection:gradient-systems}

Let us fix one feature $f$, defined in a connected open set $\overline\Omega$.  We study the  trajectories $\xv(t)$ of the initial value problem
\begin{equation}\label{grad-system}
\left\{\begin{aligned}
&\frac{d\xv}{dt}=-\nabla f(\xv),\\
&\xv(0)=\xv_0.
\end{aligned}\right.
\end{equation}
This ODE is known as a gradient system. It is clear that moving along a (non-constant) trajectory in the $+t$ (resp. $-t$) direction will strictly decrease (resp. increase) the value of the function $f$ until it reaches the minimum (resp. maximum) value or stabilize at a critical point of $f$ (see the reference \cite[Section 9.3]{HSD} for a discussion of this type of systems).

Thus, in order to study the set of values that $f$ may take along a trajectory, one needs to impose some constraints on its equilibrium points. The precise set of conditions are given in Appendix~\ref{appendix:grad-systems}, and can be summarized in:
\begin{itemize}
\item[B1.] Maxima and minima are all global extrema, not just local.
\item[B2.] The set made of the basins of attraction of all saddle points, denoted by $\Lambda$, is of lower dimension.
\end{itemize}
The first condition ensures that the trajectory will not stop at a local non-global extreme. The second condition 
guarantees that saddle points are non-degenerate and thus, essentially unstable, so it allows to circumvent them by adding small perturbations on the initial condition $\xv_0$ (see Section~\ref{sec:perturbation}).

We denote the trajectory that passes through the point $\xv_0$, or equivalently, the integral manifold of the system  \eqref{grad-system},  by $\I(\xv_0,f)$. The main property we will need is that all possible $f$ values are reachable from any initial point $\xv_0$ by moving along the gradient. As a consequence, fixed a reference value $v_f^{ref}$ for $f$,  conditions B1-B2 guarantee that, for $\xv\not\in\Lambda$, each trajectory $\I(\xv,f)$ reaches (only once) this value. See Appendix \ref{appendix:grad-systems} for a technical discussion.

\subsubsection{Decoupling via normalization}

Assume that we are given  two features $\{f, g\}$. We would like to replace $g$ by a ``similar'' feature $\hat g$ that is decoupled from $f$ in the sense given by \eqref{eq:dec_fea}, via normalization.

More precisely, fixed one feature $f$, we would like to construct a normalization map $\hat \xv_f(\xv)$ as defined in Section \ref{subsection:normalization-definition}. A possibility is to choose a point in  the trajectory $\I(\xv,f)$ that attains some reference value $v^{ref}$ of the feature $f$. Thus it is natural 
to define the normalization by $\hat{\xv}_f(\xv;v^{ref})$, this is, as the map that sends a point $\xv$ to the point where the trajectory $\I(\xv,f)$ crosses the reference manifold $\RM_f(v^{ref})$, which is unique by our previous discussion on gradient systems.

Now, given another feature $g$, we define $\hat g$ by
\begin{equation}\label{normalize-g}
\hat g(\xv)=g(\hat \xv_f(\xv)).
\end{equation}
Then $f$ and $\hat g$ are decoupled, this is, their gradients are orthogonal. To show this,   apply the chain rule to Eq.~\eqref{normalize-g}, to obtain $\nabla \hat{g}_f(\xv) = \Jm_{\hat{\xv}_f}^T(\xv) \nabla g(\hat \xv_f(\xv))$, where $\Jm_{\hat \xv_f}$ represents the Jacobian matrix of the map $\hat\xv_f$.
 Now, by pre-multiplying both terms by $\nabla f(\xv)^T$, we obtain:
\begin{equation}\label{orthogonal-gradients}
    \nabla f(\xv)\cdot\nabla \hat{g}(\xv) 
     =  (\Jm_{\hat \xv_f}(\xv) \nabla f(\xv))^T \nabla g(\hat \xv_f(\xv)).
\end{equation}
Finally, note that
\begin{equation*}
\Jm_{\hat \xv_f}(\xv) \nabla f(\xv)=\left.\frac{d}{dt}\right|_{t=0} \hat{\xv}_f({\alpha}(t)),
\end{equation*}
where $\alpha(t)$ is the integral curve of \eqref{grad-system} starting at the point $\xv$. By construction, the map $\hat \xv_f$ is constant along this curve and, thus, the above expression vanishes. This shows that the two gradients in expression \eqref{orthogonal-gradients} are orthogonal, as desired. Figure~\ref{fig:manifold} graphically illustrates these concepts for two given features $f_1,f_2$.

\begin{figure*}[t]
\begin{center}
\hspace{-2.5mm}
\includegraphics[width=0.85\textwidth]{./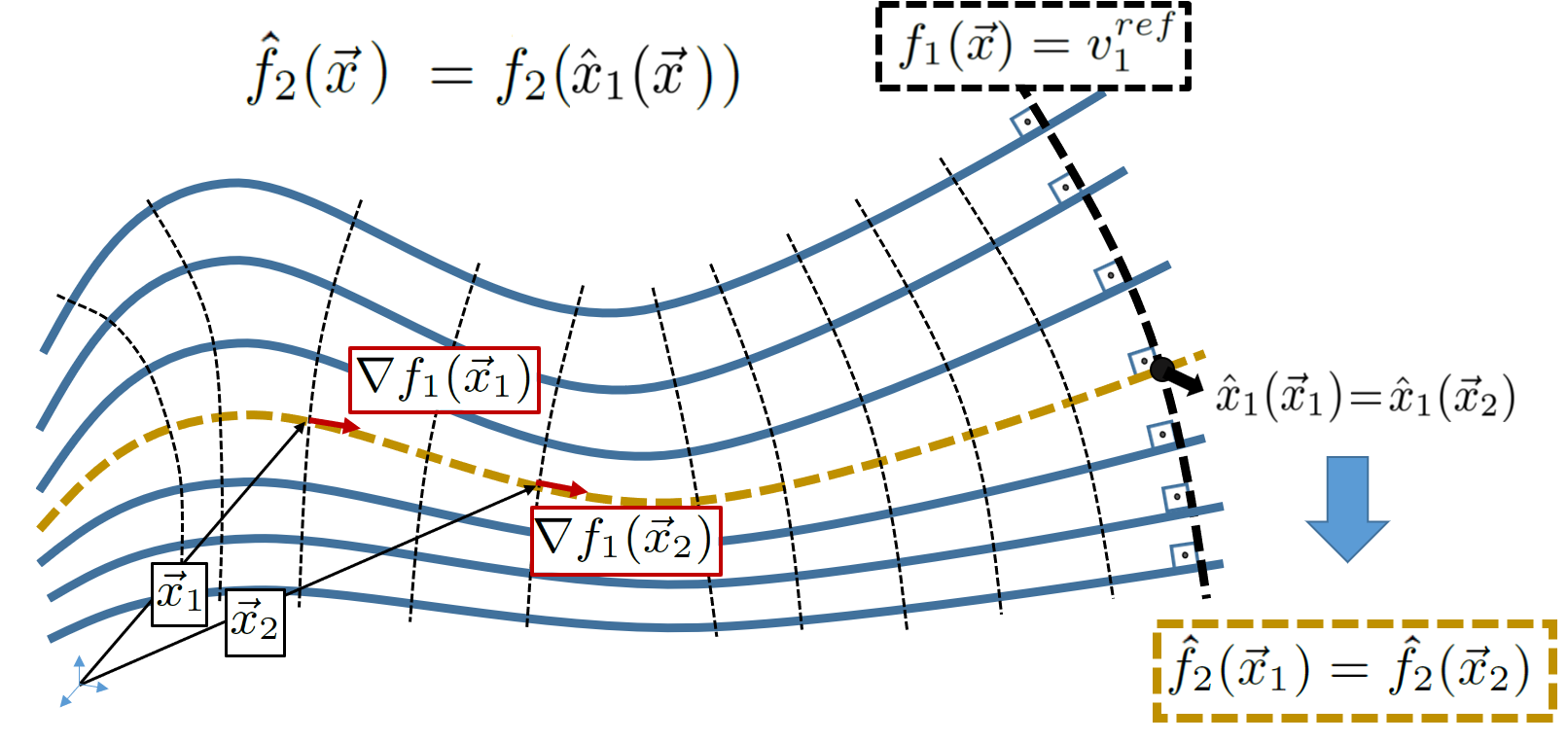} 
\end{center}
\vspace{-4mm}
\caption{Decoupling two features through normalization.  The proposed normalization consists in finding the intersection of the invariance submanifolds passing by $\xv$ (parallel curved thick lines) with the reference manifold $\RM_1 = \{ \xv : f_1(\xv) = v_1^{ref}\}$ (black dashed thick line). All vectors belonging to the same invariance submanifold (e.g., the mustard dashed curve), like  $\xv_1$ and $\xv_2$, have the same normalized vector $\hat{x}_1$ and, thus, the same  $\g_2$ value. Therefore, $\nabla \g_2$ must be locally orthogonal everywhere to the invariance submanifolds (iso-level sets of $\g_2$), and, as a consequence, also to $\nabla f_1$.}
\label{fig:manifold}
\end{figure*}

\subsection{Decoupling features from a given set: multi-feature normalization}

Now we consider the problem of decoupling a feature $g$ from a given set $\S= \{f_i:\overline\Omega\subset\R^N \rightarrow \R, i = 1\dots M \}$ of $M$ features. The method follows the normalization scheme explained in the two-feature case, by following trajectories given by the gradients of all the $f_i$. Unlike the simple case of gradient systems, in order to build integral manifolds from multiple feature gradients, additional conditions must be fulfilled.  Indeed, we will give a necessary and sufficient condition for this method to apply (Proposition \ref{prop:iff} below). 

It is also clear that, in order to construct a normalization, we need to restrict to a subdomain $\Omega$, obtained from $\overline\Omega$ by removing its
critical points corresponding to the set of given features $\S =\{f_j,j = 1 ...M\}$.

\subsubsection{Invariant mapping with respect to a set of features}\label{subsection:invariant-mapping}

We first introduce the notion of a mapping being invariant with respect to a set of features $\S= \{f_i:\overline\Omega\subset\R^N \rightarrow \R, i = 1\dots M \}$, a concept that will greatly facilitate the decoupling of an arbitrary feature $g$ from this set.

 We say that a non-constant, differentiable mapping $\yv_\S:\Omega\rightarrow \Omega$ is \emph{invariant} w.r.t. $\S$ iff
\begin{equation}
    \label{eq:invariant_mapping_condition}
    \Jm_{\yv_\S}(\xv) \nabla f_i(\xv) = {\bf 0}, \forall f_i\in\S,\forall \xv\in\Omega,
\end{equation}
where $\Jm_{\yv_\S}$ represents the Jacobian matrix of $\yv_\S$.

\begin{proposition}\label{prop:decoupling}
{\em Obtaining decoupled features from invariant mappings.}
Let $g$ be an arbitrary feature $g:\overline\Omega\rightarrow\R$, and $\yv_\S$ an invariant mapping w.r.t. a set of features $\S$. 
From them we construct a new feature:
\begin{equation}
    \label{eq:feature_of_invariant_mapping}
    \hat{g}_\S(\xv) = g(\yv_\S(\xv)).
\end{equation}
Then it holds that $\nabla \hat{g}_\S(\xv) \cdot \nabla f_i(\xv)=0,\, \forall f_i\in\S,\forall \xv\in\Omega$, i.e., the new feature $\hat{g}_\S$ is decoupled from 
all features in $\S$.
\end{proposition}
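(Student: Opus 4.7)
The plan is to follow exactly the same chain-rule argument that was used for the two-feature motivating case, but applied uniformly to every feature in $\S$. The proposition is essentially an immediate consequence of the definition of an invariant mapping, so the proof will be short and the challenge is less in finding the argument than in presenting it cleanly.

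First I would apply the chain rule to the defining equation $\hat{g}_\S(\xv) = g(\yv_\S(\xv))$, obtaining
\begin{equation*}
\nabla \hat{g}_\S(\xv) = \Jm_{\yv_\S}^T(\xv)\,\nabla g(\yv_\S(\xv)).
\end{equation*}
This is the same identity that produced Eq.~\eqref{orthogonal-gradients} in the two-feature case; here it is valid on all of $\Omega$ because both $g$ and $\yv_\S$ are differentiable there.

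Next, I would fix an arbitrary $f_i \in \S$ and pre-multiply the above identity by $\nabla f_i(\xv)^T$:
\begin{equation*}
\nabla f_i(\xv) \cdot \nabla \hat{g}_\S(\xv)
= \nabla f_i(\xv)^T \Jm_{\yv_\S}^T(\xv)\,\nabla g(\yv_\S(\xv))
= \bigl(\Jm_{\yv_\S}(\xv)\,\nabla f_i(\xv)\bigr)^T \nabla g(\yv_\S(\xv)).
\end{equation*}
Then the invariance hypothesis \eqref{eq:invariant_mapping_condition} kicks in: by assumption, $\Jm_{\yv_\S}(\xv)\,\nabla f_i(\xv) = \mathbf{0}$ for every $f_i \in \S$ and every $\xv \in \Omega$, so the right-hand side vanishes. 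Since $f_i$ was arbitrary, this proves decoupling of $\hat g_\S$ from every feature in $\S$.

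The argument has no real obstacle; it is a direct linear-algebraic consequence of the chain rule plus the defining property of invariant mappings. The only subtlety worth flagging in the write-up is that the chain rule requires $\yv_\S$ to map $\Omega$ into the domain of $g$, which is guaranteed by the statement $\yv_\S:\Omega \rightarrow \Omega \subset \overline\Omega$, and that differentiability of $\hat{g}_\S$ on $\Omega$ follows from that of $g$ and $\yv_\S$. Everything nontrivial has already been pushed into the notion of an invariant mapping, whose actual construction (and existence) is the topic of the following subsections rather than this proposition.
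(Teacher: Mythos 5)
Your proof is correct and follows essentially the same route as the paper's: the chain rule applied to $\hat{g}_\S = g \circ \yv_\S$, followed by pre-multiplication with $\nabla f_i(\xv)^T$ and the invariance condition \eqref{eq:invariant_mapping_condition} to annihilate the resulting term $\Jm_{\yv_\S}(\xv)\nabla f_i(\xv)$, exactly mirroring the calculation in Eq.~\eqref{orthogonal-gradients}. Your added remarks on domains and differentiability are sound but not part of the paper's (even terser) argument.
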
 
\begin{proof}
Mimicking the calculation in Eq. \eqref{orthogonal-gradients}, we have
\begin{eqnarray}
    \nabla f_i(\xv)\cdot\nabla \hat{g}_\S(\xv) 
    & = & (\Jm_{\yv_\S}(\xv) \nabla f_i(\xv))^T \nabla g(\yv_\S(\xv)) \nonumber \\
    & = & 0,\,\forall f_i\in\S,\forall \xv\in\Omega
\end{eqnarray}
where the last equality holds because of Eq.~\eqref{eq:invariant_mapping_condition}.
\end{proof}

Now that the significance of having an invariant mapping has been established, let us consider the problem of existence. We will need to assume that:
\begin{itemize}
\item[C1.] The gradients $\{\nabla f_i(\xv)\}$ are linearly independent at every point $\xv$; and
\item[C2.] they satisfy the Frobenius condition, which is an integrability condition for several gradients $\nabla f_i$. The related technicalities are addressed in Appendix \ref{appendix:Frobenius}.
 \end{itemize}
 
 We will always assume condition C1 in order to avoid redundancy in the set of features $\S$, even if not explicitly stated.
 
 Now we show that the Frobenius condition C2 is necessary and sufficient for an invariant mapping to apply. This gives a criterion for the possibility of exact decoupling.

\begin{proposition}\label{prop:iff}
Given $\S$ as above, there exists an invariant mapping $ \yv_\S$ w.r.t. $\S$ iff the gradients $\{\nabla f_i,\,i=1\dots M\}$ satisfy the Frobenius condition C2 at each point.
\end{proposition}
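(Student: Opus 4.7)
The plan is to interpret Proposition~\ref{prop:iff} through the classical Frobenius theorem, viewing the set of gradients as defining a rank-$M$ distribution
\[
D(\xv) \;=\; \mathrm{span}\{\nabla f_1(\xv),\dots,\nabla f_M(\xv)\}
\]
on $\Omega$ (well-defined of rank $M$ by the linear independence hypothesis C1), and identifying an invariant mapping $\yv_\S$ with a submersion whose fibers are the integral manifolds of $D$.

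For the sufficiency direction (C2 implies existence of $\yv_\S$) I would invoke Frobenius's theorem directly: the involutivity condition C2 (spelled out in Appendix~\ref{appendix:Frobenius}) gives, in a neighborhood of any point of $\Omega$, a foliation by $M$-dimensional integral manifolds of $D$. To package this foliation into an actual map, I would exploit the reference manifold $\RM_\S(\vv^{ref})=\fv_\S^{-1}(\vv^{ref})$, which is $(N-M)$-dimensional and transverse to $D$ (its tangent space is precisely $D(\xv)^\perp$, by definition of gradient). Transversality plus dimension count shows that each leaf intersects $\RM_\S(\vv^{ref})$ in isolated points; locally, this intersection is a single point, and I define $\yv_\S(\xv)$ as that point. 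Smoothness of $\yv_\S$ follows from the implicit function theorem applied to $\fv_\S$ along the gradient flows, and condition \eqref{eq:invariant_mapping_condition} is immediate because $\yv_\S$ is constant along each leaf, whose tangent space contains every $\nabla f_i(\xv)$.

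For the necessity direction (existence of $\yv_\S$ implies C2) I would argue dimensionally. Since $\yv_\S$ is invariant, $D(\xv)\subset\ker\Jm_{\yv_\S}(\xv)$; because $\dim D(\xv)=M$ and, for any invariant $\yv_\S$ whose image sits in $\RM_\S(\vv^{ref})$ (which has dimension $N-M$), the rank of $\Jm_{\yv_\S}$ is at most $N-M$ and hence the kernel has dimension at least $M$. Under the natural requirement that $\yv_\S$ be of maximal rank $N-M$ — the only case compatible with the normalization property $\yv_\S(\Omega)\subset\RM_\S(\vv^{ref})$ used in Section~\ref{subsection:normalization-definition} — the inclusion becomes an equality, so $D=\ker\Jm_{\yv_\S}$. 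Then the level sets of $\yv_\S$ are $M$-dimensional integral manifolds of $D$, so $D$ is integrable; Frobenius's theorem therefore forces the involutivity condition C2 on $\{\nabla f_i\}$.

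The main obstacle is the last argument: the definition of invariant mapping in Section~\ref{subsection:invariant-mapping} only asks for $D\subset\ker\Jm_{\yv_\S}$, not equality, and in general a non-integrable distribution can be strictly contained in an integrable one (the tangent distribution of the fibers of any submersion), so one cannot conclude C2 from a low-rank invariant $\yv_\S$. I would therefore have to either tighten the notion of invariant mapping to require rank $N-M$ (which is what the paper's use of normalizations onto $\RM_\S(\vv^{ref})$ effectively enforces), or show constructively that from any non-constant invariant $\yv_\S$ one can manufacture an invariant map of maximal rank whose kernel is exactly $D$. Once this reduction is secured, Frobenius closes the equivalence in both directions.
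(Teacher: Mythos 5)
Your sufficiency direction is essentially the paper's own construction: Subsection~\ref{subsec:normalizaton_Frobenius} likewise integrates the distribution $\mathcal D(\xv)=\mathrm{span}\{\nabla f_1(\xv),\dots,\nabla f_M(\xv)\}$ via Frobenius and defines the map as the intersection $\I(\xv,\S)\cap\RM_\S(\vv^{ref})$, proving invariance exactly as you do, by constancy along each leaf (Proposition~\ref{prop:normalization-invariant}, item i). One caveat: transversality of $\RM_\S(\vv^{ref})$ to the leaves only makes intersection points isolated \emph{where they occur}; it does not guarantee that a given leaf meets the reference manifold at all, nor that it does so only once. The paper secures global existence and uniqueness through the critical-point hypotheses B1--B2 and the monotonicity of the gradient flows inside each leaf (Proposition~\ref{prop:all_v_S_are_reachable}), working on $\Omega\setminus\Lambda$. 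Your purely local construction does yield \emph{some} invariant mapping near each point, which suffices for the bare existence claim, but you should state explicitly that this is what you are proving, since the paper's normalization is a global object.

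Your necessity direction takes a genuinely different route from the paper's. The paper (Appendix~\ref{appendix:iff}) argues by direct computation: differentiate $\Jm\nabla f=\mathbf{0}$, solve for the Hessians, and cancel the Frobenius combination \eqref{Frobenius-check} using the symmetry $\partial_{x_\ell}J_{ks}=\partial_{x_s}J_{k\ell}$ of second derivatives of $\hat\xv_\S$. Your argument (constant-rank theorem: the fibers of a rank-$(N-M)$ invariant map are $M$-dimensional integral manifolds of $\mathcal D$, hence $\mathcal D$ is involutive) is cleaner once the rank hypothesis is in place --- and the gap you flag is genuine, indeed sharper than you suggest. With the literal definition of Subsection~\ref{subsection:invariant-mapping} (non-constant, differentiable, $\Jm_{\yv_\S}\nabla f_i=\mathbf 0$), the \emph{only if} direction is false: in $\Omega\subset\R^4$ take $f_1(\xv)=x_1$ and $f_2(\xv)=x_2+x_1x_3$; the gradients $(1,0,0,0)$ and $(x_3,1,x_1,0)$ are everywhere independent (C1) but fail C2, since the bracket of the corresponding fields is $\partial_{x_3}\notin\mathcal D$, yet $\yv(\xv)=(0,0,0,x_4)$ is a non-constant invariant mapping. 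This counterexample also kills your second proposed repair: no procedure can upgrade such a $\yv$ to a maximal-rank invariant map, because by your own argument that map would force involutivity, which fails here. The only viable fix is your first one --- require $\mathrm{rank}(\Jm_{\yv_\S})=N-M$ --- which is precisely what the paper's normalization satisfies (Proposition~\ref{prop:normalization-invariant}, item ii) and what the paper's appendix computation tacitly assumes when it writes $J^{-1}$: note that $J$ annihilates the $M$ gradients, so it is singular, and the manipulation survives only as a pseudo-inverse argument modulo $\ker J$, which shows the bracket lies in $\ker J$; this yields C2 exactly when $\ker J=\mathcal D$, i.e.\ in the maximal-rank case. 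So your review of the necessity step not only identifies a real gap in your own draft but also the implicit hypothesis needed by the paper's proof.
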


This proof will be given in two steps. The  {\em only if} part will be postponed to the Appendix (Section \ref{appendix:iff}) because it is rather technical and not relevant to our study.  Here we will concentrate in the {\em if} statement, which will be treated in Subsection \ref{subsec:normalizaton_Frobenius} below. Our proof is explicit, giving a precise construction of the invariant map via normalization, which is the crucial ingredient.

\subsubsection{Invariance submanifolds}
\label{subsection:Frobenius}

Given a set of features $\S= \{f_i:\overline\Omega \rightarrow \R, i = 1\dots M \}$, the \emph{invariance submanifold}  $\I(\xv_0,\S)$ is an $M$-dimensional submanifold passing through $\xv_0$ whose tangent planes at each point are spanned by the gradients $\{\nabla f_i,\,i=1\dots M\}$. 

To ensure that the invariance submanifold exists we need to assume conditions C1 and C2 above, as it is explained in Appendix \ref{appendix:Frobenius}. Indeed, Frobenius condition C2 is a compatibility condition on the second derivatives of different $f_i$ which is needed for multi-feature integrability. Moreover, it is  is vacuous if $M=1$,  as we only integrate along the gradient of a single feature, see Eq. \eqref{grad-system}.

In addition, Frobenius theorem states that 
$\Omega$ is foliated by invariance submanifolds.

\subsubsection{Normalization of several features}   \label{subsec:normalizaton_Frobenius}

Here  we give the construction of a normalization of multiple features generalizing the  approach in Section \ref{normalization-one-feature} for the decoupling of two features, and show that this normalization indeed yields an invariant mapping.

 For this, we associate a single vector $\yv_\S(\xv)$ to each invariance submanifold $\I(\xv,\S)$, this is,
\begin{equation}
    \label{eq:invariant_mapping_and_invariance_manifold}
    \yv_\S(\zv) = \yv_\S(\xv_0),\,\,\forall\zv\in\I(\xv_0,\S),
\end{equation}
and 
ensure that such mapping $\yv_\S(\xv)$ is continuous and differentiable.  
\label{subsection:normalization}
The remaining question is, then, how to choose a representative $\yv_\S(\xv_0)$ of each invariance submanifold $\I(\xv_0,\S)$. In the case of a normalization, we choose the vector belonging to $\I(\xv_0,\S)$ that attains some reference values $\vv_\S^{ref}$ in its features, values that we know $\I(\xv_0,\S)$ may take, for all $\xv_0\in\Omega$. 

The following proposition states that, after removing a lower-dimensional subset $\Lambda$ from $\Omega$, we can attain a valid set of reference values by moving along the invariance submanifold: 
\begin{proposition}
Let $\vv^{ref}$ be any jointly compatible set of values of $\fv_\S$. Then
 for all  $\xv_0\in\Omega\setminus\Lambda,$ there exists $\zv\in \I(\xv_0,\S)$  that satisfies $\fv_\S(\zv)=\vv^{ref}$, and it is unique in the connected component of $\Omega\setminus\Lambda$  where $\xv$ belongs to.
Thus, the solution set $\I(\xv_0,\S)\bigcap\RM_\S(\vv^{ref})$ contains exactly one point in this connected component. 
\label{prop:all_v_S_are_reachable}
\end{proposition}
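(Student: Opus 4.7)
The strategy is to first show, using conditions C1 and C2, that the restriction of $\fv_\S$ to the invariance submanifold $\I(\xv_0,\S)$ is a local diffeomorphism between $M$-dimensional manifolds, and then to upgrade this local statement to a global one using (i) gradient-flow trajectories along the $\nabla f_i$ to reach the prescribed reference values $\vv^{ref}$, and (ii) the local diffeomorphism property together with the removal of $\Lambda$ to rule out multiple pre-images inside a connected component. In this way the problem reduces to a standard submersion/covering map argument on a foliated domain.

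For the local picture, Frobenius' theorem (condition C2, Appendix \ref{appendix:Frobenius}) provides $\I(\xv_0,\S)$ as a smooth $M$-dimensional submanifold of $\Omega$ with $T_{\xv}\I(\xv_0,\S)=\mathrm{span}\{\nabla f_1(\xv),\dots,\nabla f_M(\xv)\}$, and condition C1 guarantees that these gradients form a basis of this tangent space. Computing the differential of $\fv_\S|_{\I(\xv_0,\S)}$ in this basis yields the Gram matrix $G(\xv)$ with entries $G_{ij}(\xv)=\nabla f_i(\xv)\cdot\nabla f_j(\xv)$, which is symmetric positive-definite by C1. Since source and target both have dimension $M$, this makes $\fv_\S|_{\I(\xv_0,\S)}$ a local diffeomorphism; in particular, preimages of single values are discrete and the map is open.

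For existence on the connected component of $\Omega\setminus\Lambda$ containing $\xv_0$, note that each $\nabla f_i$ is tangent to $\I(\xv_0,\S)$, so the gradient flow \eqref{grad-system} associated with $f_i$ preserves the invariance submanifold. Iterating such flows, or equivalently running a continuous deformation that exploits the invertibility of $G(\xv)$ as a Newton-type correction, one reaches any jointly compatible target $\vv^{ref}$ from $\xv_0$, using the multi-feature analog of B1--B2 precisely to avoid stalling at saddles — this is what the exclusion of $\Lambda$ guarantees. For uniqueness, the local diffeomorphism structure turns $\fv_\S|_{\I(\xv_0,\S)\,\cap\,(\Omega\setminus\Lambda)}$ into a covering map onto its image, and a monodromy argument, valid because $\Lambda$ has lower dimension and therefore does not create nontrivial topology in the relevant connected component, forces the covering to have a single sheet and the preimage of $\vv^{ref}$ to consist of exactly one point. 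The main obstacle lies in this last global step: passing from the local invertibility given by the Gram matrix to the global injectivity inside a connected component, which is exactly why $\Lambda$ must be removed and why conditions B1--B2 are needed on the restricted gradient systems.
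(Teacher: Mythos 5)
Your first half is sound, and in fact makes precise something the paper leaves implicit: writing a tangent vector of the leaf as $w=\sum_j c_j\nabla f_j(\xv)$ and computing $d\bigl(\fv_\S|_{\I(\xv_0,\S)}\bigr)(w)_i=\sum_j G_{ij}(\xv)c_j$ with the Gram matrix $G_{ij}=\nabla f_i\cdot\nabla f_j$, positive definite under C1, correctly shows that $\fv_\S$ restricted to an invariance submanifold is a local diffeomorphism. Your existence step — the gradient flow of each $f_i$ stays inside $\I(\xv_0,\S)$, and B1--B2 together with the removal of $\Lambda$ prevent stalling before the target values are reached — is essentially the paper's own existence argument (it is no more explicit than yours about how the $M$ values are attained \emph{simultaneously}, which is delegated to the broad/narrow path constructions).

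The uniqueness step, however, has a genuine gap. First, a local diffeomorphism onto its image is \emph{not} in general a covering map: one needs properness of $\fv_\S|_{\I(\xv_0,\S)\cap(\Omega\setminus\Lambda)}$ or a comparable completeness hypothesis, and nothing in C1--C2 or B1--B2 supplies it — deleting $\Lambda$ from the leaf typically destroys properness outright, since preimage points can escape through the removed set. Second, the monodromy claim rests on the assertion that removing a lower-dimensional set creates no nontrivial topology, which is false: deleting a single point from a surface (codimension $2$) already yields $\pi_1=\mathbb{Z}$, and codimension-$1$ pieces of $\Lambda$ inside the $M$-dimensional leaf can disconnect it; moreover, what a monodromy argument would actually require is simple connectivity of the \emph{base} (the image of the leaf in $\R^M$), about which your argument says nothing — you conflate it with the connected component of $\Omega\setminus\Lambda$ in $\R^N$. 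The paper sidesteps all of this topology: it derives uniqueness directly from strict monotonicity, namely that the flow of each gradient strictly increases (resp.\ decreases) the corresponding $f_i$, so no second point of $\I(\xv_0,\S)$ in the same connected component of $\Omega\setminus\Lambda$ can carry the same feature vector. To repair your proof, replace the covering/monodromy step with this monotonicity argument; as written, the global-injectivity conclusion does not follow from the local diffeomorphism property.
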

\begin{proof} By our assumptions on the critical points, the basin of attraction of critical points that are not global maxima or minima is lower dimensional. Note that the gradient flow of each $f_i$ starting at $\xv_0$ is fully contained in $\mathcal I(\xv_0,\S)$, and thus, the $f_i$'s take all possible values along the flow unless $\xv_0$ belongs to the basin of attraction of a saddle. 

Next, there cannot be  more than one point in $\I(\xv_0,\S)$ with exactly the same reference values since, under our  assumptions on critical points, the flow of each gradient always strictly decreases (resp. increases) the value of the corresponding $f_i$.
\end{proof}

Thanks to the previous Proposition, for $\xv\in\Omega\setminus\Lambda$ we can define the normalization
\begin{equation}\label{def-maping}
\hat{\xv}_\S(\xv;\vv_\S^{ref}) = \I(\xv,\S)\bigcap\RM_\S(\vv^{ref}). 
\end{equation}

\begin{proposition}\label{prop:normalization-invariant}
The normalization $\hat{\xv}_\S$ constructed in  \eqref{def-maping} has the following properties:
\begin{itemize}
    \item[i.] $\hat{\xv}_\S$ is an invariant mapping. 
    \item[ii.] The Jacobian $\Jm_{\hat{\xv}_\S}$, when evaluated on the reference manifold $\RM_\S$, is an orthogonal projection map on $\RM_\S$.  Moreover, it is non-degenerate, i.e., $rank(\Jm_{\hat{\xv}_\S}(\xv))=N-M$ $\forall \xv\in\Omega$. 
\item[iii.] The pair $\left(\fv_\S(\xv),\hat{\xv}_\S(\xv;\vv^{ref})\right)$ carries the same information as $\xv$. In particular, $\xv$ can always be recovered from it by reversing the normalization,
i.e., $\xv = \hat{\xv}_\S(\hat{\xv}_\S(\xv;\vv^{ref});\fv_\S(\xv))$, $\xv\in\Omega\setminus\Lambda$. 
\end{itemize}
\end{proposition}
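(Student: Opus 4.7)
The plan is to handle the three properties in turn, leveraging the foliated structure afforded by the Frobenius condition together with the defining relation $\hat{\xv}_\S(\xv;\vv^{ref})=\I(\xv,\S)\cap\RM_\S(\vv^{ref})$.

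For part (i), I would observe that the intersection of an invariance submanifold with the fixed reference manifold depends only on the leaf $\I(\xv,\S)$, and is a single point by Proposition~\ref{prop:all_v_S_are_reachable}. Hence $\hat{\xv}_\S$ is constant along each leaf, in particular along any curve whose velocity lies in the tangent space of that leaf, which by definition is spanned by $\{\nabla f_i(\xv)\}_{i=1}^M$. Differentiating along each such direction yields $\Jm_{\hat{\xv}_\S}(\xv)\nabla f_i(\xv)=\mathbf{0}$, which is exactly condition~\eqref{eq:invariant_mapping_condition}.

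For part (ii), I would first work on the reference manifold itself. Property (ii) in the definition of a normalization guarantees that $\hat{\xv}_\S$ restricted to $\RM_\S(\vv^{ref})$ is the identity; differentiating this identity at $\xv\in\RM_\S(\vv^{ref})$ forces $\Jm_{\hat{\xv}_\S}(\xv)$ to act as the identity on $T_\xv\RM_\S(\vv^{ref})$. Combining this with the vanishing on $\text{span}\{\nabla f_i(\xv)\}$ provided by (i), and noting that $\RM_\S(\vv^{ref})$ is a level set of $\fv_\S$ so that $\text{span}\{\nabla f_i(\xv)\}$ is exactly its normal space, identifies $\Jm_{\hat{\xv}_\S}(\xv)$ with the orthogonal projection onto $T_\xv\RM_\S(\vv^{ref})$. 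For the rank claim at a general point, I would use that $\hat{\xv}_\S$ restricts to a diffeomorphism between any compatible level set $\RM_\S(\vv)$ and $\RM_\S(\vv^{ref})$: each point of the first determines, via its leaf, a unique point of the second and vice versa. Hence the differential maps $T_\xv\RM_\S(\fv_\S(\xv))$ bijectively onto $T_{\hat{\xv}_\S(\xv)}\RM_\S(\vv^{ref})$, both of dimension $N-M$, which, together with the $M$-dimensional kernel $\text{span}\{\nabla f_i\}$ and the fact that the image lies in the $(N-M)$-dimensional manifold $\RM_\S(\vv^{ref})$, pins the rank to $N-M$ at every point.

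Finally, for part (iii), the recovery is almost a direct consequence of the construction. The point $\hat{\xv}_\S(\xv;\vv^{ref})$ lies on the same leaf $\I(\xv,\S)$ as $\xv$ by definition; applying Proposition~\ref{prop:all_v_S_are_reachable} once more with reference values set to $\fv_\S(\xv)$ identifies $\xv$ as the unique point of that leaf whose features attain these values. Thus $\hat{\xv}_\S(\hat{\xv}_\S(\xv;\vv^{ref});\fv_\S(\xv))=\xv$, as required, showing that the pair $(\fv_\S(\xv),\hat{\xv}_\S(\xv;\vv^{ref}))$ carries the same information as $\xv$. The main obstacle I anticipate is the general-point rank claim in (ii): the orthogonal-projection statement on $\RM_\S$ itself is essentially forced by idempotence, but extending to the full domain $\Omega\setminus\Lambda$ requires invoking the local flow-box coordinates produced by the Frobenius integrability so as to justify that $\hat{\xv}_\S$ is a smooth fibration over the reference manifold, with fibres exactly the invariance submanifolds.
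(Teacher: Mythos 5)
Your proof is correct, and its overall architecture coincides with the paper's: constancy of $\hat{\xv}_\S$ along each leaf gives (i), identification of the Jacobian on $\RM_\S$ gives (ii), and uniqueness of the leaf--reference-manifold intersection (Proposition~\ref{prop:all_v_S_are_reachable}) gives (iii), which the paper dispatches in a single line as ``a consequence of $\Omega\setminus\Lambda$ being foliated by invariance submanifolds''---your spelled-out recovery argument is exactly what that line compresses. Where you genuinely diverge is part (ii). The paper's key step is to differentiate the idempotence relation \eqref{idempotent} via the chain rule, obtaining $\Jm_{\hat\xv_\S}^2(\xv)=\Jm_{\hat\xv_\S}(\xv)$ for $\xv\in\RM_\S$, and then to combine this with the kernel condition from (i); you bypass idempotence entirely and differentiate the fixed-point property of a normalization ($\hat\xv_\S=\mathrm{id}$ on $\RM_\S$) along curves inside $\RM_\S$, getting the identity on $T_\xv\RM_\S$ and zero on $\mathrm{span}\{\nabla f_i(\xv)\}$ directly---an equivalent but arguably more transparent identification of the orthogonal projection. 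More substantively, for the rank claim at a general point of $\Omega$ the paper merely invokes rank--nullity, tacitly assuming the kernel of $\Jm_{\hat\xv_\S}$ is \emph{exactly} $M$-dimensional everywhere, which it never verifies; your leafwise-correspondence argument, showing $\hat\xv_\S$ restricts to a diffeomorphism from $\RM_\S(\fv_\S(\xv))$ onto $\RM_\S(\vv^{ref})$ so that the differential is injective on level-set tangent directions, supplies precisely the justification the paper leaves implicit---at the cost, which you correctly flag, of needing the flow-box/Frobenius coordinates to guarantee smoothness of that fibration. In short, your route buys a complete proof of the rank statement where the paper buys brevity.
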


\begin{proof}
The fact that $\hat{\xv}_\S$ is an invariant mapping is obvious from the construction. Indeed, by definition of Jacobian matrix,
$$\Jm_{\hat{\xv}_\S}(\xv)\nabla f_i(\xv)=\left.\frac{d}{dt}\right|_{t=0} \hat{\xv}_\S({\alpha}_i(t))$$
for a curve $\alpha_i(t)$ satisfying $\alpha_i(0)=\xv$ and $\alpha_i'(0)=\nabla f_i (\xv)$. Since this curve can be taken fully contained inside the invariant submanifold $\I(\xv,\S)$ (following, for instance, the gradient flow of $f_i$), then $\hat{\xv}_\S(\alpha_i(t))$  is a constant function in $t$ and thus, its derivative vanishes. 
 
For the second statement note first that, by applying the chain rule in the condition of Eq. \eqref{idempotent} it immediately yields that
\begin{equation*}
\Jm_{\hat \xv_\S}(\hat\xv_\S(\xv)) \Jm_{\hat \xv_\S}(\xv)  =
\Jm_{\hat \xv_\S}(\xv),\,\,\forall\xv\in\Omega.
\label{idempotent2}
\end{equation*}
Now, recall that for $\xv$ in the reference manifold $\RM_\S$ we have $\hat \xv_\S(\xv)=\xv$, so the previous equation reduces to
$$(\Jm_{\hat \xv_\S})^2(\xv)=\Jm_{\hat \xv_\S}(\xv)\quad\text{for all }\xv\in\RM_\S.$$
Moreover, since $\hat \xv_\S$ is an invariant mapping, the rows of $\Jm_{\hat\xv_\S}$ are orthogonal to gradients $\{\nabla f_j, j = 1\dots M\}$, and then the projection is on the space orthogonal to the linear span of the previous gradients. That is, on the local tangent space in $\RM_\S$.

In addition, the non-degeneracy of the Jacobian follows from a classical result in linear algebra that states that the dimension $N$ is the  sum of the dimension of the kernel ($M$ in our case) plus the rank of the matrix.

The last statement is a consequence of $\Omega\setminus\Lambda$ being foliated by invariance submanifolds.
\end{proof}

\section{The Nested Normalization Method}
\label{sec:NeNs}

So far we have proposed a method for decoupling a new feature with respect to a given set of features. Here we apply the results of the previous analysis in a particular hierarchical fashion, and study how and under which conditions we can obtain a set of mutually decoupled features. 

\subsection{Analysis}\label{subsec:Analysis}

Let us consider a set $\S$ of $M$ ordered non-trivially coupled global features $\{f_i(\xv), i = 1..M\}$. We propose here a sequential algorithm that, starting by taking the first original feature $\g_1 = f_1$ unchanged, aggregates at each step $k$ a new feature $\hat f_{k+1}$, as shown in Algorithm~\ref{alg:hier_approach}:
\begin{algorithm}
\begin{algorithmic}[1]
\REQUIRE Coupled features
$\{f_j, \,\,j = 1,\dots, M \}$
\STATE {\bf Initialization:} $\g_1=f_1$
\FOR {$k=1$ to $M-1$}
    \STATE $\g_{k+1} \leftarrow \text{decouple}(f_{k+1},\{\g_i, i = 1\dots k\})$   \label{alg_step:multifeature_decouple}
\ENDFOR
\RETURN Decoupled features $\{\g_j, j = 1,\dots, M \}$
\end{algorithmic}
\caption{NeN: A hierarchical decoupling approach.}
\label{alg:hier_approach}
\end{algorithm}

Our particular strategy involves constructing suitable normalization maps following this sequential aggregation scheme. For this, we  use hierarchically nested reference manifolds:
\[
\RM_{M-1} \subset \dots \subset \RM_1 \subset \RM_0 = \Omega\subset \R^N,
\]
where, in the notation of Section \ref{subsection:normalization-definition}, $\RM_k = \hat \fv_{k}^{-1}(\vv^{ref})$, being $\hat \fv_{k}$  a map made from the ordered set of features $\hat \S_k=\{\hat f_1,\dots,\hat f_k\}$, and a corresponding set of reference values $\vv_k^{ref}$.   At each step $k$, we obtain a normalization map $\hat \xv_k$ with respect to $\hat\S_k$ and this is, precisely, what defines and gives its name to the \emph{Nested Normalization} (NeN) method.

The proposed nested structure has some consequences. First, although each normalization onto $\RM_k$ imposes a new reference value to the  $k$ feature, it  respects the previously normalized values for the features $j = 1\dots k-1$. 
Second, it implies that  
\begin{equation}\label{ref-manifold}\RM_k = \hat\fv_k^{-1}(\vv_k^{ref}) = \fv_k^{-1}(\vv_k^{ref}),
\end{equation}because $\hat \fv_{k}(\xv) = \fv_{k}(\xv)$ when  $\xv\in\RM_{k-1}$.  This property shows that, under these constraints, one can define reference values with respect to $\fv_k$ or $\hat \fv_k$, interchangeably.

We present two variants of the NeN algorithm:  \emph{Broad} and \emph{Narrow} paths, presented in Subsections   \ref{subsection:broad-path} and \ref{subsec:NeNs}, respectively, depending on the choice of the integration path.

\subsubsection{A broad path to normalization}
\label{subsection:broad-path}

This scheme is precisely explained in  Algorithm~\ref{alg:nested_normalizations_broad_path}. At the  $k$-th step in the Algorithm, we start with a set of features $\hat \S_k=\{\hat f_1,\ldots,\hat f_k\}$, constructed inductively. Assume that these satisfy the 
Frobenius condition C2. Then the broad path scheme   yields  a normalization $\hat\xv_k$ with respect to the $\hat \S_k$ by integrating along trajectories inside the invariance submanifold of $\hat S_k$. Note that all trajectories made of linear combinations of the features' gradients imposing the desired normalization values provide the same normalization result, as they belong to the same integral manifold (which tells us that the order of  integration does not change the output normalization).  Therefore, this method provides us with valuable degrees of freedom for choosing convenient integration paths. 

More formally, we look for suitable combinations of coefficients $\alpha_{j,k}$,
such that the initial value problem:
\begin{equation}
    \frac{d\yv_k(t)}{dt} = \sum_{j=1}^{k}{\alpha_{j,k}(t) \nabla \hat f_j(\yv_k(t))},
    \label{eq:ODE}
\end{equation}
with $\yv_k(0) = \xv$,
can be integrated in $\yv_k(t,\vec \alpha_k(t))$.

We can write, taking into account Eq. \eqref{ref-manifold} for the choice of the reference values,
\begin{equation}\begin{split}
t_s & =  \arg_t \left\{\fv(\yv_k(t,\vec{\alpha}_k(t)))=\vv_k^{ref}\right\},  \\
\hat\xv_k(\xv; \vv_k^{ref})  &=  \yv_k(t_s,\vec{\alpha}_k(t_s)),
\label{eq:analytical_normalization}
\end{split}
\end{equation}
with certainty that such a solution exists and is unique in a connected domain, as it only depends on the reference values of the adjusted features, and not on the choice of the $\alpha$ coefficients. In any case, $\alpha_j$ coefficients need to respect two constraints: (i) having all the sign of $(v_j^{ref} -  f_j(\xv))$, in order to go coordinately in the direction of imposing the reference values to the features; and (ii) 
 they should not introduce any additional stationary solutions apart from the already discussed admissible critical points of the features. Under these constraints, each feature can be adjusted in its full range.

\begin{algorithm}
\begin{algorithmic}[1]
\REQUIRE $\{f_j,v_j^{ref}, \,\,j = 1,\dots, M \}$
\STATE {\bf Initialization:}  $\g_1(\xv) = f_1(\xv)$ 
\FOR {$k=1$ to $M-1$}
    \STATE $\fv_k(\xv) = [f_j(\xv)],\, \vv_k^{ref} = [v_j^{ref}],\, j = 1,\dots,k$
	\STATE $\RM_k = \fv_k^{-1}(\vv_k^{ref})$
    \STATE $\hat\S_k\, = \,\{\hat f_j,j=1\dots k\}\,\,(*)\,\,$    
      \STATE Compute (ODEs) \,$\hat{\xv}_k(\xv; \vv_k^{ref}) = \I(\xv,\hat\S_k)\cap\RM_k\,\,(*)\,\,$      
    \STATE $\g_{k+1}(\xv) = f_{k+1}(\hat{\xv}_k(\xv))$   
    \label{decoupled_from_normalized}
\ENDFOR
\RETURN $\{\g_j, j = 1,\dots, M \}$

 (*) Modifications for the broad path relaxation in Section \ref{subsec:vs}: Substitute $\S_k$ by $\hat\S_k$ and $f_j$ by $\g_j$ in Steps 5 and 6, for the relaxed version of the NeN broad path.
\end{algorithmic}
\caption{Nested Normalization, Analysis - broad path.}
\label{alg:nested_normalizations_broad_path}
\end{algorithm}

The proposed Algorithm \ref{alg:hier_approach} (and its particular realization Algorithm \ref{alg:nested_normalizations_broad_path}) produces a new set of features $\hat f_1,\dots,\hat f_M$ such that each $\hat f_{k+1}$  is decoupled from the previous ones $\hat f_1,\ldots,\hat f_k$. Unfortunately,  it has several drawbacks that often make its implementation  difficult in practice.

A first obstacle in the method is the fact that the ODEs involved in computing Step 6 of Algorithm \ref{alg:nested_normalizations_broad_path} are typically difficult to solve analytically. In fact, lacking an analytical solution for the normalization at the $k$-th iteration translates into not being able to obtain the expression of the decoupled feature for the $k + 1$ iteration (Step 7), and beyond.

However, a more significant drawback of this scheme is that a new  decoupled feature is defined upon the previous ones. As a consequence, the loop stops after a single decoupled feature no longer fulfills the requirements, meaning that the next ``decoupled features” in the loop simply do not exist. In particular, Frobenius condition C2 is rather stringent, besides being usually hard to verify since the gradients of the new features $\hat f_1,\ldots,\hat f_M$ will tend to have very convoluted mathematical expressions. 

In next subsection we develop a second version of the algorithm (\emph{narrow path}) that provides: i) an alternative method for feature decoupling that does not require analytical calculations; and, ii) a way to demonstrate that it is possible to relax the normalization at each step $k$ (also in the broad path algorithm), by making it w.r.t. to the original feature set $\S_k$, instead of w.r.t. the decouple features set, $\hat \S_k$.  
In Subsection \ref{subsec:vs} below we will discuss when both approaches are equivalent.

\subsubsection{The narrow path algorithm}
\label{subsec:NeNs}

Here we propose Algorithm  \ref{alg:nested_normalizations_narrow_path} to construct a normalization $\hat \xv_k$ with respect to the features in $\S_k=\{f_1,\ldots,f_k\}$.  
Such normalization is obtained, at each step $k$,  by moving along the gradient of each feature, projected over the previous reference manifold. Thus our normalization $\hat \xv_k$ is made of a sequence of 1-D integral manifolds, whose only requirement is being each free from critical points (conditions B1-B2, see Subsection~\ref{subsection:gradient-systems}). 

Assuming that Frobenius condition C2 holds for the gradients of $\{f_1,\dots,f_k\}$, then the normalization map $\hat \xv_k(\xv)$ we obtain from the Algorithm~\ref{alg:nested_normalizations_narrow_path} is an invariant mapping with respect to the features in  $\S_k$. This fact follows from  Proposition \ref{prop:normalization-invariant}, since we have just provided an admissible integration path inside the invariance submanifold $\I(\xv,\S_k)$ to reach  $\hat \xv_k(\xv)$ as defined in  \eqref{def-maping}. Another consequence of this Proposition (statement ii.) is that, for $\xv\in\mathcal R_{k}$, 
\begin{equation}\label{eq:projection}
\nabla \hat f_{k+1}(\xv)=P_{\mathcal R_{k}} (\nabla f_{k+1}(\xv)),
\end{equation}
where we have denoted by $P_{\mathcal R_{k}} $ the (orthogonal) projection map on the reference manifold. This in particular implies that $\nabla \hat f_{k+1}$ is orthogonal to the linear span of the gradients  $\nabla f_1,\dots,\nabla f_k$.

If, on the contrary, Frobenius condition does not hold for the original gradients, we can still follow the direction of those projected gradients (termed $\gv_k$ in Algorithm ~\ref{alg:nested_normalizations_narrow_path}) until we reach the desired feature values. This will not  give a new orthogonal gradient $\nabla \hat f_{k+1}$. However, the algorithm still yields interesting results since it produces approximate decoupling and improves performance in different applications (an example is shown in Section~\ref{sec:applications}).

\begin{algorithm}[t]
\begin{algorithmic}[1]
\REQUIRE
$\xv \in \Omega\setminus\Lambda$, \,
$\{f_j, v_j^{ref}, \,\,j = 1,\dots, M \}$
\STATE {\bf Initialization:} $\g_1 = f_1$; ${\cal R}_0 = \Omega\setminus\Lambda$; $\hat \xv_0(\xv) = \xv$
\FOR {$k=1$ to $k=M-1$}
    \STATE Compute $\gv_k = P_{\RM_{k-1}}(\nabla f_k)$ \label{step:project_gradient}
	\STATE $\yv_k(0) = \hat \xv_{k-1}(\xv)$
	\STATE Follow $\gv_k$ until $f_k(\yv_k(t)) = v_k^{ref}$
	\STATE $\hat \xv_k(\xv; \vv_k^{ref})=\yv_k(t)$ \label{step:reference_value_reached}
	\STATE $\g_{k+1}(\xv)=f_{k+1}({\hat \xv_k})$ \label{step:decoupled_from_normalized}
\ENDFOR
\RETURN $\{\g_j(\xv), j = 1,\dots, M \}$, $\hat x_{M-1}(\xv)$
\end{algorithmic}
\caption{Nested Normalization - narrow path.}
\label{alg:nested_normalizations_narrow_path}
\end{algorithm}

Figure~\ref{fig:NeNs_2levels} illustrates the NeN algorithm in its narrow-path version, for three dimensional vectors, defining two nested normalization levels. 

\begin{figure*}[t]
\begin{center}
\hspace{-2.5mm}
\includegraphics[width=0.65\textwidth]{./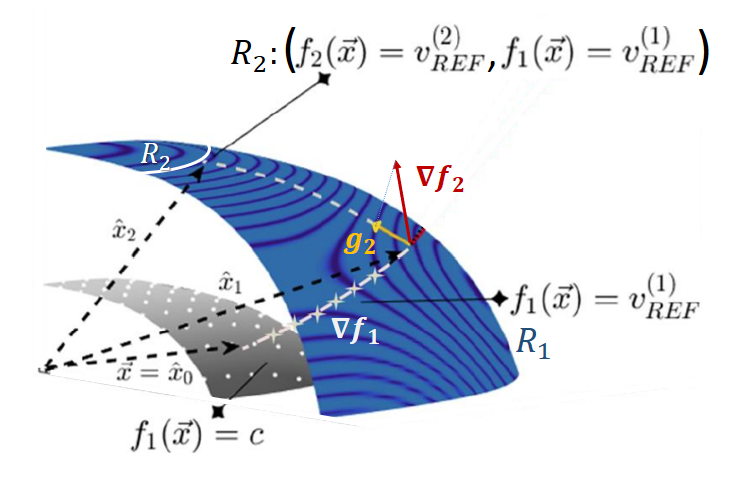} 
\end{center}
\vspace{-4mm}
\caption{Illustration of the NeN algorithm, in its narrow path version. The original vector $\xv$ is first normalized w.r.t. $f_1$, that is, it is modified along the gradient of $f_1$ until reaching $\RM_1$, the reference manifold with all its vectors having $f_1(\xv) = v_{REF}^{(1)}$. That vector is $\hat\xv_1(\xv)$, and there we can evaluate $\hat f_2(\xv) = f_2(\hat\xv_1(\xv))$. From there we follow the projection of the gradient of $f_2$ onto the local hyperplane tangent to $\RM_1$ until reaching $\RM_2$, the set of vectors having $(f_1(\xv) = v_{REF}^{(1)}, f_2(\xv) = v_{REF}^{(2)})$. 
That vector is $\hat\xv_2(\xv)$, the normalization of $\xv$ w.r.t. both $f_1$ and $f_2$. There we can evaluate $\hat f_3(\xv) = f_3(\hat\xv_2(\xv))$ (and so on).}
\label{fig:NeNs_2levels}
\end{figure*}

\subsubsection{From a narrow path to a broad relaxation}
\label{subsec:vs}

As we have seen in the previous subsection, the narrow path yields a  normalization by moving along the gradients of the original features, whereas in the broad path we use the gradients of the modified features. These approaches are equivalent if  Frobenius condition C2 holds on the gradients of the modified features. In fact that, under  this condition, given ${\cal S}_k = \{f_j, j = 1\dots k\}$ and  $\hat{\cal S}_k = \{\hat f_j, j = 1\dots k\}$ (the latter obtained with the Narrow Path Algorithm \ref{alg:nested_normalizations_narrow_path}), then 
\begin{equation}\label{S=hatS}
\hat \xv_{{\cal{S}}_k}(\xv) = \hat \xv_{ \hat{\cal{S}}_k}  (\xv).
\end{equation}
We see that, in this favorable setting, the obtained features in ${\hat \S}_k$ are mutually decoupled. Moreover, since our inductive scheme produces a $\hat f_{k+1}$ that is decoupled from the previous ones, then the features in $\hat{S}_{k+1}$ will also be mutually decoupled.

The proof of \eqref{S=hatS} is a consequence of  our construction, since in Algorithms~\ref{alg:nested_normalizations_broad_path} and \ref{alg:nested_normalizations_narrow_path} the reference manifolds are the same thanks to Eq. \eqref{ref-manifold}. In addition, we recall  Eq. \eqref{eq:projection} that compares the gradients of the original and the modified features when being on the reference manifolds. Thus, the solution constructed by the narrow path algorithm is both a valid concatenation of 1-D integration paths for the decoupled gradients (as we are following them), and for the original gradients, as projected gradients are linear combinations of original gradients.

As a corollary, given that the normalization result (and, thus, also the resulting set of decoupled features) is unique, if it exists, for a given ordered set $\S$ and their corresponding reference values $\vv^{ref}$, then in the broad path Algorithm \ref{alg:nested_normalizations_broad_path} we can simply  substitute $\hat \S_k$ by $\S_k$ in its Steps 5 and 6. By doing that we make it totally equivalent to the narrow path algorithm. 
We call this change a {\em relaxation} of the broad path algorithm, and refer to this modified version as its {\em relaxed version}.
Besides being much easier to implement, the broad path in its relaxed version (like the narrow path and unlike the broad path in its original version), still provides useful results (but not strictly decoupled) when the Frobenius condition does not hold on the output features, as discussed below. 

More generally, our previous discussion yields decoupling in the case when 
a subset of the output features obtained using Algorithm 3 (or equivalent) have gradients fulfilling the Frobenius condition:  
\begin{proposition}
\label{prop:partial_frobenius}
If the set ${\cal S}_k = \{f_j, j = 1\dots k\}$ and the subset $\hat{\cal S}_k = \{\hat f_{j}, j = 1\dots k\}$, $k\leq M$ (the latter obtained from $\hat{\cal S}$ with Algorithm \ref{alg:nested_normalizations_narrow_path} or equivalent, for a given set of reference values $\vv^{ref}$) have gradients fulfilling Frobenius in ${\cal S}_k$ and $\hat {\cal S}_k$, respectively, 
then all pairs $(\hat f_{i},\hat f_j)$, $i=1\dots k$, $j=1\dots M$, $i\neq j$, are mutually decoupled in the whole domain $\Omega$. 
\end{proposition}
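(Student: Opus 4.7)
The plan is to split the claim according to where the indices fall: case (a) both $i,j\le k$, and case (b) $i\le k<j\le M$. In both cases, the key tool is Proposition~\ref{prop:decoupling}, which converts a statement about invariance of a mapping into orthogonality of the corresponding feature gradients. So my central task is to identify, for each pair, an invariant mapping that exhibits the required orthogonality.

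For case (a), I would invoke Equation~\eqref{S=hatS} from Subsection~\ref{subsec:vs} directly: since Frobenius holds for both $\mathcal{S}_k$ and $\hat{\mathcal{S}}_k$, the narrow-path map $\hat{\xv}_k$ coincides with the broad-path normalization taken with respect to $\hat{\mathcal{S}}_k$, and the latter is invariant with respect to $\hat{\mathcal{S}}_k$ by construction (Proposition~\ref{prop:normalization-invariant}, part i). That means $\Jm_{\hat{\xv}_{j-1}}(\xv)\,\nabla\hat f_i(\xv)=0$ for all $i<j\le k$. Feeding this into Proposition~\ref{prop:decoupling} with $g=f_j$ and $\yv_{\hat{\mathcal{S}}_{j-1}}=\hat{\xv}_{j-1}$, and recalling that $\hat f_j=f_j\circ\hat{\xv}_{j-1}$, yields $\nabla\hat f_i\cdot\nabla\hat f_j=0$ on $\Omega\setminus\Lambda$.

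For case (b), with $i\le k<j$, the narrow-path algorithm has a clean iterative structure: the computation of $\hat{\xv}_{j-1}(\xv)$ factors as $\hat{\xv}_{j-1}(\xv)=\psi\bigl(\hat{\xv}_k(\xv)\bigr)$, where $\psi$ encapsulates the remaining narrow-path steps that successively adjust $f_{k+1},\dots,f_{j-1}$ starting from a point already on $\RM_k$. Applying the chain rule,
\begin{equation*}
\Jm_{\hat{\xv}_{j-1}}(\xv)\,\nabla\hat f_i(\xv)=\Jm_{\psi}\bigl(\hat{\xv}_k(\xv)\bigr)\,\Jm_{\hat{\xv}_k}(\xv)\,\nabla\hat f_i(\xv)=0,
\end{equation*}
since the inner factor vanishes by case (a). Thus $\hat{\xv}_{j-1}$ is invariant with respect to each $\hat f_i$, $i\le k$, even though Frobenius is not assumed on the larger set $\hat{\mathcal{S}}_{j-1}$. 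One more application of Proposition~\ref{prop:decoupling} finishes the argument. Extending the equality from $\Omega\setminus\Lambda$ to all of $\Omega$ follows from the lower-dimensionality of $\Lambda$ together with continuity of the gradients.

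The main obstacle is the factorization $\hat{\xv}_{j-1}=\psi\circ\hat{\xv}_k$ in case (b): conceptually this is immediate from the iterative nature of Algorithm~\ref{alg:nested_normalizations_narrow_path}, but one has to be careful, since for $j-1>k$ we do not have an invariant mapping with respect to the full set $\hat{\mathcal{S}}_{j-1}$, and the continuation map $\psi$ need not enjoy any nice invariance of its own. What saves the argument is that we only need invariance against the specific directions $\nabla\hat f_i$ with $i\le k$, and these are already annihilated by the first factor $\Jm_{\hat{\xv}_k}$. Thus the proof decouples the ``good" Frobenius-compliant block from the potentially ``bad" tail, rather than trying to extend the Frobenius hypothesis beyond what is assumed.
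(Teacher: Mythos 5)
Your overall route is the paper's own. Proposition~\ref{prop:partial_frobenius} is not given a formal proof environment in the paper; its justification is the discussion in Subsection~\ref{subsec:vs}, and that is exactly what you reconstruct: Eq.~\eqref{S=hatS} (itself resting on \eqref{ref-manifold} and \eqref{eq:projection}) turns the narrow-path map $\hat\xv_k$ into the normalization with respect to $\hat\S_k$, hence an invariant mapping by Proposition~\ref{prop:normalization-invariant}(i), and Proposition~\ref{prop:decoupling} converts invariance into gradient orthogonality. Your case (b) is the part the paper leaves implicit (``our previous discussion yields\dots''), and the factorization $\hat\xv_{j-1}=\psi\circ\hat\xv_k$, together with the observation that only the inner Jacobian factor needs to annihilate $\nabla\hat f_i$, is the right way to make it precise; it also explains correctly why the pairs with $i\le k<j$ require no Frobenius hypothesis beyond level $k$.

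There is, however, one genuine leap, in your case (a): the sentence ``That means $\Jm_{\hat\xv_{j-1}}(\xv)\,\nabla\hat f_i(\xv)=0$ for all $i<j\le k$'' does not follow from what precedes it. What you derived is invariance of $\hat\xv_k$, i.e. $\Jm_{\hat\xv_k}\nabla\hat f_i=0$; this says nothing about the intermediate maps $\hat\xv_{j-1}$ with $j-1<k$, which are the ones entering $\hat f_j=f_j\circ\hat\xv_{j-1}$ for a within-block pair. To obtain their invariance you need the level-$(j-1)$ analogue of \eqref{S=hatS}, i.e. the Frobenius condition for the truncated sets $\S_{j-1}$ and $\hat\S_{j-1}$ --- and involutivity is not inherited by subsets: for two fields taken from the sub-family, the hypothesis on $\hat\S_k$ only places their bracket in the span of all $k$ gradients, not in the span of the first $j-1$. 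The repair is to read the hypothesis the way the paper's inductive phrasing effectively does (``in this favorable setting, the obtained features in $\hat\S_k$ are mutually decoupled\dots our inductive scheme produces\dots''): the favorable setting holds at each level $m\le k$, so each $\hat\xv_m$ produced by Algorithm~\ref{alg:nested_normalizations_narrow_path} is invariant with respect to $\hat\S_m$, and within-block decoupling follows by induction on $m$; your case (b) then handles $j>k$ unchanged. With that made explicit your argument coincides with the paper's intended one --- note that the paper itself glosses over the same point, so your proposal is faithful to it, but a self-contained proof must state the intermediate-level hypothesis (or derive it) rather than deduce it from Frobenius at level $k$ alone.
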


 Note that Frobenius condition is vacuous if $k=1$ and consequently, $(\hat f_1,\hat f_j)$, for $j=2\dots M$, are decoupled unconditionally.

If Frobenius condition holds for the gradients of the original features, but not for those of the transformed features, their corresponding gradients will still be orthogonal on their corresponding reference manifolds, i.e.,
$\nabla \g_i(\xv) \cdot \nabla \g_j(\xv) = 0,i\neq j,$ for all $\xv \in \RM_{k-1}$, where $k = \max\{i,j\}$. The proof of this fact follows from the nested structure. Indeed, assume without loss of generality that $i<j$. Then, in $\RM_{j-1}$, $\nabla \hat f_j$ is the orthogonal projection over $\RM_ {j-1}\subset \RM_{i-1}$  of the original gradient, while $\nabla \hat f_i$ is orthogonal to $\RM_{i-1}$  by Eq.~\eqref{ref-manifold}. The interest of this comes from the fact that many times, even if  fully decoupling is not possible, one can still have mutual decoupling over high dimensional manifolds in $\Omega$. 
In such situations, and as a practical consequence, when dealing with probability distributions it is convenient to choose reference values that are the expected values of the density function. This favors obtaining gradients close to mutually orthogonal (see Figs.~\ref{fig:figAE_MSM} and \ref{fig:figAE_VF}, panels (a) and (d), in Subsection~\ref{subsec:gradients_orthogonality}), as samples will be close to the reference manifolds, where exact orthogonality holds. 

Comparing narrow and broad path versions of the NeN algorithm, the principal advantage of the broad path (especially in its much more convenient relaxed form) is that it provides closed-form solutions in some favorable cases. 
This usually translates on its solutions being easier to analyze and faster to compute. On the other hand, narrow path is simpler and more systematic at the implementation level, since it only requires explicit functions for the original features' gradients, and it just relies on numerical integration along 1-D trajectories.

\subsubsection{Normalization of homogeneous features}

A key characteristic of the NeN algorithm as we have presented it so far, is to establish a sequential order among the features to be decoupled. 
While there are cases, such that of marginal moments, for which it is natural to establish a hierarchical order for normalizing the features, there exist other situations where this does not apply.
An example is, given a bank of scaled and rotated filters, to obtain features by applying some functions to the filters' outputs. In this case there is no reason to establish a hierarchy among the features coming from the filters at the same scale, just rotated in different angles. For these situations, a combined graph for the extracted features, having both sequential and parallel nodes seems much more appropriate than a purely sequential scheme.

Fortunately, the tools we have presented so far can be readily applied for (i) ``simultaneously'' 
 (the sequential order chosen for the integration does not affect the result) changing an input sample by forcing a set of $P_k$ features to have their reference values (i.e., normalizing the vector w.r.t. that set of features), and (ii) obtaining new features that are functions of the normalized vector (applying Equation~\eqref{normalize-g}, and  Section~\ref{subsec:normalizaton_Frobenius}), which we know that will be decoupled to those $P_k$ parallel features.
Therefore, we can easily adapt our algorithms just by changing some of the sequentially adjusted features $f_k$ by $P_k$-dimensional features $\overline f_p^{(k)}$ (not to be confused with the aggregated maps $\fv_k$ from $j=1$ to $k$), without changing the underlying logic.
It must be noted, though, that this procedure does not {\em mutually} decouple these parallel features~\footnote{Mutual decoupling can be obtained by imposing a sequential order to these homogeneous features. Fully parallel (symmetrical) decoupling is a hard problem requiring different techniques from the ones presented here.}.

Algorithm~\ref{alg:normalizations_parallel} shows our strategy to ``simultaneously'' impose a set of reference values to a set of homogeneous features, i.e., features having all the same functional expression, but different parameters $\{\vec\lambda_j, j = 1\dots P_k\}$.
The underlying idea is simple: to express analytically the solution of a single sequential ODE integration, one ODE excursion for each of the homogeneous features, and then obtain an analytical expression concatenating all these excursions. The time values $t_j$ are left as variables, that are computed numerically in order to fulfill the normalization (or de-normalization).
Note the difference with Eqs.~\eqref{eq:analytical_normalization}, where the normalization was obtained by solving a single non-linear equation at a time (scalar $t$, instead of a vector $\tv$, like now). However, note as well that the solution of Algorithm~\ref{alg:normalizations_parallel} can also be expressed in terms of those equations, by choosing $\{\alpha_j(t)\}$'s that activate sequentially single-gradient combinations, at times $\{t_j\}$.
Finally, it must be also noted that a variant of the previous algorithm can be used as well for the case of having homogeneous features not in parallel, but hierarchically ordered (e.g., second-order moment at the output of a bank of filters). In that case we can apply Algorithm~\ref{alg:normalizations_parallel} to hierarchically normalize nested subsets of features, as a particular procedure for computing Step 6 in  Algorithm~\ref{alg:nested_normalizations_broad_path}.
\begin{algorithm}
\begin{algorithmic}[1]
\REQUIRE
$\xv_0 \in \Omega\setminus\Lambda, \,$
${\cal \tilde S}_k = \{f_{j,k}(\xv) = \tilde f_k(\xv;\vec\lambda_j)\}$, 
$\{v_{j,k}^{ref}\}, \,\,j = 1,\dots, P_k$
\STATE Solve for generic ODE  $\yv_k(t;\vec\lambda,\xv)$ (analytically)
\STATE {\bf Initialization:} $\xv_{0,1} = \xv$
\FOR {$j=1$ to $P_k-1$}
    \STATE $\xv_{0,j+1} = \yv_k(t_j;\vec\lambda_j,\xv_{0,j})$
\ENDFOR
\STATE $\tv = [t_1,\dots,t_{P_k}]$, ${\cal L} = \{\vec\lambda_1,\dots,\vec\lambda_{P_k}\}$ 
\STATE Solve for $\tilde\yv_k(\tv;{\cal L},\xv) = \xv_{0,P_k}$ (analytically)
\STATE Solve for $\hat \tv = \arg_{\tv} \{f_{j,k}(\tilde\yv_k(\tv;{\cal L},\xv_0))  =  v_{j,k}^{ref}\}_{j=1}^{P_k}$
\RETURN $\hat\xv_{{\cal \tilde S}_k}(\xv_0) = \tilde\yv_k(\hat\tv;{\cal L},\xv_0)$
\end{algorithmic}
\caption{Normalization of homogeneous features}
\label{alg:normalizations_parallel}
\end{algorithm}
In Subsection~\ref{subsec:VF_study_case} we study the different alternatives to approximately decouple the second-order moments at the output of a set of hierarchically ordered filters, among which Algorithm~\ref{alg:normalizations_parallel} provides the most systematic approach still providing (partially) analytical solutions. 

\subsection{Feature Transfer}
An essential characteristic of the integration along the direction of one or several gradients is its reversibility. First, the adjustment of a set of feature values, under the given constraints and assumptions, is always possible for every $\xv\in\Omega\setminus\Lambda$,  whenever the set of desired values are algebraically compatible (see Proposition~\ref{prop:all_v_S_are_reachable}). It is also true, in particular, that we can {\em de-normalize} a normalized vector (whenever the reference manifold itself is also contained in $\Omega\setminus\Lambda$), not just for recovering the original vector (as pointed out in the property (iii) from Proposition \ref{prop:normalization-invariant}), but also for imposing whatever new feature values we may aim for.
Thus, the good properties of the NeN analysis methodology presented so far allow us to change the role of vector transformation (by integrating the gradient flows) from instrumental to the main goal, and, as such, changing the focus from analysis to feature transfer or even synthesis.

However, before going into how to do that, it is important to realize that a set of mutually decoupled features have their joint range decoupled, as shown next. Given two features $f_i$ and $f_j$ it is immediate to assess that the decoupling condition of Eq.~\eqref{eq:dec_fea} implies that a local change in $\xv$ along the gradient of one of the features does not affect the value of the other feature. More precisely, let us assume $\hat{\S}$ is a set of decoupled features defined in a set $\hat \Omega\setminus\hat \Lambda$. Since we can modify each feature within its whole range by navigating along its one-dimensional flow, independently of the values of the other features, we obtain as a corollary that
\begin{equation}\label{decoupling-range-property}
    \Rg(\hat\fv)=\Rg(\hat{f}_1)\times \ldots\times \Rg(\hat{f}_M)
\end{equation}
in the set $\hat{\Omega}\setminus\hat{\Lambda}$.
 
The problems of finding the largest admissible domain $\hat\Omega$ for the decoupled set of features (which implies knowing the location and subsequently excluding all its critical points, C1 condition), and the set  $\hat\Lambda$ of basins of all their saddles, as well as the range of each decoupled feature, are not trivial, and depend on the particular set of features.
Thus, we leave that analysis for  Section~\ref{sec:study_cases} and 
the Appendix C (\ref{subsec:critical_points_moments} and \ref{subsec:critical_etc_variance}), where the decoupling of two particular sets of features is studied in detail. 

\subsubsection{Peeling the onion and covering it back with new layers}
\label{subsec:transfer_via_denormalization}

Now, the decoupling range property \eqref{decoupling-range-property} 
allows to modify a signal by enforcing arbitrary desired values (each within its valid range) for the decoupled features without iterative corrections, opening up an unprecedented scenario. 

We show below how to achieve this by means of our NeN algorithm. All we need is applying the following 3-step procedure: (i) obtaining a set of desired decoupled features $\vv^{des}$ we want to transfer, either by applying any of the NeN analysis algorithms we have presented so far to some {\em target} vector data $\yv$, or simply by choosing any combination of decoupled features' values within their valid range; (ii) normalizing the {\em source} vector data $\xv$ (the one we want to transform) up to the $M-1$-level; and (iii) de-normalizing the previously normalized data, in reverse order, until achieving for each decoupled feature $\g_k$ the same value $v_k^{des}$ previously measured/chosen in step (i). More formally:
\begin{equation*}
\begin{split}
\text{(i)}&\quad\vv^{des}  =  \hat\fv(\yv)  \\
\text{(ii)}&\quad\hat\wv_{M-1}(\xv)  =  \hat\xv_\S(\xv;\vv_{M-1}^{ref})  \\
\text{(iii)}&\quad\zv_f(\xv;\vv^{des})  =  \check{\xv}_\S(\hat\wv_{M-1}(\xv); \vv^{des}),
\end{split}
\end{equation*}
where we have used the symbol {\Huge $ _{\check{}}$} to indicate de-normalization.
Thanks to the reversibility of the whole process (based on the reversibility of each ODE integration), the resulting transformed data vector $\zv_f$ shares the same exact decoupled features $\vv^{des}$ with the target vector, and the same normalized {\em kernel} $\hat\wv_{M-1}$ with the source vector $\xv$:
\begin{eqnarray*}
\hat\wv_{M-1}(\zv_f;\vv_{M-1}^{ref}) & = & \hat\wv_{M-1}(\xv;\vv_{M-1}^{ref}) \\
\hat\fv(\zv_f) & = & \hat\fv(\yv),
\end{eqnarray*}
Figures~\ref{fig:feature_transfer} and~\ref{fig:denormalization_scheme}  illustrate this process.
\begin{figure*}
    \begin{tabular}{ll}
    \includegraphics[width=0.7\textwidth]{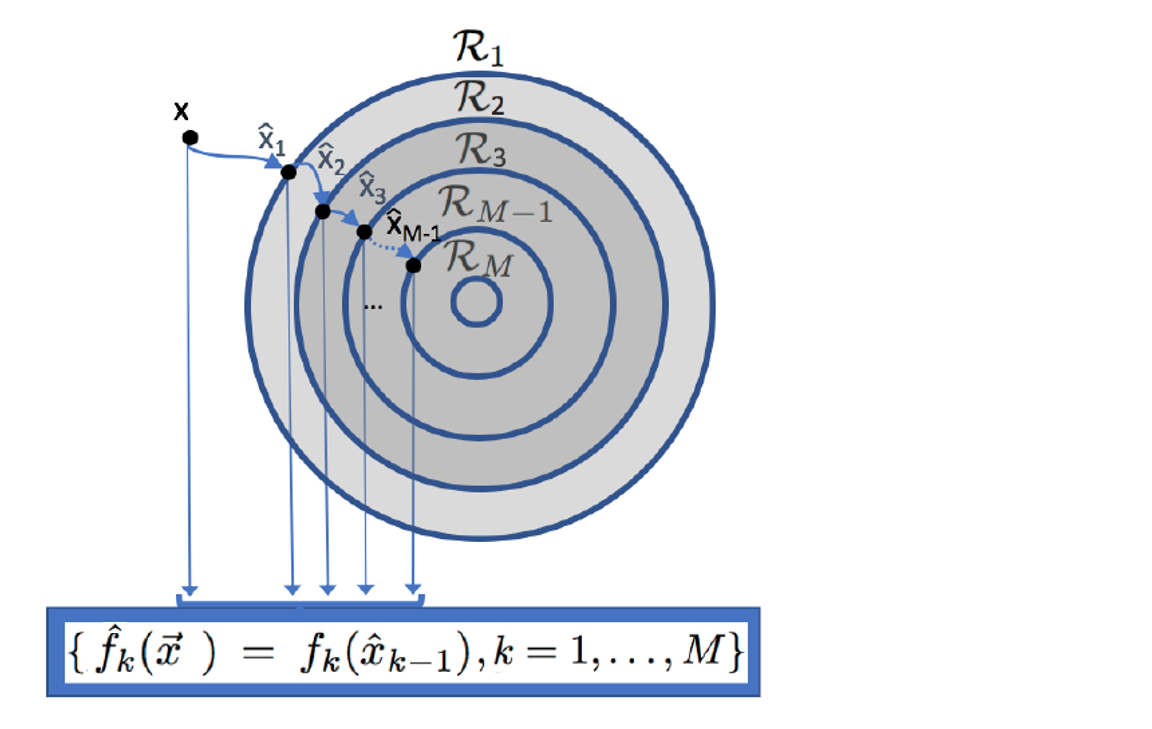} &
    \hspace{-4cm}
    \includegraphics[width=0.7\textwidth]{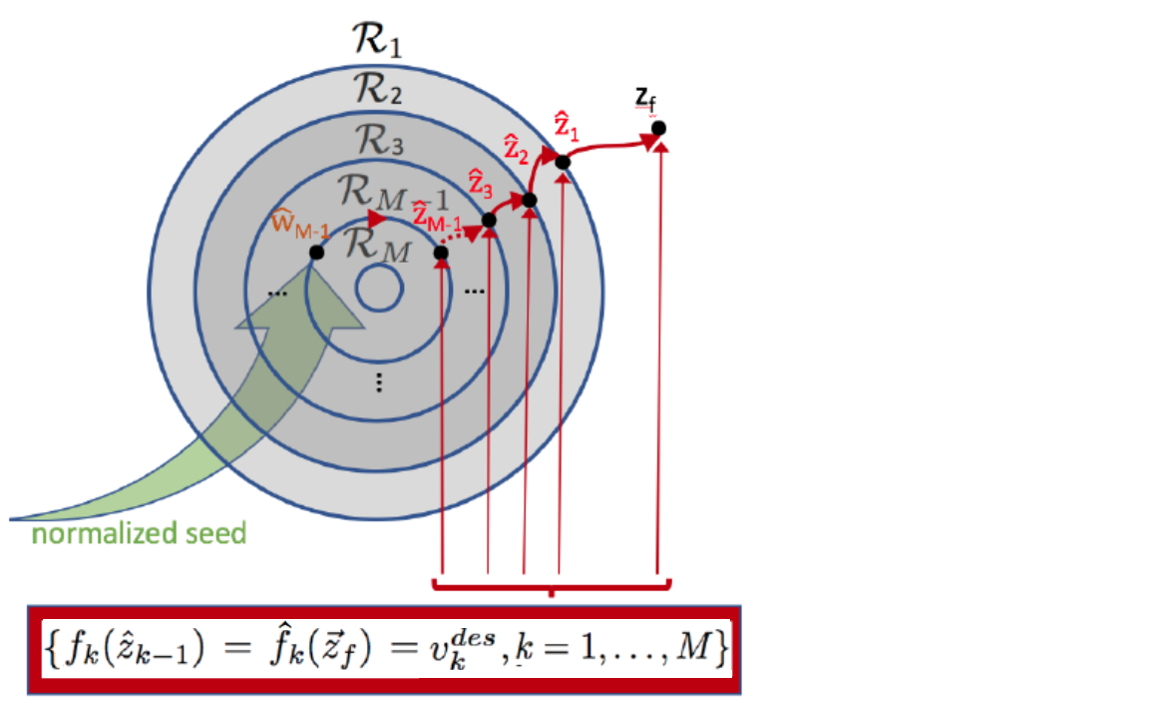}\\
    Analysis via Normalization & Transfer via De-normalization 
    \end{tabular}
    \caption{Feature transfer can be done as a sequence of (i) extracting decoupled features in a target vector data, (ii) normalization of the source data (both (i) and (ii) represented on the left), and (iii) de-normalization of the data normalized in (ii), by imposing the features $\vv^{des}$ measured in (i), in reverse order (on the right). See Algorithm~\ref{alg:NeNsSynthesis}.}
    \label{fig:feature_transfer}
\end{figure*}
\begin{figure*} 
  \centering
  \includegraphics[width=0.7\textwidth]{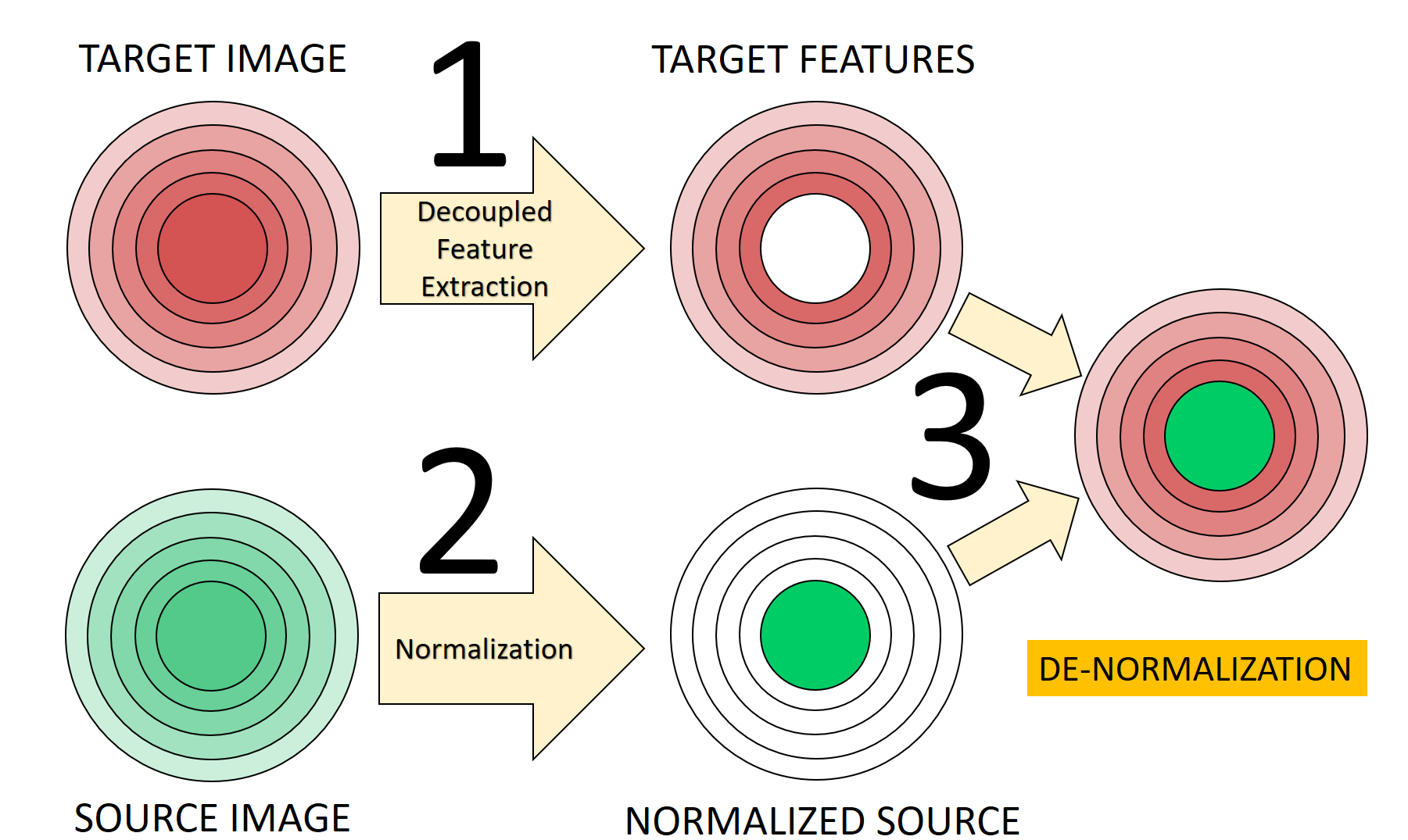}
  \caption{Three-step feature transfer using de-normalization, within the Nested Normalization framework. The left half of the figure represent the normalization+analysis process (first and second steps, in parallel), and the right half of the figure (converging arrows) is the de-normalization+transfer process (third step).}
  \label{fig:denormalization_scheme}
\end{figure*}
In Algorithm~\ref{alg:NeNsSynthesis} 
we describe the de-normalization steps using the (reversed) narrow-path algorithm.

This approach to feature transfer, termed Controlled Feature Adjustment in~\cite{MMSP2020}, and applied to photo-realistic style transfer in~\cite{SIIMS2022}, opens up a new catalogue of transfer and synthesis possibilities, as explained in those references.

\begin{algorithm}
\begin{algorithmic}[1]
\REQUIRE ${\hat \wv}_{M-1}\in \Omega\setminus\Lambda$, $\fv^{ref}\in \Rg(\fv), \fv^{des}\in \Rg(\hat\fv)$
\STATE {\bf Initialization:} $\hat z_M = \hat w_{M-1}$
\FOR {$k=M$ to $1$}
	\STATE $\yv(0) = \hat z_{k}$ 
    \STATE Compute $\gv_k = P_{\RM_{k-1}}(\nabla f_k)$
	\STATE Follow $\gv_k$ until $f_k(\yv_k(t)) = v_k^{des}$
	\STATE $\hat z_{k-1}=\yv(t)$
\ENDFOR
\RETURN $\zv_f = \hat z_0$
\end{algorithmic}
\caption{NeN, De-normalization (narrow path).}
\label{alg:NeNsSynthesis}
\end{algorithm}

\subsection{Critical points and perturbations}

\subsubsection{Critical points}
\label{subsec:critical_points}

In this subsection we look at the conditions (assuming B1, B2, C1, C2 hold) for the critical points of the decoupled features obtained using the NeN algorithm. This will be needed in order to establish the range of the newly constructed features.

The critical points of a feature $\g_k$ are those points $\xv_k^*$ satisfying $\nabla \g_k(\xv_k^*) = 0$. As, when using the Nested Normalization algorithm, we have that  $\nabla \g_k = P_{\RM_k}(\nabla f_k)$ within $\RM_{k-1}$, the critical point condition corresponds to the orthogonality of $\nabla f_k$ to the reference manifold $\RM_{k-1}$. Because, by the definition of reference manifold, the local tangent space at $\xv\in\RM_{k-1}$ is the orthogonal complement of the linear span ${\cal L}(\{\nabla f_j\}_{j=1}^{k-1})$, having a null projection on that local plane implies in this case that $\nabla f_k\in {\cal L}(\{\nabla f_j\}_{j=1}^{k-1})$, i.e., that there exist $ \{\lambda_{j,k}\in \R\}_{j=1}^{k-1}$ not all zero such that:
\begin{equation}
    \nabla f_k(\xv_k^*) = \sum_{j=1}^{k-1} \lambda_{j,k} \nabla f_j(\xv_k^*).
    \label{eq:crit_point_condition_NeNs}
\end{equation}
(Note also that a trivial solution of Eq.~\eqref{eq:crit_point_condition_NeNs} comes from having common critical points of the original features for orders $j<k$, for which both sides of the equation vanish.)  
This is precisely the same condition as C1.

By introducing the (known) structure of the gradients, we obtain a general expression for the critical points (which, by the structure of the NeN method, are not isolated, but entire submanifolds). 
In addition, by using the previous equation plus the constraints derived from $\xv_k^*\in\RM_k$
we obtain the expression of the critical points of $\g_k$ on $\RM_k$
(see the corresponding calculations in the two study cases, in Section~\ref{sec:study_cases}).

Finally, it is important to note that decoupled features at level $j$ {\em are not defined} at critical points of decoupled features at level $i$, for $i<j$. The reason is that iso-level sets of the feature $j$ all cross orthogonally the iso-level sets of the feature $i$, until converging all to a critical point (maximum or minimum, a source or a drain of the gradient field of feature $i$), and therefore the feature $j$ is not defined there. 
This is a direct consequence of the normalization $\hat\xv_i(\xv)$ requiring the existence of a non-null gradient $\nabla f_i(\xv)$ for initiating the ODE trajectory that adjusts the $i$-th feature to its reference value, a previous step for computing $\g_{i+1}(\xv) = f_{i+1}(\hat{\xv}_i(\xv))$. 
Figure~\ref{fig:orthokurtosis} (comparing the orthokurtosis to skewness and kurtosis) may help to understand the involved concepts.

\subsubsection{Introducing perturbations to avoid spurious basins}\label{sec:perturbation}
We have imposed that the domain in which the features in $\S$ are defined, $\Omega$, is free from critical points.
A practical way to choose the common domain $\Omega$ for a set of features $\S$ is to find the largest admissible set, consisting of the intersection of the domains on which each feature is defined, and then to remove from it all critical points.

We have also assumed B1-B2 conditions, i.e., that the basins of attraction of critical points other than the absolute maxima and minima are a set $\Lambda$ of submanifolds with a joint dimension lower than that of $\Omega$. In that case, it is well known that a small perturbation of a point inside these basins will take it back into $\Omega\setminus\Lambda$ with probability one.\footnote{Although it is very easy to fabricate ad-hoc counter-examples, these counter-examples will never happen by chance using continuous perturbation densities in $\R^N$.} 
We should also verify that for {\em almost all} $\xv \in \Omega\setminus\Lambda$, the perturbed $(\xv+\epsilon)$ will remain in $\Omega\setminus\Lambda$\footnote{Idem previous footnote.}.
At which stage should we add a perturbation, then?
The simplest solution, because it does not even require to check if $\xv\in\Lambda$, is to  {\em always} add the perturbation just before performing gradient integration. 

In addition, in the context of a real application, there are two more concerns on the practical impact of the above constraints and assumptions:
(i) usually, real-world signals are quantized, and, thus, they are not in $\R^N$, but in a finite, numerable subset. In particular, within a quantized representation, it is no longer true that the probability of {\em falling} on a lower dimension basin of attraction of a spurious critical point is zero. Actually, quite the opposite, signal quantization will reduce the entropy and favor the symmetry.  
(ii) Whereas being directly on a spurious basin of attraction (of a saddle) will make our method to eventually get stuck, that is not the only problem. Close-distance neighbor points, although outside those basins of attraction, may be affected negatively in terms of the speed of convergence of the differential equations, and should be avoided. Same can be said about the adjustment of features to values too close to their absolute extrema, especially when using numerical integration because of lacking analytical solutions.

\subsubsection{Choosing a suitable perturbation}
\label{subsec:perturbation}
Critical points, as we will see in the study cases (Section~\ref{sec:study_cases}), are typically points presenting low entropy configurations (high symmetry, a few repeated values/patterns, etc.).
The role of the added perturbation is to increase the entropy of the signal in a suitable way, without affecting its relevant (e.g., perceptual) features.
Here is a list of desirable characteristics of a perturbation $\epsilon$ on a digital signal:
\begin{itemize}
    \item it should not cause a direct loss of information, i.e., $\qv(\xv+\epsilon) = \qv(\xv) = \xv$, where here $\qv(\cdot)$ represents the quantization already present in $\xv$,
    \item it should be reproducible,
    \item it should increase the entropy (break the symmetry),
    \item it should not affect the operational conditions (e.g., not noticeable under human observation).
\end{itemize}
Among previous characteristics, the third and fourth ones discourage us from naively using noise-like perturbations (i.i.d. pseudo-random coefficients), because they may produce (i) some samples having very close values (just by chance); and (ii) noticeable (e.g., visual, auditory, etc.) artifacts, because of introducing unnecessarily large differences among neighbors.
In appendix subsections \ref{subsec:perturbation_moments} and \ref{subsec:critical_etc_variance} we propose ad-hoc perturbations for two different decoupling problems. 
In both studied cases the perturbation aims at increasing the entropy of the signal, by increasing the number either of the distinct signal values or of the active frequencies in the Fourier domain. 
As such, these perturbations expand the decoupled feature ranges to their theoretically broadest possible intervals.

\section{Two study cases}
\label{sec:study_cases}

\subsection{Marginal Moments}
\label{subsec:marginal_moments}

We study here the problem of hierarchically decoupling the first $M$ sample moments:
$$\S=\left\{f_j(\xv) = \frac{1}{N} \sum_{n=1}^{N}{x_n^j}\right\}_{j=1}^M.$$
First of all, we check that in this case the Frobenius condition C2 holds (see Appendix~\ref{appendix:Frobenius} for a proof). Therefore, the corresponding set of gradients defines a proper invariance submanifold and we can use the broad path from Subsection~\ref{subsection:broad-path}.
This fact will be crucially exploited for computing in a quasi-explicit way the fourth-order decoupled moment, that we termed the {\em orthokurtosis}.
In Section \ref{sec:applications} we will show how the pair skewness-kurtosis is clearly inferior for several analysis tasks than the couple skewness-orthokurtosis. 

To fix notations, we will use for the normalization the moments of a zero-mean uni-variate Gaussian distribution: 
$$
v_j^{ref}  = 
\begin{cases}
 (2j)!/(2^j j!) & \text{ for}\,j\,\text{even},  \\
  0, & \text{ for}\,j\,\text{odd.}
 \end{cases}
 $$

\subsubsection{Analytic solutions: a path to the {\em orthokurtosis}} 

First, the gradient of the sample mean (the sample mean is both $f_1$ and $\g_1$) is:
\[
\nabla \g_1(\xv) = \frac{1}{N} \v1,
\]
and the normalization comes from solving the ODE
\[
\frac{d\yv_1(t)}{dt} = \frac{1}{N}\v1
\]
starting from $\yv_1(0) = \xv$ and finding  $t_1 = \arg_t \{f_1(\yv_1(t))=0\}$ (recall that $\mathcal R_1=\fv_1^{-1}(\vv_1^{ref})=\{\zv\in\Omega\,:\, f_1(\zv)=0\}$). The  normalization will then be  $\hat\xv_1(\xv)=\yv_1(t_1)$.
In this case the solution is straightforward:
\begin{eqnarray*}
\yv_1(t) & = & \xv + \frac{1}{N} t \nonumber \\
f_1(\yv_1(t)) & = & f_1(\xv) + \frac{1}{N} t \nonumber \\
t_1 & = & -N f_1(\xv)  \nonumber \\
\hat\xv_1(\xv) & = & \xv - f_1(\xv)\v1,
\end{eqnarray*}
which corresponds to the original vector with the sample mean subtracted to every sample.
Now we obtain the next decoupled feature, $\g_2(\xv) = f_2(\hat\xv_1(\xv))$:
\[
\g_2(\xv) = f_2(\xv - f_1(\xv)\v1),
\]
which is a (biased) version  of the classical sample variance.
Now we compute its gradient,
\begin{eqnarray}
\nabla \g_2(\xv) & = & \frac{2}{N} \hat\xv_1(\xv) \nonumber \\
& = & \frac{2}{N} (\xv - f_1(\xv)\v1).
\end{eqnarray}
Because of the irrelevance of the factor $2/N$ for the subsequent calculations, we drop it.
Now we can modify $\g_2$ by moving along its gradient without leaving $\RM_1$, until reaching $\RM_2 = \fv_2^{-1}(\vv_2^{ref}) = \{\zv\in\Omega: f_1(\zv)=0,\,f_2(\zv)=1\}$.
Now $\yv_2(0) = \hat\xv_1(\xv)$,
\[
\frac{d\yv_2(t)}{dt} =  \left(\yv_2(t) - f_1(\yv_2(t))\v1\right).
\]
This ODE simplifies by noting that, when the gradient has zero sample mean, the sample mean can not change when integrating it (the resulting curve belongs to $\RM_1$).
As the initial value has zero mean, $f_1(\yv_2(t))=0\,\,\forall t$,  the ODE simplifies to:
\[
\frac{d\yv_2(t)}{dt} = \yv_2(t),
\]
whose solution $\log(\yv_2(t)) + C = t \v1$ results in
\[
\yv_2(t) = \exp(t) \hat\xv_1(\xv) ,
\]
by enforcing $\yv_2(0) = \hat\xv_1(\xv)$. We see that $f_2(\yv_2(t)) = f_2(\hat\xv_1(\xv))\exp(2t)$ and the $t$ value intersecting with $\RM_2$ is $t_2 = - 1/2 \log(f_2(\xv_1(\xv)))$.
Then
\[
\hat\xv_2(\xv) = \yv_2(t_2) = \hat\xv_1(\xv)/\sqrt{\g_2(\xv)}.
\]
We see that this second normalization is the standardization of $\xv$.
Now we can compute the next decoupled moment $\g_3(\xv) = f_3(\hat\xv_2(\xv))$:
\begin{equation}
\g_3(\xv) = f_3\left(\frac{\xv - f_1(\xv)\v1}{\sqrt{f_2(\xv - f_1(\xv)\v1)}}\right),
\label{eq:skew}
\end{equation}
which is the sample skewness.
Here again, we compute $\nabla \g_3(\xv)$ on $\RM_2$ by differentiating in Eq.~(\ref{eq:skew}). 
We obtain:
\begin{equation}
    \nabla \g_3(\xv) = \frac{3}{N} \left( \hat\xv_2(\xv)^{\odot2} - \g_3(\xv)\hat\xv_2(\xv) -\v1 \right),
\end{equation}
where we use the symbol ``$\odot$'' for representing a pointwise scalar operation for the vector coefficients (here a power).

Unlike in previous cases, now the resulting ODE equation $\frac{d\yv_3(t)}{dt} =     \nabla \g_3(\yv_3(t))$ has no known closed-form solution.
However, we note two facts. First one, as mentioned above, the original gradients fulfill the C1-C2 conditions and, thus, define a proper invariance submanifold. Second, in this case we observe that the linear span of $\{\nabla \g_j(\xv)\}_{j=1}^{3}$ is the same as the linear span of $\{\nabla f_j(\xv)\}_{j=1}^{3}$, $\forall \xv \in \Omega$. As a consequence, both sets of features produce the same invariance submanifolds. Thus to compute the normalization, it is equivalent to  use both broad path (Algorithm 
\ref{alg:nested_normalizations_broad_path}) or its relaxed version. In particular, using the gradients of the original features instead of the decoupled ones allows to find a closed-form solution for the normalization.
More precisely, we proceed as follows: first find a solution by following $\nabla f_3$ until achieving zero skew, and then standardizing that result, i.e., imposing also zero-mean and standard deviation one. The latter adjustments are a shift and re-scaling that correspond to moving along $\nabla f_1$ and $\nabla f_2$, admissible operations within the submanifold, that do not affect the zero-skew condition.
Thus, we can pose the much easier Ricatti ODE equation obtained moving along the $\nabla f_3$, $\frac{d\zv_3(t)}{dt} = \nabla f_3(\zv_3(t))$\footnote{This is not the only possibility for obtaining analytic solutions for the normalization, although it seems the best, in this case. Note that other integrable gradients, such as $\zv_3(t)^2-1$ (giving rise to an hiperbolic tangent solution) have a reduced range and converge to spurious stationary solutions.}:
\[
\frac{d\zv_3(t)}{dt} = \frac{3}{N} (\zv_3(t))^2,
\]
with $\zv_3(0) = \hat{\xv}_1(\xv)$, whose solution is (because of its irrelevance for the next calculations, we ignore the $\frac{3}{N}$ factor): 
\begin{equation}
\label{eq:skew_adjustment}
\zv_3(t) = \hat{\xv}_1(\xv)\odot/(\v1 - t \hat{\xv}_1(\xv)).
\end{equation}
Then we find a numerical solution for
\begin{equation}
\label{eq:zero_skew}
t_0(\xv) = \arg_t \{\g_3(\hat{\xv}_1(\xv)\odot/(\v1 - t \hat{\xv}_1(\xv)))=0\},    
\end{equation}
and we obtain the third-order normalization by standardizing $\zv_3(t_0(\xv))$\footnote{It is easy to check that Eq.~\eqref{eq:skew_adjustment} always has solution within the open interval $(1/\min(\hat{\xv}_1(\xv))),1/\max(\hat{\xv}_1(\xv)))$ (note that $sign(\min(\hat{\xv}_1(\xv)))\neq sign(\max(\hat{\xv}_1(\xv)))$ for all $\xv\neq {\mathbf 0}$).}:
$$\hat{\xv}_3(\xv) = \hat{\xv}_2\left(\hat{\xv}_1(\xv)\odot/(\v1 - t_0(\xv) \hat{\xv}_1(\xv))\right).$$
Finally, this allows us to define the fourth-order decoupled moment as $\g_4(\xv) = f_4(\hat{\xv}_3(\xv)).$
We have termed this new decoupled sample moment $\g_4$ the {\em orthokurtosis}, a function that, unlike the classical standardized fourth-order sample moment (the kurtosis) is not just decoupled from the mean and variance, but also from the skewness.

Note that the computation of the orthokurtosis includes a non-explicit function, namely $t_0(\xv)$.
Although we could apply a similar strategy for obtaining closed-form solutions (up to the integration parameter value) for decoupling higher-order moments by integrating separately along integer power gradients, that would not provide us with efficient solutions, because we would still need to numerically find the parameter $t_0$ for which the reference value for the decoupled moment is reached. For instance, for computing the fifth order decoupled moment we need to normalize the orthokurtosis. This implies evaluating every time in a loop this function, which, in turn, requires the normalization in loop of the skewness. In summary, once there are no fully analytic expressions, computationally expensive nested search loops appear, and a piece-wise 1-dimensional ODE integration strategy (as the one described in Algorithm~\ref{alg:nested_normalizations_narrow_path}) is preferable.   

Finally, it is worth emphasizing how the first three decoupled moments obtained using the NeN algorithm are precisely the classical standardized moments: mean, variance and skewness. This clearly reflects how our method has captured the pre-existing natural intuitions about decoupling features through normalization. However, the next standardized moment, the kurtosis, breaks the pattern of being decoupled from all previous standardized moments, as it is algebraically coupled to skewness. This algebraic coupling has been previously noted, and some solutions have been proposed (see, e.g., \cite{Blest2003}). Previous efforts have not aimed at orthogonalizing the involved gradients, and the few proposed modifications of the kurtosis lack a theoretical foundation and have proven inferior in their practical application to the solutions presented here (see Fig.\ref{fig:results} in Section \ref{sec:applications}). 

Figure~\ref{fig:orthokurtosis} illustrates the orthogonalization of the kurtosis with respect to the skewness, giving rise to the orthokurtosis.  It shows the actual iso-level curves, for the case of $N=4$ (for visualization purposes, a dimension has been removed, namely forcing that the solutions belong to the hyperplane $\mu(\xv)=0$). Each trajectory shown in the orthokurtosis representation has been actually computed by integrating the projected gradients, starting from a randomly perturbed maximum of the skewness (a perturbation is necessary, because the new function is not defined at the skewness' critical points) and finishing in one minimum.

\begin{figure}[ht]
\begin{center}
\begin{tabular}{c}
 \includegraphics[width=0.3\textwidth]{./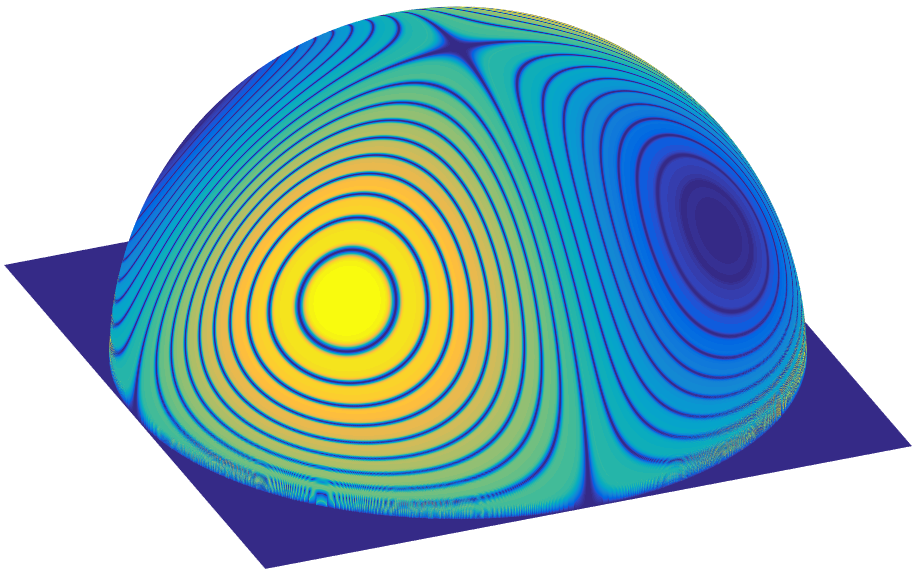} \\ 
 \includegraphics[width=0.3\textwidth]{./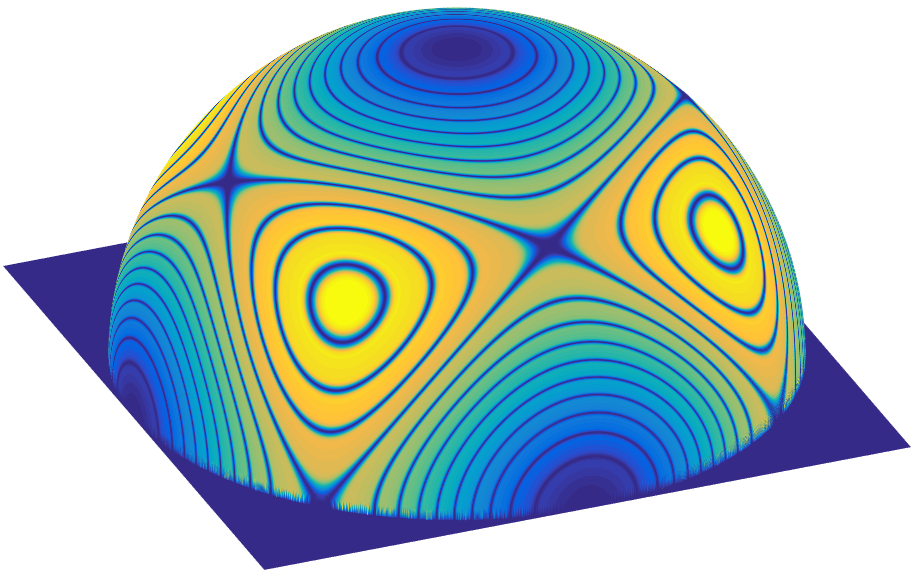} \\
 \includegraphics[width=0.3\textwidth]{./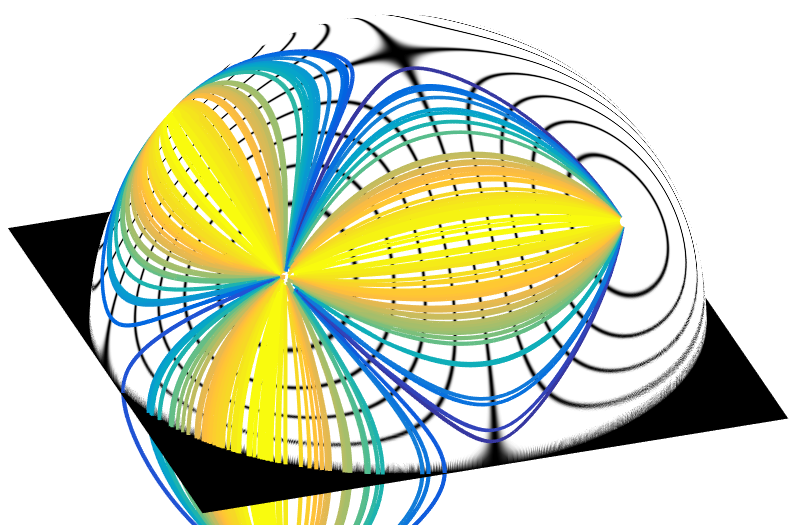} \\
\end{tabular}
\end{center}
\vspace{-2mm}
\caption[t]{Visual represention of sample skewness (up), kurtosis (middle) and {\em orthokurtosis}, the new fourth-order normalized moment (bottom). Dark curves (at all three panels), and coloured curves (at the bottom), are actual iso-level curves. Yellow/bright represents high values, and blue/dark low values. At the bottom panel, the iso-orthokurtosis colored curves are drawn over the iso-skewness curves (black) to show mutual orthogonality. Note how the orthokurtosis is not defined at the extrema of the skewness.}
\label{fig:orthokurtosis}
\end{figure}

\subsubsection{Beyond orthokurtosis: higher-order approximately decoupled moments}
\label{subsec:higher_order_moments}

As explained in Section 3, universal and exact feature decoupling is only possible when the gradients obtained by any of the proposed algorithms fulfill the Frobenius condition C2.
The gradient of the orthokurtosis, 
together with the gradients of its preceding decoupled moments (sample mean, variance and skewness), no longer fulfill C2 condition.
Therefore, an exact unconstrained hierarchical decoupling solution is not possible beyond four-order moments.
Nevertheless,  as shown in Section~\ref{subsec:vs}, gradients orthogonality can still hold exactly within the reference manifolds and, as shown in Section~\ref{subsec:gradients_orthogonality}, approximately outside of them.

Thus, in a looser sense of ``decoupling", the lack of analytic solutions fulfilling Frobenius beyond the fourth order is not an insurmountable obstacle for computing  higher-order approximately decoupled moments. 
In fact, we have seen (Proposition~\ref{prop:normalization-invariant} and posterior discussion, in Subsection~\ref{subsec:normalizaton_Frobenius}) how easy is to compute the gradient of a decoupled feature $\nabla \g_k(\xv)$ if we constrain $\xv$ to belong to the reference manifold $\RM_{k-1}$.
In that case it holds $\nabla \g_k(\xv) = \gv_k(\xv) = P_{\RM_{k-1}}(\nabla f_k(\xv))$, where $P_{\RM_{k-1}}$ is the projection operator on the local tangent plane to $\RM_{k-1}$ in $\xv$, $\xv\in\RM_{k-1}$. As the orthogonal space to that tangent plane is the linear span of the previous gradients $\nabla f_j, j=1\dots k-1$, the projection can be computed by finding a linear combination of all gradients (including $\nabla f_k$) that is orthogonal to the previous gradients. Because in our case the original gradients are made of ``monomial vector" of increasing orders, it is easy to solve the triangular system of equations resulting from equating their inner products to zero \cite{Portilla:ICIP:2018}, yielding the projected gradients (a set of orthogonal vectors):
\begin{eqnarray}
\label{eq:ng1}
\gv_1(\xv) & = & \frac{1}{N}({\v1}) \nonumber \\
\label{eq:ng2}
\gv_2(\xv) & = & \frac{2}{N}(\xv -f_1{\v1}) \nonumber \\
\label{eq:ng3}
\gv_3(\xv) & = & \frac{3}{N}\left(\xv^{\odot 2}  - f_2{\v1} - a_{2,3} \gv_2(\xv)\right) \nonumber  \\
& \vdots & \nonumber \\
\gv_k(\xv) & = & \frac{k}{N}\left(\xv^{\odot k-1} -\sum_{j=1}^{k-1} {a_{j,k} \gv_j(\xv)}\right),
\end{eqnarray}
with
\begin{eqnarray*}
c_{i,j}^{(\ell)} & = & 
\begin{cases}
 f_{i+j} - f_i f_j & \text{ if }\ell=1, \\
  c_{i,j}^{(\ell-1)} c_{\ell-1,\ell-1}^{(\ell-1)} - c_{\ell-1,i}^{(\ell-1)} c_{\ell-1,j}^{(\ell-1)} & \text{ if } \ell>1,
  \end{cases}\\
a_{j,k} & = & 
\begin{cases}
 f_{k-1} & \text{ if }j=1,\\
  c_{j-1,k-1}^{(j-1)} / c_{j-1,j-1}^{(j-1)} & \text{ if } j>1.
\end{cases}
\end{eqnarray*}
We remind the reader that, in general, 
$\gv_k = \nabla \g_k$ only for points belonging to their corresponding reference manifolds (the $k$-th gradient is computed and applied in the $\RM_{k-1}$ manifold). Some of them may get simpler expressions when imposing the corresponding reference values (particularly, lower than  order $k$ odd moments vanish if we take as reference the moments of even symmetric pdf's, e.g., Gaussian).
They have the cross-invariance property, i.e., by integrating a curve along the $k$-th decoupled gradient we do not change the previous featuress $f_j,\g_j\,j= 1\dots k-1$.  
Although in this case we obtained a closed-form (recursive) solution for the gradient projection, in case of lacking close-form expressions we can always apply a purely numerical orthogonalization method to the gradient vectors of the original features (like Gram-Schmidt).

In our practical examples of applying decoupled moments to signal analysis in Section~\ref{sec:applications}, we demonstrate the usefulness of higher than four order decoupled moments. This is, we believe, a relevant result, as the original moments of so high order are very rarely used in the literature due to their instability and high redundancy.

\subsection{second-order moments at the output of a set of filters}
\label{subsec:VF_study_case}

We study here the problem of hierarchically decoupling second-order moments measured at the output of a set of $M$ linearly independent band-pass (zero DC-response) filters:
$$\S=\left\{f_j(\xv) = \frac{1}{N} \sum_{n=1}^{N}{[\xv * \hv_j]_n^2}\right\}_{j=1}^M.$$
Such a set of features provides an economical description of the auto-correlation of $\xv$. We will use for the normalization 
$\{v_j^{ref} = f_j(\wv)\}_{j=1}^M$, 
being $\wv = N{\mathbf \delta} $, a scaled Kronecker delta.
This is equivalent to taking for reference values the expected value of $f_j(\yv)$ for $\yv$ a vector made of i.i.d. zero-mean and unit variance coefficients.  
This choice for reference values, being the values of the functions applied to a given vector $\wv$, guarantees the algebraic compatibility of $\vv^{ref}$, regardless of the chosen set of filters $\{\hv_j\}$. In addition, it makes the normalization to {\em whiten} the input.

Same as in previous Subsection~\ref{subsec:marginal_moments}, we first check that in this case the Frobenius condition C2 holds for the original gradients (see Appendix~\ref{appendix:Frobenius} for a proof), and, therefore, the corresponding set of gradients defines a proper invariance submanifold.
This allows us to apply Algorithms \ref{alg:nested_normalizations_broad_path} (in its relaxed form),  \ref{alg:nested_normalizations_narrow_path} or \ref{alg:normalizations_parallel} for trying to decouple these features, and see {\em a posteriori} if Frobenius condition also holds on the gradients of the obtained features. 

\subsubsection{General analytical approach}

We first obtain the feature gradients and study their integrability. We have:
\[
\nabla f_j(\xv) = \frac{2}{N} \xv * \hv_j*\tilde\hv_j,
\]
where $\tilde{\hv}_j(\nv) =\hv_j(-\nv)$.
To simplify this expression and its subsequent integration, we express it in the Fourier domain, by doing the DFT of $\xv$ and $\hv$: \begin{equation}
\label{eq:gradient_variance}
G_j(X(\xiv)) = {\cal F}\{\nabla f_j(\xv)\} = \frac{2}{N} |H_j(\xiv)|^2 X(\xiv),
\end{equation}
where $\xiv$ represents the (possibly vectorial, for $n$-D signals, $n>1$) discrete frequencies, 
and, as usual, upper case letters represent the Fourier transforms of their original lower-case counterparts (except for $G$, that corresponds to the Fourier transform of the gradient of the features).

In order to normalize the first $k$ features, following the broad path algorithm, we can write the relaxed version of Eq.~(\ref{eq:ODE}) in the Fourier domain as:
\[
\frac{dY_k(\xiv,t)}{dt} = \frac{2}{N}\left(\sum_{j=1}^{k}{\alpha_{j,k}|H_j(\xiv)|^2}\right) Y_k(\xiv,t),
\]
for some convenient choice of the integration path, encoded by  the coefficients $\vec\alpha_k$. 
Setting $Y_k(\xiv,0) = X(\xiv)$ and calling $L(\xiv;\vec\alpha_k)$ the filter of the sum in brackets in previous equation, the integration of the initial value problem for computing the signal normalization is straightforward: 
\begin{equation}
    \label{eq:fix_variance}
    Y_k(\xiv,t;\vec\alpha_k) = X(\xiv) \exp\Big(\frac{2}{N}L(\xiv;\vec\alpha_k)t\Big).
\end{equation}
To obtain $\hat\xv_k(\xv,\vv^{ref})$, we first normalize $\alpha_{1,k} = 1$ and then solve for
\begin{equation}
\label{eq:ref_condition_VF}
    (\vec\alpha_k^{ref},t_k^{ref}) =\displaystyle \underset{\vec\alpha,t}\arg
    \Big\{\sum_{\xiv}{|H_j(\xiv)|^2 \,|Y_k(\xiv,t;\vec\alpha_k)|^2} = v_j^{ref}\Big\}_{j = 1}^{k}
   \end{equation}
Finally, $\hat{X}_k(\xiv) = Y_k(\xiv,t_k^{ref};\vec\alpha_k^{ref})\}$, and $\hat\xv_k(\xv,\vv^{ref}) = {\cal F}^{-1}\{\hat{X}_k(\xiv)\}$.
Solving the non-linear system of $k$ equations and $k$ unknowns of Eq.~\eqref{eq:ref_condition_VF} will require a numerical computation in a general case.
However, it may also have simplified solutions in some special cases, as we will see in next subsection.

An equivalent possibility for computing $\hat\xv_k(\xv,\vv^{ref})$ consists in following the Algorithm~\ref{alg:normalizations_parallel}: calculating analytically the result of a generic ODE excursion using a single gradient $\nabla f_j$ from a given point, with $\alpha_{j,k}=1$, as a function of $t_{j,k}$ (initially left as unknown), Step 1. Then we concatenate these 1-D trajectories (Steps 2-5) into a final analytical solution depending on $\tv_k = [t_{j,k}], j = 1\dots k$ (Step 7). It is easy to see that such analytical solution has the same form of Eq.~\eqref{eq:fix_variance} (but depending on a set of consecutive times $t_{j,k}$, instead of $\alpha_{j,k}$), and that the Step 8 of that algorithm, which enforces the solution achieving all $k$ reference values in $\fv_k(\xv)$, is equivalent to Eq.~\eqref{eq:ref_condition_VF}. 

\subsubsection{A case including two complementary filters}

Let us consider a Parseval frame representation~\cite{Kovacevic2008} using two filters, $H_1(\xiv),H_2(\xiv)$ fulfilling $|H_1(\xiv)|^2+|H_2(\xiv)|^2 = 1$, e.g., a low and high-pass kernels of a redundant wavelet transform. Here the features are simply $f_j(\xv) = \sigma_j^2(\xv)$, $j=1,2$. Since the Euclidean metric is preserved, we have $\sigma_1^2 + \sigma_2^2 = \sigma^2$, the total signal variance (for simplicity sake we assume here zero mean).

For normalizing the first feature, we integrate its corresponding gradient, obtaining:
\[
Y_1(\xiv,t) = X(\xiv) \exp\Big(\frac{2}{N}|H_1(\xiv)|^2t\Big).
\]
As usual, we then find the $t$ parameter providing us the desired reference value:
\[
t_1^{ref}(\xv) = \underset{t}\arg \Big\{\sum_{\xiv} |H_1(\xiv)|^2\,|Y_1(\xiv,t)|^2 = v_1^{ref}\Big\}
\]
and we obtain $\hat\xv_1(\xv) = {\cal F}^{-1}\{Y_1(\xiv,t_1^{ref}(\xv))\}$, from which we obtain the decoupled feature $\g_2(\xv) = f_2(\hat \xv_1(\xv))$.

If we wanted to normalize also the second feature (e.g., in order to add more decoupled features), we could move along the gradient of the second feature, but, at the same time, control that the first feature does not change its value (this is equivalent to project $\nabla f_2$ onto $\mathcal R_1$).
We can achieve that by dividing by the square root of the first feature evaluated for each $t$. Such an adjustment is valid in this case because it corresponds to moving along the sum of the two gradients (which in this case corresponds to simply applying a scale factor to the vector):
\begin{equation}
\label{eq:filters}
Y_2(\xiv,t) = \frac{\hat X_1(\xiv) \exp(\frac{2}{N}|H_2(\xiv)|^2t)}{\sqrt{(v_1^{ref})^{-1}\sum_{\xiv}|H_1(\xiv)|^2\,|\hat X_1(\xiv) \exp(\frac{2}{N}|H_2(\xiv)|^2t)|^2}},
\end{equation}
thus enforcing that $\sum_{\xiv}|H_1(\xiv)|^2|Y_2(\xiv,t)|^2 = v_1^{ref}$.
Same as before, we solve for the $t$ value that achieves the desired normalization:
\[
t_2^{ref}(\xv) = \underset{t}\arg \Big\{\sum_{\xiv} |H_2(\xiv)|^2|Y_2(\xiv,t)|^2 = v_2^{ref}\Big\}
\]
and we obtain $\hat\xv_2(\xv) = {\cal F}^{-1}\{Y_2(\xiv,t_2^{ref}(\xv))\}$, from which we could obtain another decoupled feature from any arbitrary (non-trivially redundant) feature $g(\xv)$; indeed, $\hat g(\xv) = g(\hat \xv_2(\xv))$.

Note also that Eq. \eqref{eq:filters}, although looking quite different from  Eq. \eqref{eq:fix_variance}, is a particular case of the latter with $\alpha_{1,2} = t_1^{ref} - \nu$, $\alpha_{2,2} = t_2^{ref} - \nu$, $\nu$ being the logarithm of the square root in the denominator of Eq.~\eqref{eq:filters}. No matter the adjustment method applied here may seem arbitrary, the fulfillment of Frobenius conditions on the gradients of the original features guarantees, jointly with the additional constraints explained in Section \ref{sec:decoupling}, that the result of such normalization exists and is unique, as it only depends on the reference values of the adjusted features, and not on the choice for the coefficients $\alpha$'s in the linear combinations of the feature gradients, in the ODEs.

Finally, although not mathematically proven here\footnote{Explicit expressions of the second decoupled feature's gradient can be obtained through implicit derivation.}, it turns out that the resulting gradients of the new features obtained in this case do not fulfill the Frobenius condition. This implies that strict gradients' orthogonality only holds for all pairs within their reference manifolds, and in the whole domain for the pairs $(\g_1,\g_j),j=1\dots M$, as explained in Subsection~\ref{subsec:vs}.  Nevertheless, we have obtained gradients that are very close to being mutually orthogonal also in the other cases when applying a set of bar and edge detectors both to white noise and to textured patches of photographic images (see Fig.~\ref{fig:figAE_VF} in Section~\ref{subsec:gradients_orthogonality}).

\subsection{A summary of feature-decoupling scenarios}

Table~\ref{tab:scenarios} summarizes the different situations one may encounter when trying to apply the decoupling framework proposed here to decouple a given set of ordered features.
From less favorable to more favorable, the first scenario is when we do not have an explicit expression of the original features gradients. The most extreme case would be that each of our features is a black box function. In that case, a very computationally costly numerical procedure (such as described in Section~\ref{subsec:gradients_orthogonality}, see Eq.~\eqref{eq:partial_der}) is the only option for approximately computing the gradients. A better situation is when using artificial neural networks  (ANNs) with  automatic differentiation for computing the gradients of the cost functions.  
In that case we can evaluate the gradients with a reasonable cost (and apply numerical integration, using the narrow-path algorithm), but we may not be able to assess the fulfillment of the Frobenius condition for the original features beyond the first layer. 
As such, we should not expect a strict and universal decoupling beyond the second feature (the second layer, if we decouple in a layer-wise fashion). For those features, it becomes an empirical matter to test how far new gradients typically are from being mutually orthogonal.

Second scenario corresponds to knowing the explicit expressions of our features and their gradients, and knowing that they do not jointly fulfill the Frobenius condition. In that case we can still apply the narrow-path version of NeN and, again, test empirically how far the resulting gradients of the (approximately) decoupled features are from being orthogonal.
This corresponds, for instance, to the case of higher-than-two order moments at the output of a set of filters.

Third scenario is when we know the analytic expressions of the gradients of the original features and they fulfill Frobenius. In that case the definitions of Section 2 for the decoupled features apply for finding decoupled features {\em to the original ones}, and we may end up finding explicit expressions for their gradients. 
However, in this scenario these gradients turn out not fulfilling Frobenius condition. 
First,  we recall that the first decoupled feature is just the first feature $\g_1(\xv) = f_1(\xv)$, as always, and, for that reason ${\cal S}_1 = \hat{\cal S}_1$, so the broad-path algorithm applied to obtaining the second decoupled feature is the same as its relaxed version. Furthermore, the second feature is exactly and universally decoupled, as Frobenius condition becomes vacuous in 1-D. However, if we are able to compute both gradients and they do not fulfill the Frobenius condition, we then know that no added third or following features will be universally and exactly decoupled from the two previous ones. Still, we may find exact constrained decoupling solutions: 
the output features will be exactly decoupled within the corresponding reference manifolds, but only approximate outside them. Again, finding how close to be orthogonal are the gradients outside those manifolds requires an empirical measurement. An example of this situation is when decoupling the second-order moments at the output of a set of filters overlapping in the Fourier domain. 

Finally, the most favorable scenario corresponds to being able to obtain explicit expressions for the input (coupled) and output features (by using the broad-path NeN algorithm, Algorithm 2, either in its original form or in its relaxed version) and their corresponding gradients, and that the two sets of gradients fulfill Frobenius condition. In this case we obtain the full decoupling solution, which we know is unique, universal and exact. In addition, in this case there is a joint equivalence relationship between the set of original features and the obtained decoupled set. E.g., when decoupling the first three marginal moments, the decoupled result (sample mean, variance and skewness) jointly carries exactly the same information as the original set (first, second and third-order moments). When adding the fourth-order moment we obtain another exact and universally decoupled feature (the orthokurtosis). However, because of the gradient of the orthokurtosis no longer fulfills Frobenius condition jointly with the lower order gradients, then: (1) the joint equivalence relationship between coupled and decoupled moments up to order four does not hold anymore, and (2) we can not further obtain exact universal decoupling for any added feature to this set (and, in particular, for any higher-order moments).

\begin{table*}
	\begin{center}
	\caption{\label{tab:scenarios} Four feature  decoupling scenarios.}
   \scalebox{1.1}{
	
	\begin{tabular}{|cc||c|c|c|c|}
    \multicolumn{2}{c}{  }
    & \multicolumn{4}{c}{\# S C E N A R I O}\\
	\cline{3-6}
    \multicolumn{2}{c|}{  } & 1 & 2 & 3 & 4 \\
    \hline
    \multirow{2}{*}{Original Feat. Gradients:}
    & Analytic Expression & Non-Explicit & Yes & Yes & Yes \\
	& Frobenius & ? & No & Yes & Yes \\
	\hline
    \multirow{2}{*}{Output Feat. Gradients:}
    & Analytic Expression & ? & No & Yes (2) & Yes \\
	& Frobenius & ? & No & No & Yes \\
	\hline
	\multicolumn{2}{|c||}{Joint Equivalence Coupled/Decoupled Set} & No & No & No & Yes \\
	\multicolumn{2}{|c||}{Does accept one additional decoupled feature?} & No & No & No & Yes \\
	\hline
    \multirow{3}{*}{NeN Applicability:}
    & Computation & Heavy/? & Medium & Medium & Light \\
	& Algorithm & 3 & 3 & 3,4 & 2,3,4 \\
	& Decoupling & 1 layer & Approx/Constr & Approx/Constr & Exact\&Univ. \\
	\hline
	\multicolumn{2}{|c||}{Example (See Table~\ref{tab:feat} for acronyms' description) }& ANN & MF ($p>2$) & MF ($p=2$) & MM ($p\leq3$) \\
	\multicolumn{2}{|c||}{See Secs.\& Refs.}& Future Work
	& \ref{subsec:texture_classification}, \ref{subsec:dec_TIL} \&\cite{Portilla:ICIP:2018,Martinez:ICASSP:2020} & \ref{subsec:VF_study_case}, \ref{subsubsection:empirical-gradientsVF} \&\cite{MMSP2020} & \ref{subsec:marginal_moments}, \ref{subsec:marg_mom}, \ref{subsubsection:empirical-gradientsMM}, \ref{subsec:regression} \\
    \hline
    \end{tabular}
	}
	\end{center}
\end{table*}

\section{Deterministic Decoupling and Local De-Correlation}
\label{sec:local_decorrelation}

Here we show how features' decoupling removes local covariance in the feature space, 
and how this improves discrimination. 

\subsection{Covariance-free ``balls" in the feature space}

Let $\xv \in \mathbb{R}^N$ be a random vector made of $N$ i.i.d. samples obeying a probability distribution $p(x)$.
Let us assume a feature set ${\cal S}$ of $M$ global features $\left\{ f_j \right\}$. 
Define a vector $\cc \in \R^M$ containing the expected value of the features $f_j(\xv)$ for different realizations of $\xv$, i.e., $c_j=\mathbb{E}\{f_j(\xv)\}, j=1\dots M$.
Define a manifold ${\cal A}(\cc;{\cal S}) = \fv^{-1}(\cc)$, i.e., the manifold containing the set of all vectors $\xv$ having the same $\cc$ (note that it is non-degenerate by our initial assumption).
Describe vector samples as $\xv_i = \xv_{0i} + \dv_i$, where $\xv_{0i} \in {\cal A}$, and $\dv_i$ is a (relatively small) sampling fluctuation, with $\mathbb{E}\{d_{i,k} d_{i,l}\} = 0$, $k,l \in [1,\dots, N]$, for all $k$-th and $l$-th components of the vector $\dv_i$. 

\begin{proposition}
\label{propcorr}
\emph{(Decoupled features are locally uncorrelated).}
Under previous assumptions, for $N$ large, decoupled features will have uncorrelated deviations from their expected values, i.e., $\mathbb{E}\{(\g_n(\xv)-c_n)(\g_m(\xv)-c_m)\} \approx 0$, $n,m \in [1,\dots, M]$. 
\end{proposition}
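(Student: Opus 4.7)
The plan is to reduce the proposition to the first-order Taylor expansion of the decoupled features around a base point on the manifold $\mathcal{A}(\cc;\S)$, and then invoke the defining orthogonality of gradients from Eq.~\eqref{eq:dec_fea}. More precisely, writing $\xv = \xv_0 + \dv$ with $\xv_0\in\mathcal{A}$, I expand
\begin{equation*}
\hat f_n(\xv) = \hat f_n(\xv_0) + \nabla \hat f_n(\xv_0)\cdot\dv + R_n(\xv_0,\dv),
\end{equation*}
where $R_n$ collects the second- and higher-order terms in $\dv$. Since $\xv_0\in\mathcal{A}$ and the features are continuous, $\hat f_n(\xv_0)$ coincides with $c_n$ (up to the analogous remainder stemming from the shape of $\mathcal{A}$, which is a level set of the original features $\fv$, not of the decoupled set; this discrepancy is itself $O(\|\dv\|^2)$ after averaging, since the first decoupled moments coincide with the first original ones on $\mathcal{A}$ and the remaining ones differ only by projections). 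Thus $\hat f_n(\xv)-c_n \approx \nabla\hat f_n(\xv_0)\cdot\dv$ to leading order.

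Inserting this into the cross-covariance yields, for $n\neq m$,
\begin{equation*}
\E\{(\hat f_n(\xv)-c_n)(\hat f_m(\xv)-c_m)\}
\;\approx\;
\sum_{k,l=1}^{N} [\nabla\hat f_n(\xv_0)]_k\,[\nabla\hat f_m(\xv_0)]_l\,\E\{d_k d_l\}.
\end{equation*}
Reading the hypothesis on $\dv$ as saying that the fluctuations of different coordinates are uncorrelated (so $\E\{d_k d_l\}=\sigma_k^2\delta_{kl}$, the only non-trivial reading consistent with the setup), the double sum collapses to a weighted inner product
\begin{equation*}
\sum_{k=1}^{N} \sigma_k^2\,[\nabla\hat f_n(\xv_0)]_k\,[\nabla\hat f_m(\xv_0)]_k.
\end{equation*}
When the samples are i.i.d., $\sigma_k^2\equiv\sigma^2$, so this expression is exactly $\sigma^2\,\nabla\hat f_n(\xv_0)\cdot\nabla\hat f_m(\xv_0)$, which vanishes by the decoupling condition \eqref{eq:dec_fea}. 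Since $\xv_0$ can be taken as the projection of $\xv$ onto $\mathcal{A}$, the argument holds for essentially every realization.

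The two delicate points, which the ``large $N$'' clause is meant to cover, are the following. First, the remainder $R_n$ must be negligible compared to the linear term after taking expectation; for shift-invariant sample statistics of the form \eqref{eq:gf}, the gradients have entries of order $1/N$ while the typical fluctuations $\|\dv\|$ concentrate around $\sqrt{N}\,\sigma$, making the linear contribution of order $1/\sqrt{N}$ and the quadratic remainder of order $1/N$, so the ratio vanishes as $N\to\infty$. Second, one needs $\xv_0$ to lie close enough to $\xv$ for the expansion to be meaningful, which is guaranteed by the concentration-of-measure effect for i.i.d. samples: $\xv$ lies in an $O(1/\sqrt{N})$ neighbourhood of $\mathcal{A}$ with high probability. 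The main technical obstacle, therefore, is not the algebra but justifying that both the curvature correction and the mismatch between $\mathcal{A}$ (level set of $\fv$) and the analogous level set of $\hat\fv$ are subdominant; a clean statement can be obtained by assuming uniform bounds on the Hessians of $\hat f_n$ over a tubular neighbourhood of $\mathcal{A}$ and invoking Chebyshev's inequality on $\|\dv\|$.
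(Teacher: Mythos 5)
Your argument is essentially the paper's own proof: a first-order expansion $\g_j(\xv)\approx \g_j(\xv_0)+\nabla \g_j(\xv_0)\cdot\dv$ around a point on the manifold, after which the assumption of uncorrelated coordinate fluctuations collapses the cross-covariance to $\sigma_d^2\,\nabla \g_n(\xv_0)\cdot\nabla \g_m(\xv_0)$, which vanishes by the decoupling condition \eqref{eq:dec_fea}. Your additional remarks on the remainder and on the mismatch between the level sets of $\fv$ and $\hat\fv$ go beyond the paper's (deliberately informal) treatment, with one caveat: by your own figures ($\|\dv\|\sim\sqrt{N}\,\sigma$, Hessian entries of order $1/N$) the quadratic remainder is $O(\sigma^2)$ rather than $O(1/N)$, so its subdominance really rests on the paper's ``relatively small'' fluctuation assumption (and on centering, since the remainder's mean merely shifts $c_n$), not on the stated $1/\sqrt{N}$-versus-$1/N$ comparison.
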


\begin{proof}

For $N$ large, features' values, as they behave like sample statistics (see Eq.~\eqref{eq:gf}), will not deviate much from their expected values and thus vector samples $\{\xv_i\}$ will be located in the vicinity of $\cal A$.
Thus, a first-order local approximation can be applied, which gives
\begin{equation}
\label{eq:balls}
f_j(\xv_i) \approx f_j(\xv_{0i}) + \nabla f_j(\xv_{0i}) \cdot \dv_i = c_j + \nabla f_j(\xv_{0i}) \cdot \dv_i. 
\end{equation}
Therefore, $\mathbb{E}\{(f_n(\xv)-c_n)(f_m(\xv)-c_m)\} \approx \mathbb{E}\{( \nabla f_n(\xv_0) \cdot \dv(\xv_0))(\nabla f_m(\xv_0) \cdot \dv(\xv_0) )\}$, yielding the covariance:
\begin{equation}
Cov(f_n,f_m)(\xv) \approx \sigma_d^2 \,\nabla f_n(\xv_0) \cdot \nabla f_m(\xv_0),
\label{eq:covariance}
\end{equation}
where $\sigma_d^2$ is the expected quadratic dispersion of the features fluctuations. 
\end{proof}

In the decoupled features case gradients are mutually orthogonal, and thus vector differences for the different features will be uncorrelated.  
In contrast, when using coupled features, $\dv_i$ is projected onto non-orthogonal directions, leading to 
correlated sampling fluctuations in the feature space, as illustrated in Figure~\ref{fig:manifold_perturbation_correlation}.

\begin{figure}
\begin{center}
\includegraphics[width=0.45\textwidth]{./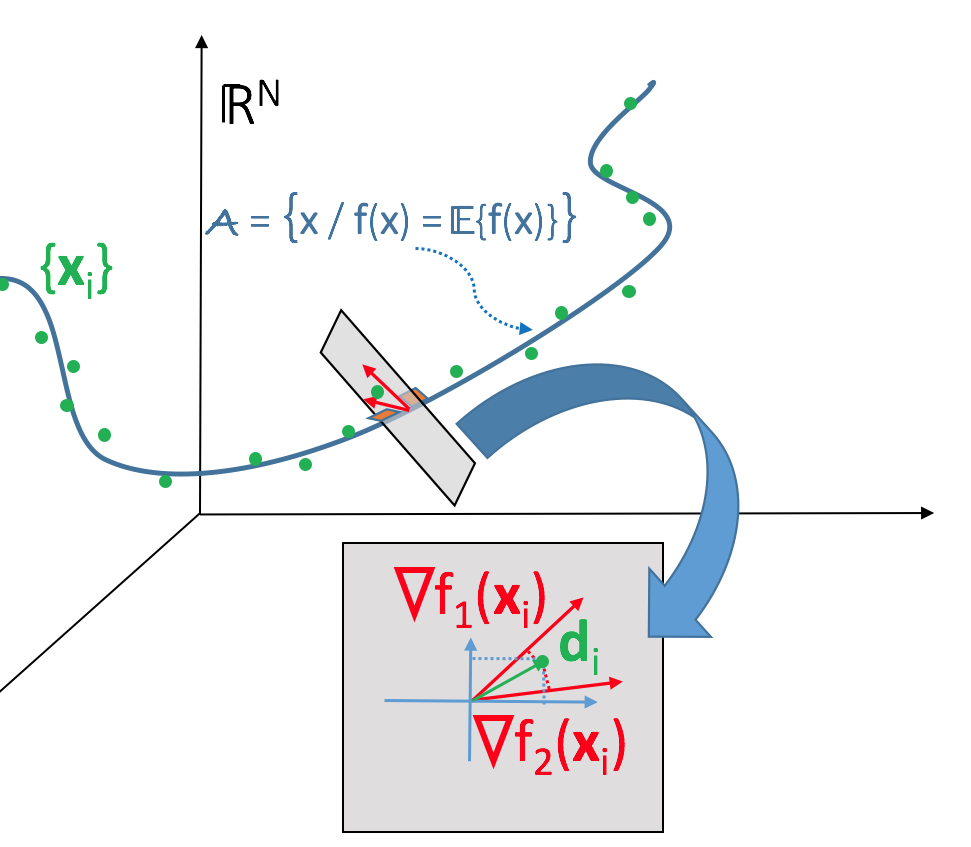}\\
\end{center}
\vspace{-4mm}
\caption{A representation of the expected feature vector manifold in the input space domain, showing some input samples, and the influence of feature gradients' correlation on the correlation in the feature domain.}
\label{fig:manifold_perturbation_correlation}
\end{figure}

\subsection{Features' covariance and discriminability}

Let us assume now that our pdf depends on a parameter $\theta$, $p(x;\theta)$.
Consider also a global feature transformation $\fv(\xv):\mathbb{R}^N \rightarrow \mathbb{R}^M$ meant to be applied to vectors $\xv(\theta)\in\R^N$ made of samples from $p(x;\theta)$.
How well can we discriminate samples coming from similar values of $\theta$, based on $\fv(\xv(\theta))$?
For studying this problem it is convenient to represent the samples $x(\theta)$ using an intermediate stochastic sample $x_0\sim p(x;\theta_0)$ that does not depend on $\theta$; then we obtain the final sample by applying a deterministic invertible mapping $s_\theta:\mathbb{R} \rightarrow \mathbb{R}$ of $x_0$ depending on $\theta$: $x(\theta)=s_\theta(x_0)$, such that $x(\theta)\sim p(x;\theta)$ (re-parametrization trick~\cite{Williams1992,Kingma2014}. 

This allows us to study the dependency of the expected feature vector $\bar\fv$ on $\theta$, by expressing: 
\begin{eqnarray}
\frac{d \bar\fv(\theta)}{d\theta} & = & \frac{d \mathbb{E}\{\fv(\xv(\theta))\}}{d\theta} \nonumber \\
& = & \mathbb{E}\left\{{\bf J}_{\fv} \frac{d \xv(\theta)}{d\theta}\right\} \nonumber \\
 & = & \mathbb{E}\left\{ {\bf U}_{\fv}{\bf S}_{\fv}{\bf V}^T_{\fv} \frac{d \xv(\theta)}{d\theta} \right\},
\label{eq:feature_dependency}
\end{eqnarray}
where ${\bf J}_{\fv}$ is the Jacobian matrix of $\fv$ and ${\bf U}_{\fv}{\bf S}_{\fv}{\bf V}_{\fv}^T$ is its singular value decomposition, SVD (we have omitted here their dependency on $\xv(\theta)$ for brevity). 
On the other hand, from~Eq. \eqref{eq:covariance} we can write the expected local covariance matrix ${\bf C}(\theta)$ of the features fluctuations, as: 
\begin{eqnarray}
{\bf C}(\theta) & \approx & \sigma_d^2\, \mathbb{E}\{{\bf J}_{\fv} {\bf J}_{\fv}^T\} \nonumber \\
& = & \sigma_d^2 \mathbb{E}\{{\bf U}_{\fv} {\bf S}_{\fv}^2 {\bf U}_{\fv}^T\} .
\label{eq:covariance_full}
\end{eqnarray}
Here it is crucial to notice that, under the assumptions made in previous and current subsections, whereas ${\bf J}_{\fv}(\xv(\theta))$ will heavily depend on $\xv$, ${\bf J}_{\fv}(\xv(\theta)){\bf J}_{\fv}^T(\xv(\theta))$ will be much less sensitive to $\xv$, as it only depends on the inner products of the different features' gradients (see ~Eq. \eqref{eq:covariance}). Furthermore,  in Subsection~\ref{subsec:gradients_orthogonality} we show how these inner products (at least their correlation factor, which depends only on their relative angle) are fairly stable,
especially when inputs are samples from pdf's.  Therefore, the ${\bf U}_{\fv}(\xv(\theta))$ and ${\bf S}_{\fv}(\xv(\theta))$ matrices, on their average behavior, will determine both the direction of change of $\bar \fv$ when changing $\theta$ (Eq. \eqref{eq:feature_dependency}) and the dominant direction of ${\bf C}(\theta)$ (Eq. \eqref{eq:covariance_full}), which, thus, will tend to coincide.  
Our observations indicate that the eigenvalues of ${\bf C}(\theta)$ (the diagonal terms of ${\bf S}_{\fv}^2$) are fairly concentrated in the studied cases.
This implies that the features' coupling actually causes a {\em worst case scenario} for discriminating between similar $\theta$'s: the features' pdfs become (i) elongated (due to eigenvalues' concentration), and (ii) locally aligned with the $\bar \fv(\theta)$ curve. This causes strong overlapping of the pdf's having close $\theta$ values, and, as a result, poor discrimination. 
Fig.~\ref{fig:balls}(left) illustrates this phenomenon in a real experiment with real data. 
Fig.~\ref{fig:balls}(right) shows the effect of decoupling the kurtosis from the skewness (\textit{orthokurtosis}). We used 128 random vectors of 1024 i.i.d. samples each, from $x(\theta)=x_0^\theta$, being $x_0\sim U(0,1)$. Ellipses correspond to a Mahalanobis radius of 2, and $\theta=5$ (black), 6 (blue), and 7 (red). 
Expected error probabilities, using a bi-variate Gaussian model, are 12.4\% (coupled) vs. 4.5\% (decoupled). 
\begin{figure}
\begin{center}
\begin{tabular}{cc}
\hspace{-4mm}
\includegraphics[width=0.25\textwidth]{./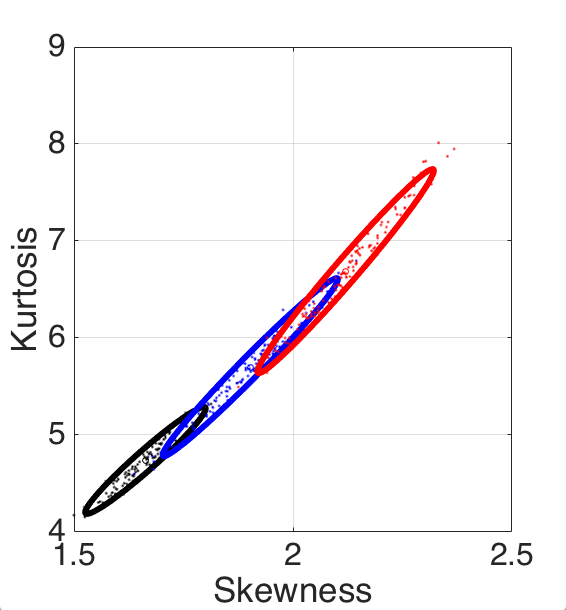}& 
\hspace{-6mm}
\includegraphics[width=0.25\textwidth]{./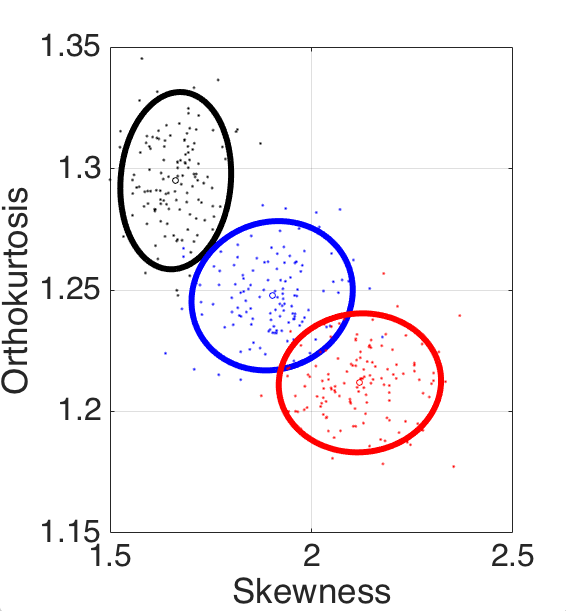}\\
\end{tabular}
\end{center}
\vspace{-4mm}
\caption{Comparing Gaussian classes in the original feature space (left) and in the decoupled feature space (right). See text for details.}
\label{fig:balls}
\end{figure}

To conclude this section, the techniques presented here attack the core of the poor discrimination problem due to using coupled features, by orthogonalizing the feature gradients, which has the effect of approximately diagonalizing the local covariance matrix ${\bf C}(\theta)$. 
It is crucial to note that this diagonalization is effective because it is {\em local}. A {\em global} diagonalization (such as the classical Principal Component Analysis, PCA) would be useless for reducing the pdfs' overlapping  corresponding to close $\theta$ values in the feature space, as such overlapping is insensitive to global affine transformations of that space. In contrast, a global linear correlation (as the one shown in Figure~\ref{fig:balls}(b) ) can be trivially removed, if needed, by applying PCA {\em after} feature decoupling.

\section{Experiments and Applications}
\label{sec:applications}

\subsection{Using two families of global features}

In this section we define the two features' families (sets) that will be studied in the experiments, namely: (i) marginal moments of arbitrary order $p$ (MM); (ii) $p$-th order moment at the output of a 2-D filter bank defined from a filter $h$ (MF). Specifically, to extract the MF features, we first applied the Translation Invariant Laplacian separable (TILs) representation~\cite{Portilla:TIP:2015}, a tight frame acting as bar and edge detector that provides nine subbands, each with the same number of coefficients as pixels in the image. Then, the $p$-th order moment was obtained for every subband. 
To obtain the corresponding decoupled sets of the MM and MF families ($\text{DF}_{\text{MM}}$ and $\text{DF}_{\text{MF}}$ respectively), we used the Nested Normalization-narrow path (Algorithm \ref{alg:nested_normalizations_narrow_path}). For the $\text{MM}$ set of features, we used as reference values the $p$-th moment of a standardized Gaussian distribution (i.e., $(p-1)!!$ for even $p$, 0 for odd \cite{Winkelbauer2012}). For the $\text{MF}$ set of features, reference values corresponded to moments obtained by convolving zero-mean univariate white Gaussian noise with a kernel $h$, which, in our case, using use the set of kernels $\{h_j, j=1\dots 9\}$ from the TILs representation, are the same function of $p$ as for the \text{MM} family in all subbands.
Table \ref{tab:feat} shows the original features for \text{MM} and \text{MF}. It also shows their corresponding gradients expressions (ignoring scaling factors which do not influence the result) and indicates in which cases the set of original gradients fulfills the Frobenius condition.

\begin{table}[h]
    \caption{Families of features (marginal moments, MM, and moments at the output of filters, MF), and their gradients.}
    \begin{tabular}
     {|c||c|c|c|}
    \hline 
    Family & $ f(\xv)$  & $ \partial f(\xv)/\partial x_i$ & Frobenius \\ 
    \hline
    \hline              
    MM & $\displaystyle 1/N \sum_{n=1}^N x_n^p $ & $ x_i^{(p-1)} $ & For all $p$\\
    \hline
    MF & $\displaystyle 1/N \sum_{n=1}^N (x\ast h)_{n}^{p} $ & $(x \ast h)^{(p-1)}\ast\tilde{h} $ & Only for $p=2$\\ 
    \hline 
    \end{tabular}
    \label{tab:feat}
\end{table}

\subsection{Measuring the amount of mutual coupling}
\label{subsec:gradients_orthogonality}
In this section we evaluate how close to being mutually orthogonal are the feature's gradients, for two sets of coupled features and their corresponding decoupled sets, namely: (i) a set composed of the first six orders of the classical MM, in its standardized version: mean, variance, and the rest the moments of the standardized sample to zero mean and unit variance (that is, skewness, kurtosis, etc.). We will refer to this classical set of statistical features by ``MSM" (from Marginal Standardized Moments), and $\text{DF}_{\text{MSM}}$ its corresponding decoupled set;  (ii) a set composed of the second-order moments at the output of a filter bank (``VF", a particular case of MF with $p=2$ and assuming zero mean) and its corresponding decoupled set ($\text{DF}_{\text{VF}}$). 

Let $\{f_j(\xv)\}$ represent the original features and $\{\g_j(\xv)\}$ its corresponding decoupled set. Note that $j = 1,\dots, 6$ for MSM and $j = 1,\dots, 9$ for VF (for the 9 subbands of the TILs representation).
Let $\mathbf{x_0}$ represent an $N$-D vector of i.i.d. samples drawn from a Gaussian or an uniform distribution; or an $N$-D vector that represents the pixel values of a texture patch extracted from an image of the Broadtz database~\cite{brodatz}. 
To obtain the gradient of a feature at $\mathbf{x_0}$, we numerically calculated the partial derivatives with respect to the $i$-th variable $x_i \in \xv$ by adding a differential perturbation $\epsilon$ to the $i$-th element of vector $\xv$: 
\begin{equation}
 \frac{\partial f(\mathbf{x_0})}{\partial x_i} =\lim_{\epsilon \to 0} \frac{f(\mathbf{x_0}+\epsilon \mathbf{e_i})-f(\mathbf{x_0}-\epsilon \mathbf{e_i})}{2\epsilon}.
\label{eq:partial_der}
\end{equation}
This expression yields the gradients  for each feature of the MSM, the VF, and their corresponding decoupled sets ($\text{DF}_{\text{MSM}}$ and $\text{DF}_{\text{VF}}$ respectively). 
To evaluate the function $\g_j(\xv)$ in the DFs cases we used the Nested Normalization-narrow path (Algorithm \ref{alg:nested_normalizations_narrow_path}) using equation \eqref{eq:ng3} for a fast analytical computation of the gradient's orthogonal projections.

Then we measured the angle $\alpha$ between pairs of gradient vectors of the different features that belong to the MSM and VF sets $\{f_j(\xv)\}$ and that belong to the $\text{DF}_{\text{MSM}}$ and $\text{DF}_{\text{VF}}$ sets $\{\g_j(\xv)\}$. The deviation from orthogonality (DO) was obtained as the difference between 90 degrees (perfect orthogonality) and the actual calculated angles ($\text{DO}=90-\alpha$). As such, $\text{DO}>0$ indicates acute angles and $\text{DO}<0$ obtuse angles.
The number of samples were $N$=512 for the Gaussian and uniform distributions and $N$=529 (23$\times$23 pixels) for textures. The experiment was repeated $M$=256 times for the Gaussian and uniform distributions. In the case where $\mathbf{x_0}$ came from textures, we used a single patch for each of the $M$=112 different textures 
in the Brodatz database.
Table \ref{tbl:error_results} shows the average of the absolute value of the DO across the different pairs of feature's gradients for the different distributions tested, for the MSMs, VFs, $\text{DF}_{\text{MSM}}$ and $\text{DF}_{\text{VF}}$.  We excluded from the average calculation the mean and the variance in the MSM and $\text{DF}_{\text{MSM}}$ cases, as they are orthogonal by definition.
Figures \ref{fig:figAE_MSM} and \ref{fig:figAE_VF} show the DO results. Panels (a) and (b) show the DO between different pairs of gradient's features for the original set and its decoupled counterpart (Gaussian and Textures cases respectively). Panels (c-f) show the DO, in absolute value, between different pairs of gradients. Panels (c) and (d) show results for the Gaussian case; panels (e) and (f) show the results obtained for the Textures case. Blue color indicates $|\text{DO}|$ close to 0 degrees (orthogonality), while yellow color indicates a deviation from orthogonality close to 90 degrees (angle of 0 degrees). Note that the main diagonal only acts as a reference ($0$ degrees) here.     

\subsubsection{Marginal moments}\label{subsubsection:empirical-gradientsMM} Let us analyze first the case of the marginal moments.
In agreement with the theory (the first three decoupled moments have gradients fulfilling Frobenius condition, see Proposition~\ref{prop:partial_frobenius}) we see that the DO is exactly zero (perfect orthogonality) for all standardized moments combined with orders 1 and 2 (note that the MSM set is already a partially decoupled version of the original MM family set), and for all decoupled moments combined with orders 1, 2 and 3. In addition, it is close to zero on average in the rest of odd-even combinations, in which case it presents a wide variance for MSM, and a much narrower one for $\text{DF}_{\text{MSM}}$. 
For the odd-odd and even-even rest of the cases, MSM presents highly acute angles (strong average coupling) with an extremely low variance, whereas DF provides either perfect (3-5 case) or approximate (4-6 case) orthogonality, also with low variance. 
In summary, the decoupled moments $\text{DF}_{\text{MSM}}$ provide close to orthogonal gradients also for orders greater than 3, especially for the random sampling experiments (samples are close to their expected values, in the reference manifolds, where exact orthogonality holds), but also, to a lesser extent and with higher variability, for photographic images (textures). In contrast, MSM presented in those cases either very high deviations (with low variability) or low-to-moderate average deviations with high variability (especially in real images).
 In Table~\ref{tbl:error_results} we can see that, for real photographic images, the average absolute DO has been reduced in a factor 6, approx., whereas for Gaussian samples it has been reduced 16 times.
 
\subsubsection{Second-order moments at the output of a filter bank}\label{subsubsection:empirical-gradientsVF}
Figure 8 shows the empirical results obtained using the VF features. First, we note that original features are fairly coupled, although not as much as in the marginal moments' case.
Now the new $\text{DF}_{\text{VF}}$ features are exactly and universally decoupled only for the eight pairs $(\g_1,\g_j),j=2\dots 9$, for which the theory tells us that the gradients of the new features achieve perfect orthogonality when Frobenius condition holds for the gradients of the original features (as it happens in this case; see Proposition~\ref{prop:partial_frobenius} and the note about the special case $\hat{\cal S}_1 = {\cal S}_1$). 
 Although this particular condition on the first eight pairs of gradients is difficult to appreciate in Fig. 8(a) and (d), where deviation from orthogonality seems approximately zero for all pairs of features using white Gaussian samples, we measured a RMS value of the deviation from orthogonality for the first eight couples of $1.6\times 10^{-5}$ degrees. This is a negligible  numerical error exclusively due to the numerical computation of the gradient (see Eq.~\eqref{eq:partial_der}). For the rest of feature couples we obtained an RMS of $0.42$ degrees, still small, but four orders of magnitude larger. Aside from the first eight pairs of features, the excellent practical decoupling of the other ones has been favored in this case by using a random distribution (zero-mean uni-variate white Gaussian noise) for the samples, with features whose expected values are precisely the reference values used in the NeN decoupling algorithm. As explained in the theory (Subsection~\ref{subsec:vs}), exact decoupling is also obtained for samples on the reference manifolds, even if Frobenius condition does not hold for the output features. For large enough samples, sample feature values do not deviate much from their expected values, and, as a consequence of the features' smoothness, the gradients of these samples will also be close to orthogonal.   
 To test how the decoupling quality degrades when using real samples instead of pseudo-random ones, we have tested the method, again, with the referred collection of 112 textured $23\times23$ pixel patches. We first note that the amount of mutual coupling between features is almost exactly the same as for the white noise case, a relevant fact that adds support to the assumptions made in Section~\ref{sec:local_decorrelation}: the angle between features is fairly constant for each couple of features, both within a given distribution/collection (see the relatively small amplitude of error bars) and also across different distributions/collections (compare panel (a) with (b) and (c) with (e)). 
 We can say that, in this case, it is especially accurate to say that the local covariance matrix is fairly independent of $\xv$, behaving the Jacobian pretty close to a moving frame. 
 As for the decoupling quality for this collection of real photographic patches, we now observe than in about half of the pairs the decoupling is either perfect (first eight couples) or almost perfect. For the other half, DO is still very moderate and much smaller than for the original pairs, in the great majority of cases. 
 In Table~\ref{tbl:error_results} we can see that, for real photographic images, the average absolute DO has been reduced in a factor 6, approx., whereas for Gaussian samples it has been reduced  65 times.
 
 \begin{table}[t]
	\begin{center}
	\caption{\label{tbl:error_results} Average ($\pm\sigma$) absolute deviation from orthogonality ($|\text{DO}|$), in degrees, between feature's gradients for the MSM, VF, $\text{DF}_{\text{MSM}}$ and $\text{DF}_{\text{VF}}$ sets of features.}
	\begin{tabular}{|c||c|c|}
	\hline 
				           & Gaussian & Textures \\

\hline
\hline

MSM    & 32$\pm$26 & 59$\pm$21\\
  \hline 
$\text{DF}_{\text{MSM}}$   & 2$\pm$3 & 9$\pm$17\\
\hline
\hline 
VF    & 13$\pm$10 & 12$\pm$11\\
\hline 
$\text{DF}_{\text{VF}}$   &0.2$\pm$0.3  & 2$\pm$3\\
\hline 
	\end{tabular}
	\end{center}
	\label{tab:error_angle}
\end{table}

\begin{figure*}[ht]
\begin{subfigure}{.45\textwidth}
  \centering
  \includegraphics[width=1\linewidth]{./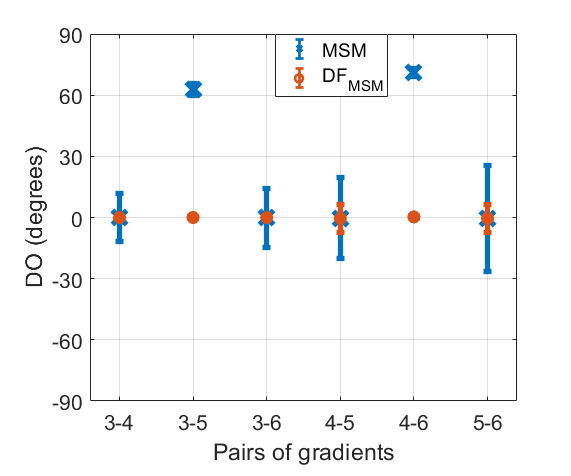}  
  \caption{DO between different pairs of gradients. Gaussian case.}
  \label{figAE:sub-first}
\end{subfigure}
\begin{subfigure}{.45\textwidth}
  \centering
  \includegraphics[width=1\linewidth]{./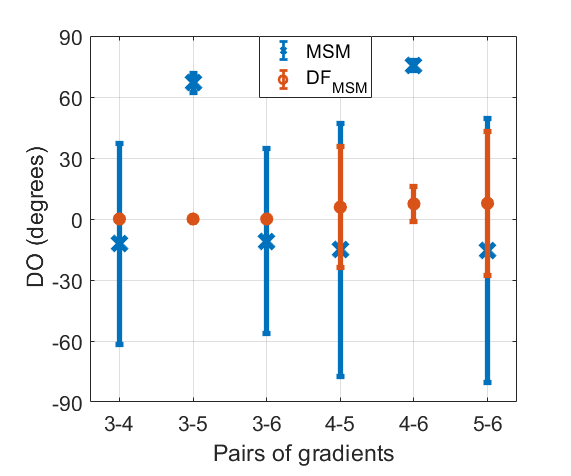}  
  \caption{DO between different pairs of gradients. Textures case.}
  \label{figAE_tex:sub-first_textures}
\end{subfigure}
\begin{subfigure}{.24\textwidth}
  \centering
  \includegraphics[width=1\linewidth]{./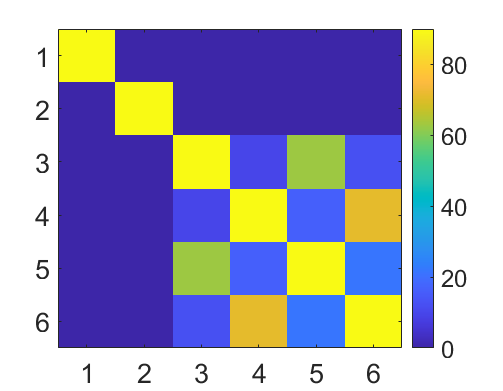}  
  \caption{MSM set. Gaussian case.}
  \label{figAE:sub-second}
 \end{subfigure}
 \begin{subfigure}{.24\textwidth}
  \centering
  \includegraphics[width=1\linewidth]{./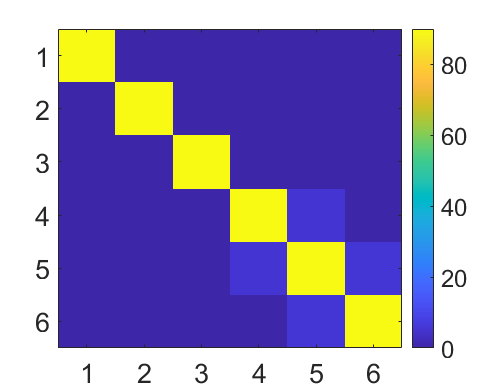}  
  \caption{$\text{DF}_{\text{MSM}}$ set. Gaussian case.}
  \label{figAE:sub-third}
 \end{subfigure}
 \begin{subfigure}{.24\textwidth}
  \centering
  \includegraphics[width=1\linewidth]{./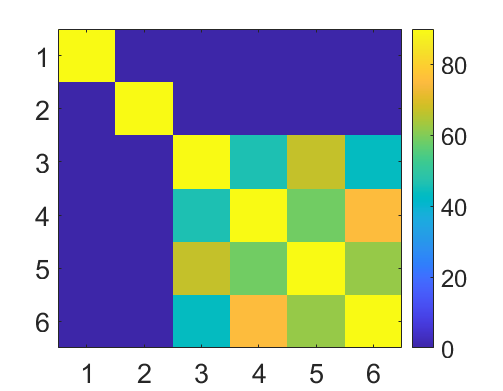}  
  \caption{MSM set. Textures case.}
  \label{figAE_tex:sub-second_textures}
 \end{subfigure}
 \begin{subfigure}{.24\textwidth}
  \centering
  \includegraphics[width=1\linewidth]{./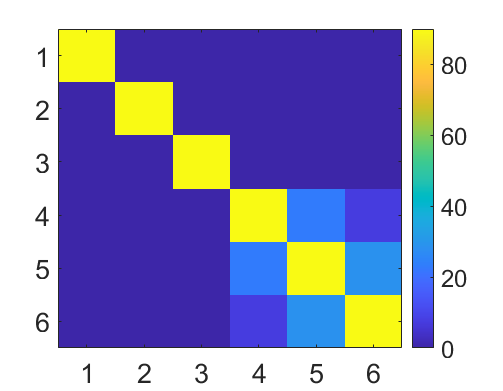}  
  \caption{$\text{DF}_{\text{MSM}}$. Textures case.}
  \label{figAE_tex:sub-third_textures}
 \end{subfigure}
 
\caption{Deviation from orthogonality (DO), in degrees, for the different pairs of gradient's features for the standardized marginal moments (MSM) and the corresponding decoupled sets ($\text{DF}_{\text{MSM}}$). (a), (c) and (d) show results for the Gaussian case, $N=512$. (b), (e) and (f) show results for the Textures case, $N=529$ (23 $\times$ 23 pixels). In (a)-(b) positive values indicate acute angle and negative obtuse. In (c-f) the color level represents the average of the absolute value of DO.} 
\label{fig:figAE_MSM}
\end{figure*}

\begin{figure*}[h]

\begin{subfigure}{\textwidth}
  \hspace{-2cm}
  \includegraphics[width=1.2\linewidth]{./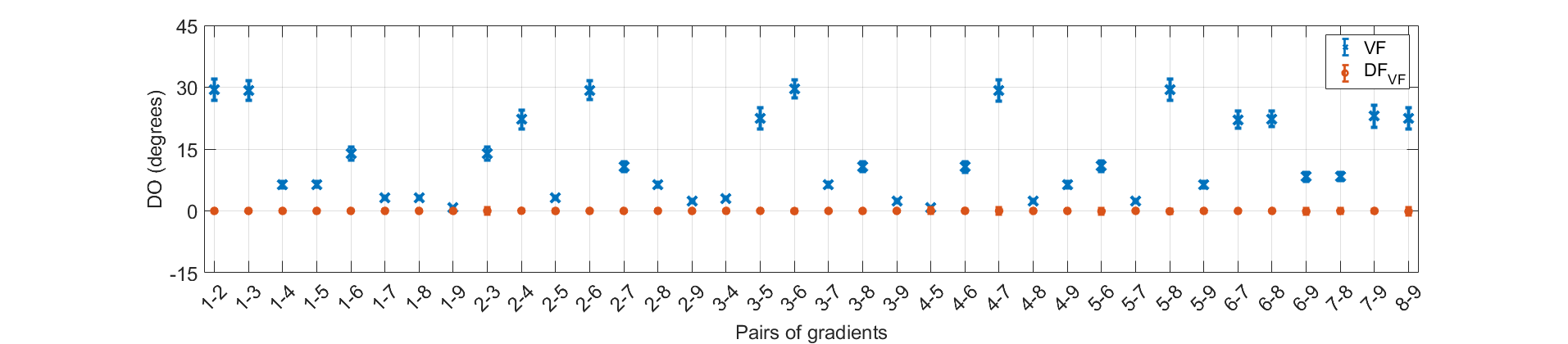}  
  \caption{DO between different pairs of gradients. White Gaussian noise case.}
  \label{figAE:sub-first_tex}
\end{subfigure}

\begin{subfigure}{\textwidth}
  \hspace{-2cm}
  \includegraphics[width=1.2\linewidth]{./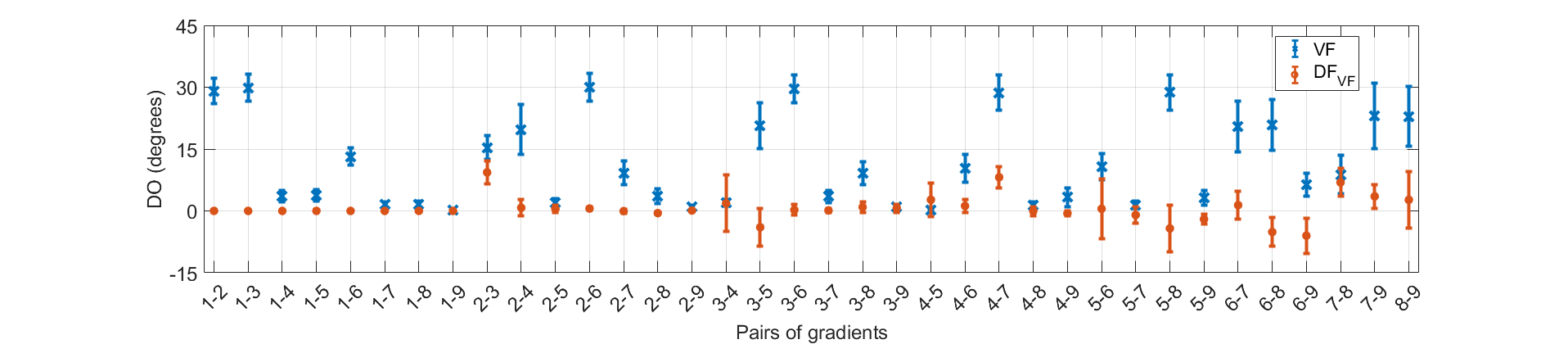}  
  \caption{DO between different pairs of gradients. Textures case.}
  \label{figAE_tex:sub-first_textures_2}
\end{subfigure}

 \begin{subfigure}{.24\textwidth}
  \centering
  \includegraphics[width=1\linewidth]{./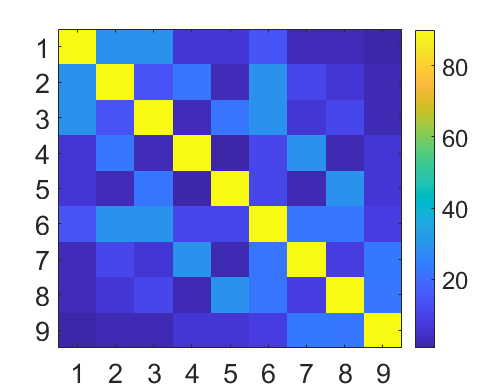}  
  \caption{VF, White Gaussian case.}
  \label{figAE:sub-second_filt}
 \end{subfigure}
 \begin{subfigure}{.24\textwidth}
  \centering
  \includegraphics[width=1\linewidth]{./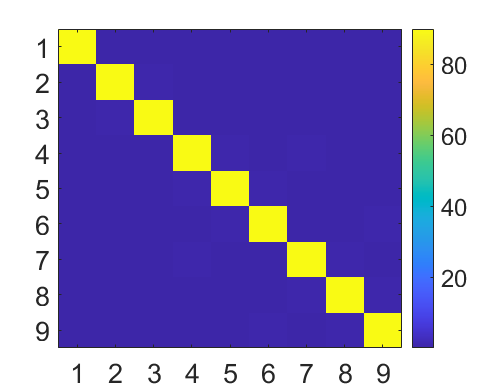}  
  \caption{$\text{DF}_{\text{VF}}$, White Gaussian case.}
  \label{figAE:sub-third_filt}
 \end{subfigure}
 \begin{subfigure}{.24\textwidth}
  \centering
  \includegraphics[width=1\linewidth]{./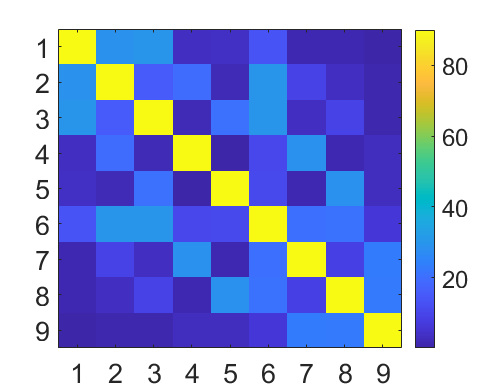}  
  \caption{VF, Textures case.}
  \label{figAE_tex:sub-second_filt_textures}
 \end{subfigure}
 \begin{subfigure}{.24\textwidth}
  \centering
  \includegraphics[width=1\linewidth]{./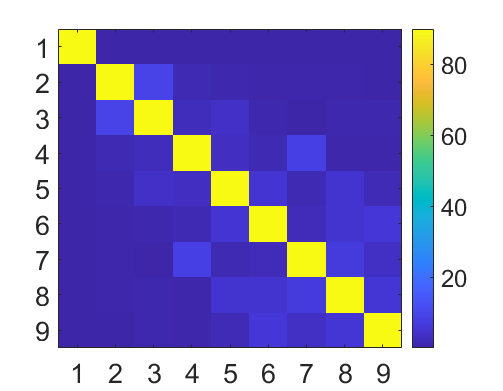}  
  \caption{$\text{DF}_{\text{VF}}$, Textures case.}
  \label{figAE_tex:sub-third_filters}
 \end{subfigure}
\caption{Deviation from orthogonality (DO), in degrees, for the different pairs of gradient's features for the second-order moment at the output of a filter bank (VF), and the corresponding decoupled set ($\text{DF}_{\text{VF}}$). We used  $N=529$ (23 $\times$ 23 pixels). (a), (c) and (d) show results for the Gaussian case. (b), (e) and (f) show results for the Textures case.  In (a)-(b) positive values indicate acute angle and negative obtuse. In (c-f) the color level represents the average of the absolute value of DO.}
\label{fig:figAE_VF}
\end{figure*}

\subsection{Statistical regression}
\label{subsec:regression}
In this section we focus on the estimation of the parameters of a distribution that best describe a dataset\footnote{Some results of this subsection have been presented in~\cite{Martinez:ICASSP:2020}.}. Different approaches have been proposed in the literature for different families of parametric distributions, such as the classical maximum likelihood estimation or the method of moments.    
We approached the estimation task as a regression problem, where the parameters of the distribution are estimated from a set of global features obtained from the observed data. Specifically, we tested, as global features: (i) a set of classical MSM, and (ii) the corresponding decoupled set ($\text{DF}_{\text{MSM}}$). In order to thoroughly compare the descriptive capabilities of both sets of features, we used several regression methods, namely: linear regression models (LRM), regression trees (RT), support vector regression (SVR), Gaussian process regression (GPR), ensembles of trees (ET) and neural networks (NNR). All of these methods are implemented in the Regression Learner App, \textregistered Matlab. For reproducibility purposes, we used the hyper-parameters set by default in the referred app. Specific information about implementation, hyper-parameters selection and methodological details can be found in ~\cite{Matlab}. 

In our experiments we used different statistical distributions, specifically: generalized Gaussian distribution (GGD), Gamma distribution (GMD), and absolute value of a Normal distribution raised to a positive number (GND). The shape of these distributions depends on a shape parameter ($\beta$), and the regression problem consists in estimating  $\beta$ from an observed data set. See Appendix \ref{appendix:reg-exp} for the expressions of the probability density functions of these distributions and their dependence with $\beta$.

Let $\xv$ represent an $N$-D vector of i.i.d. samples drawn from a GGD, GMD or GND distributions (we generated the samples following~\cite{Generalized_Gaussian1,Generalized_Gaussian2} for the GGD, and~\cite{Gamma_sampling} for the GMD, using the \textregistered Matlab function {\em gamrnd.m}), normalized to have zero mean and unitary variance.
For the GGD and GMD cases, given that the kurtosis of these distributions changes very fast for small values of the $\beta$ parameter, we defined $\beta=2^{A}$ and sampled uniformly the exponent $A$ in $[-3, 3]$, resulting in $\beta$ ranging in $[1/8,8]$. In this way, we obtained a quite uniform distribution of kurtosis values. In the GND case, $\beta$ was sampled uniformly in the range $[1, 6]$. 

MSM features and their corresponding decoupled features, $\text{DF}_{\text{MSM}}$, were obtained from the third (skewness) to the sixth order. 
This led to two sets of 4 dimensional predictors $\{f_j(\xv), j = 3,\dots, 6\}$ and $\{\g_j(\xv), j = 3,\dots, 6\}$ for each parameter $\beta$. We compared the $\beta$ prediction accuracy of these two sets. 
We generated $d=2048$ vectors $\xv$ of different lengths $N$ of i.i.d. samples, thus having $2048$ pairs of predictors ($\{f_j\}$ or $\{\g_j(\xv)\}$) and targets (known $\beta$ values). 
We averaged 100 5-fold cross-validation runs to measure the accuracy of the methods in terms of the RMSE in the estimation of the exponent $A=\log_2\text{$\beta$}$ in the GGD and GMD cases, and the estimation of $\beta$, in the GND case. 

\begin{figure*}[ht]
\begin{subfigure}{.33\textwidth}
  \centering
  \includegraphics[width=1\linewidth]{./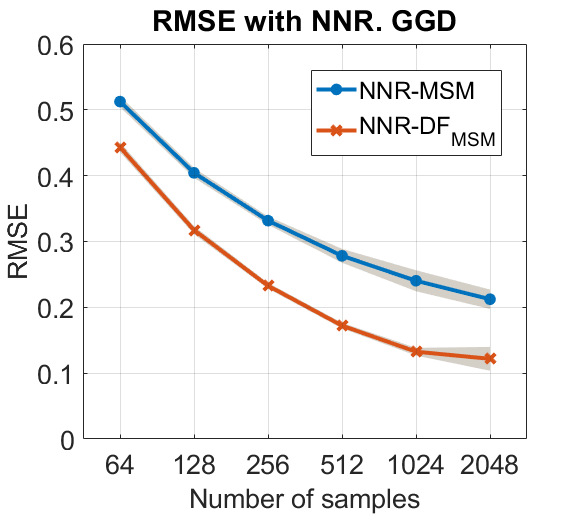}  
  \caption{GGD distribution}
  \label{figRMSE:sub-first}
\end{subfigure}
\begin{subfigure}{.33\textwidth}
  \centering
  \includegraphics[width=1\linewidth]{./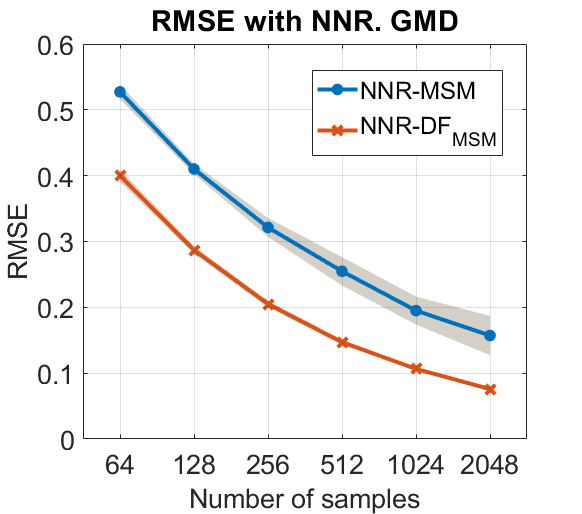}  
  \caption{GMD distribution}
  \label{figRMSE:sub-second_2}
  \end{subfigure}
 \begin{subfigure}{.33\textwidth} 
    \includegraphics[width=1\linewidth]{./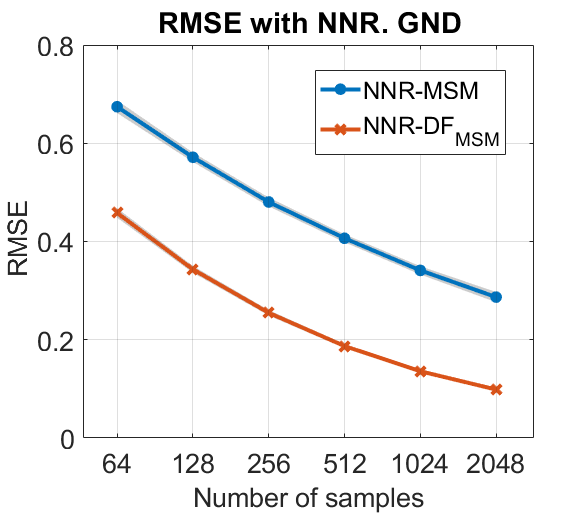}  
  \caption{GND distribution} 
  \label{figRMSE:sub-third_2}
\end{subfigure}
\caption{RMSE of the estimated parameters as a function of the number of samples $N$ for the NNR regression method applied to three density functions (see text for details).}
\label{fig:figRMSE}
\end{figure*}
Figure \ref{fig:figRMSE} shows the RMSE of the results as a function of $N$ for the MSM and $\text{DF}_{\text{MSM}}$ sets of descriptors, by using the NNR regression method, the method providing the best results for almost all the tested cases (see Appendix~\ref{appendix:reg-exp} for more detailed results). Figure \ref{fig:figRMSE}(a) shows the results for GGD, Figure \ref{fig:figRMSE}(b) for GMD and Figure \ref{fig:figRMSE}(c) for GND.
The shadow area represents the standard deviation across 100 repetitions of the experiment.
The proposed $\text{DF}_{\text{MSM}}$ clearly outperformed MSM for all the tested distributions and sample sizes. For instance, in the GGD case, using the proposed $\text{DF}_{\text{MSM}}$ the RMSE was reduced by factors of $0.50$, $0.54$, $0.63$, $0.68$, $0.76$, and $0.86$, for $N=2048, 1024, 512, 256, 128$, and $64$, respectively. 

Tables \ref{tbl:GMD_results}, \ref{tbl:GG_results} and \ref{tbl:GN_results} in Appendix \ref{appendix:reg-exp} show the RMSE obtained using MSM and DF descriptors for the different sample sizes $N$ and regression methods, for the GGD, GMD and GND distributions, respectively. The best regression method for each $N$ is highlighted in bold.
Our decoupled descriptors $\text{DF}_{\text{MSM}}$ outperformed the MSM across all the compared regression methods and sample sizes in 105 out of 108 cases ($97.2\%$), which shows the robustness and generalization ability of our approach.

\subsection{Texture classification}
\label{subsec:texture_classification}
In this section we apply the proposed method for texture classification in two different settings: (i) comparing standardized moments (MSM) with their corresponding decoupled features $\text{DF}_{\text{MSM}}$, both applied to a set of subbands, the output of a filter bank (Section \ref{subsec:marg_mom}); (ii) using features that directly are defined as marginal moments at the output of a filter bank (MF), and their corresponding fully decoupled features, $\text{DF}_{\text{MF}}$ (Section \ref{subsec:dec_TIL}).

\subsubsection{Texture classification based on marginally decoupled moments in a filter bank}
\label{subsec:marg_mom}
Here\footnote{This subsection is a summary of the results published in~\cite{Portilla:ICIP:2018}.} we compare the performance of two classifiers using features derived (i) from MSM of order 2nd to 10th, (ii) from a generalized form of Ref.~\cite{Blest2003}, and (iii) from the corresponding decoupled features $\text{DF}_{\text{MSM}}$, all of them at the output of subbands of the TILs representation~\cite{Portilla:TIP:2015}. We selected 54 textures of 640$\times$640 pixels from the Broadtz \cite{brodatz} database under the criterion of looking homogeneous in 64$\times$64 pixel patches\footnote{The list of selected textures and the \textregistered Matlab code of the experiments is available 
in https://www.researchgate.net/project/Nested-Normalizations-for-Decoupling-Global-Features.}.
Every texture was divided into $10\times 10$ disjoint $64\times64$ patches. The problem consisted in classifying patches in their corresponding textures. 
To extract the features we first applied the TILs representation~\cite{Portilla:TIP:2015} and discarded the low-pass band (having 8 subbands). 
Then, for every subband of every patch, three sets of features were obtained: (i) classical MSM features $\{f_j(\xv), j = 2,\dots, 10\}$ 
(something commonly used to characterize textures, but to a lower order, see e.g.\cite{Mandal96imageindexing}); (ii) modified moment set: same as MSM, except that now even order moments are shift-minimized (${\tilde \mu_n} = \min_{\alpha_n} \E\{({\hat x} - \alpha_n)^n\}$, ${\hat x}$ being the standardized observation)~\cite{Blest2003}; and (iii) proposed marginally decoupled moments $\text{DF}_{\text{MSM}}$ $\{\g_j(\xv), j = 2,\dots, 10\}$. 

In order to quantify the redundancy between features, we estimated the mutual information ($\text{MI}$) between pairs of features~\cite{Peng:PAMI:2005}. Some mean values across subbands are shown in Table~\ref{tbl:MI_results}.
\begin{table}[t]
	\begin{center}
	\caption{\label{tbl:MI_results} Averaged mutual information.}
	\begin{tabular}{|c||c|c|c|c|c|c|}
	\hline 
		Features				           & (3,4) & (3,5) & (4,5) & (3,6) & (4,6) & (5,6)\\
  \hline 
  \hline 
 MSM   & 0.26 &  0.61  & 1.39 & 0.55 & 3.10 &  3.11   \\         
  \hline 
Mod.~\cite{Blest2003}        & 0.23 & 0.61 & 1.39 & 0.55 & 3.07 & 3.03       \\ 
\hline
$\text{DF}_{\text{MSM}}$ & 0.22 & 0.04 & 0.08 & 0.09 & 0.54 & 0.09 \\
\hline
	\end{tabular}
	\end{center}
	\vspace{-0.7cm}	
\end{table}
We see that, except for the (3,4)-th case, the proposed $\text{DF}_{\text{MSM}}$ features present a drastic reduction of redundancy compared to the classical MSM and modified~\cite{Blest2003} ones. 
We recall the reader that the aim of using decoupled features is not to compensate for the statistics of the data (to which the decoupling method is totally transparent), like non-linear ICA aimes for, but rather to avoid adding spurious coupling in the processed data, leaving only the dependencies that are effectively caused by the input data statistics. So, perfectly decoupled features will generally reduce the mutual information among features, but will not necessarily remove it, as that depends on the data statistics (see, e.g., what happened in the example of Fig.~\ref{fig:balls}). 
Figure \ref{fig:classification} shows a 3-D subset of the MSM (left) and proposed $\text{DF}_{\text{MSM}}$ (right) features (i.e., before and after our decoupling method) for two texture classes, to illustrate the decoupling effect on the data distribution in the feature space.  
Shown data correspond to a single subband (number 5 in the representation),
of textures D103 and D111,
selecting just 3 features for the MSM ($f_3(\xv)$, $f_4(\xv)$ and $f_5(\xv)$\footnote{Note that $f_3(\xv)$ and $f_4(\xv)$ are equal to the classical definition of sample skewness and kurtosis, respectively.}) and the proposed $\text{DF}_{\text{MSM}}$ ($\g_3(\xv)$, $\g_4(\xv)$ and $\g_5(\xv)$) sets. We include the projections onto the three orthogonal planes. 
The decoupling between every pair of features is apparent, specially for the (3,5)-th and (4-5)-th order cases.

\begin{figure*}[ht]
\centering
\includegraphics[width=0.96\linewidth]{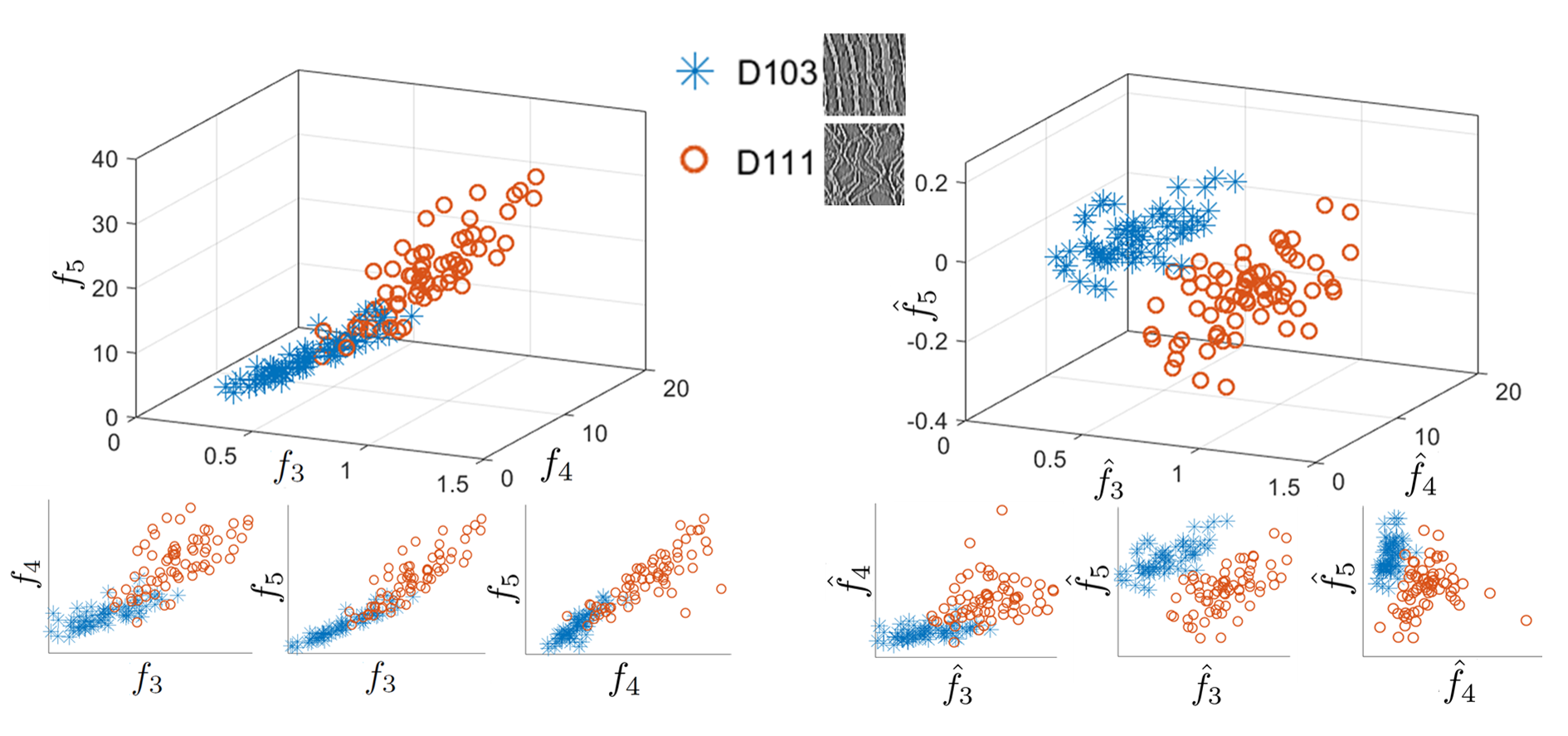}  
\vspace{-0.3cm}
\caption{Illustrating the decoupling effect in the feature space on two classes of texture patches. Left: Using classical MSM features; Right: Using proposed $\text{DF}_{\text{MSM}}$ features. }
\label{fig:classification}
\end{figure*}

We used two classifiers: a na\"\i ve (univariate) Gaussian and a parameter-optimized Support Vector Machine (SVM) using Radial Basis Functions\footnote{We thank the authors of the Pattern Recognition Toolbox (http://covartech.github.io), which we used in preliminary experiments.}. 
We applied cross validation with 4 folds, and averaged 8 runs for each result.
Figure~\ref{fig:results} shows the test classification results as a function of the order of the moments included in the feature's set for the three compared sets and the two classifiers (see legend).
We observe a totally different behavior between using classical MSM and modified~\cite{Blest2003} moments,  vs. the marginally decoupled ones $\text{DF}_{\text{MSM}}$: whereas the former achieve their optima when using only variance ($f_2(\xv)$), skewness ($f_3(\xv)$) and kurtosis ($f_4(\xv)$), roughly achieving 3\% and 2\% error ratios for na\"\i ve and SVM, respectively,  the latter keep on decreasing the error when adding higher-order features, reaching $1.34\%$ and $0.86\%$, respectively, for $n=10$. We see how~\cite{Blest2003} produces just a marginal improvement. 
It is also very significant how, for MSM, the na{\"\i}ve method behaves very differently from SVM, whereas, for $\text{DF}_{\text{MSM}}$, results of SVM and na\"\i ve classifiers run in close parallel.
We believe this is due to the strong assumption made by the na{\"\i}ve method (namely, that features are mutually independent), which holds approximately true {\em after} the feature decoupling (as shown in Table 3), but not before.
\begin{figure}
\centering
\includegraphics[width=1.0\linewidth]{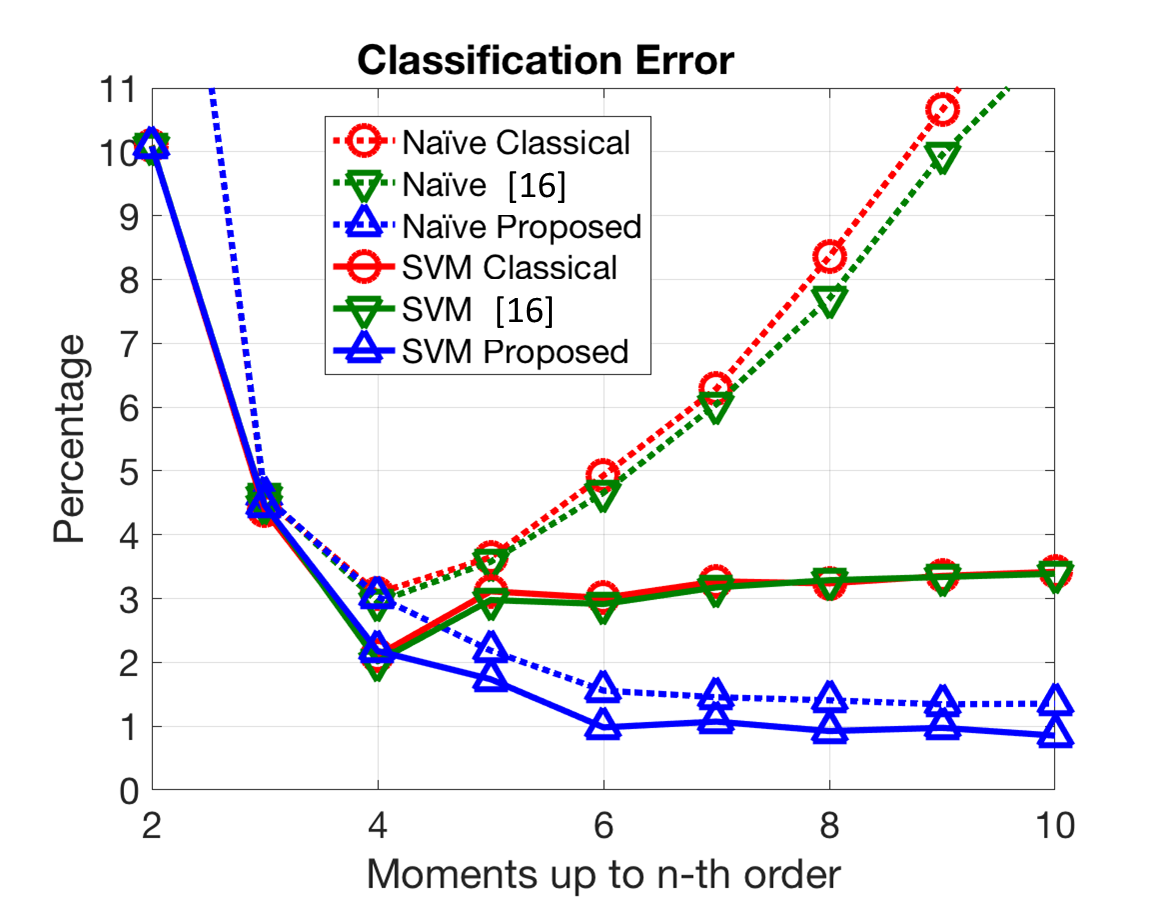}  
\caption{Texture patch classification, 1st experiment. Test classification error (\%) as a function of the order of the moments included in the feature's set.} 
\vspace{-0.1cm}
\label{fig:results}
\end{figure}

\subsubsection{Texture classification based on jointly decoupled moments in a filter bank}
\label{subsec:dec_TIL}

In these experiments we compared the performance of a classifier trained using three different sets of features, namely: (i) MSM measured at the output of a filter bank;
(ii) the decoupled standardized moments $\text{DF}_{\text{MSM}}$ (same as in the previous experiment), without considering the coupling caused by the filter bank; and (iii) fully decoupled features, $\text{DF}_{\text{MF}}$, using as original features marginal moments at the outputs of the filter bank. 
The three sets were obtained from order 2 to 6. 
The first 20 textures from the Broadtz database \cite{brodatz} were taken. 
The upper left quarter of every texture was normalized and divided into 25 disjoint, $64 \times 64$ pixel, patches. 
The problem consisted of classifying these patches into one of the 20 texture classes. 
To extract the features that characterized every patch we first applied the TILs representation, using the 9 subbands (thus including the low-pass residual, not used in previous section). Then, for every subband, the above described sets of 45 features (5 moments $\times$ 9 subbands) were calculated.
We used two classifiers: a Na\"\i ve Bayes and a parameter-optimized Support Vector Machine (SVM) using Gaussian kernels \cite{Burges1998}.
We applied cross validation with 10 folds, and averaged 200 runs for each result.
Table \ref{tab:Results_class} shows the mean $\pm$ standard deviation test classification error across runs when using the MSM (1st column), the partially decoupled $\text{DF}_{\text{MSM}}$ \cite{Portilla:ICIP:2018} (2nd column), and the fully decoupled features $\text{DF}_{\text{MF}}$ 
(3rd column). First row shows results using  Na\"\i ve Bayes, and second using SVM.
\begin{table}[t]
	\begin{center}
	\caption{\label{tab:Results_class} Texture patch classification, 2nd experiment. Test classification error (\%).}
	\setlength{\tabcolsep}{5pt}
  \scalebox{1}{
	\begin{tabular}{|c|c|c|c|}
	\hline 
						           & MSM & $\text{DF}_{\text{MSM}}$  \cite{Portilla:ICIP:2018}  &$\text{DF}_{\text{MF}}$ \\
  \hline 
  Na\"\i ve Bayes   	   & 9.1 $\pm$  0.4 &  7.4 $\pm$ 0.3   & 4.0 $\pm$ 0.3        \\         
  \hline 
	SVM         &  5.2 $\pm$ 0.5 & 4.5 $\pm$ 0.4   & 3.1 $\pm$ 0.4   \\ 

	\hline

	\end{tabular}}
	\end{center}
	\vspace{-0.7cm}	
\end{table}
We observed that the probability of error is greatly reduced when using $\text{DF}_{\text{MSM}}$ in comparison with MSM. In addition, fully decoupled features $\text{DF}_{\text{MF}}$ also significantly improved the results of the partially decoupled set~\cite{Portilla:ICIP:2018}. 
Using the Na\"\i ve Bayes classifier, we see that the error obtained by the fully decoupled set $\text{DF}_{\text{MF}}$ decreased the error in a factor of 0.54 and 0.44 w.r.t. $\text{DF}_{\text{MSM}}$ and original MSM sets, respectively.
We also did the experiment of incrementally increasing the highest order in the set of features, from just the variance to including all orders from 2 to 6.
Significantly, the error was minimized when including high order $\text{DF}_{\text{MF}}$, up to 6, while for the MSM case the minimum error was obtained when including features up to the fourth order (kurtosis).
This result is consistent with the one reported in Fig.\ref{fig:results}. 

It is especially relevant, at this point, to recall that the MF set (for orders above 2) does not fulfill Frobenius condition (as mentioned in Section~\ref{sec:study_cases}). This implies that decoupling solutions do not even exist within the theoretical framework presented here. Nevertheless, Algorithm~\ref{alg:nested_normalizations_narrow_path} (NeN, narrow path) provides a well-defined transformation that, as empirically shown in Section~\ref{subsubsection:empirical-gradientsVF} for the case of the second-order moments at the output of a filter bank, greatly reduces the amount of coupling between the features' gradients. This translates here into a substantial performance boost when applied to texture classification.

\section{Conclusions and Future Work}
\label{sec:conclusion}
We have presented a new mathematical and algorithmic framework for, given a set of differentiable functions acting as global data descriptors, obtaining a closely related set such that its gradients are mutually orthogonal. We have set the conditions under which this can be done hierarchically and progressively, adding a new feature at a time, and devised a new family of algorithms based on nested normalization operations. We have also studied the need of adding small perturbations in some cases, and devised some specific methods for that.

We have shown, first, that the proposed method allows for locally decorrelating features of statistical distributions, and why this has a positive impact in discriminating close values of statistical parameters. 
We have also tested empirically (with both real textured image patches and pseudo-random numbers, for some distributions) the degree of accomplishment of mutual decoupling (gradient orthogonality) for different global features, obtaining results that are both practically interesting and consistent with the theory.

We have applied our decoupling methods to marginal moments, both in the pixel domain and at the output of a filter bank. Using the new decoupled features as descriptors with state-of-the-art machine learning methods we have obtained a dramatic decrease in error in regression problems (over simulated random variables, under three different distributions) and classification (over real textured image patches), as compared to using classical standardized moments. It is worth noting that we obtained substantial improvements in classification accuracy even when theoretical conditions for perfect decoupling were not met. 

It is noteworthy that, applying our decoupling method to the first three raw moments results into their standardized counterparts. In addition, we obtained for the first time, a (quasi-analytic) normalized fourth-order moment that is decoupled from mean, variance, and skewness, which we have termed {\em orthokurtosis}. For higher orders, we obtained other new well-defined features, numerically computable, which present a much-decreased amount of coupling. Finally, although not fully developed here, natural application fields for our method go beyond analysis and include promising style transfer and synthesis techniques~\cite{MMSP2020,SIIMS2022}.

In future work we want to explore the extension of the deterministic decoupling methods for non-hierarchical sets, as well as its applicability for improving economy and robustness in ANNs, by decoupling their features, either at training phase or {\em a posteriori}. To conclude, a promising research area is that of exploring actual biological mechanisms of adaptation involving decoupling/cross-invariance, in perceptual neural science. 

\appendices

\section{Equilibrium points of gradient systems}
\label{appendix:grad-systems}

In this section we study equilibrium points for the gradient system \eqref{grad-system}.

If $f$ is continuous we must have $f(\overline\Omega)=[a,b]$ (here $a,b$ maybe $\pm\infty$ if the set is not bounded, but the analysis can be performed similarly). We would like to see if, given $x_0\in \Omega$, we can reach all values in the range of $f$ by moving in the direction of the gradient.
We look at stationary points in $\overline \Omega$ that are not a global maximum or minimum points.  The basin (or domain) of attraction of an equilibrium point $\bar \xv$ is the set of all initial conditions with
solutions that tend to it. Let us consider all possible cases according to the behavior of the linearized equation  $\xv=-D^2f (\bar\xv)\xv$ (see, for instance, the introduction in \cite{ODEbook}). Assume that $D^2f $ has no zero eigenvalues:
\begin{itemize}
\item If the eigenvalues of the Hessian $D^2f$ are all strictly positive then $\bar \xv$ is a sink.
\item Similarly for max.
\item If the Hessian has a negative real eigenvalue then the equilibrium is unstable. If all eigenvalues are nonzero, then the dimension of the unstable manifold is equal to the number of negative eigenvalues counting multiplicity. The dimension of the unstable manifold is the number of negative eigenvalues counting multiplicity. The tangent spaces of these manifolds are the spans of the corresponding eigenspaces so are orthogonal at the equilibrium point.
\end{itemize}
If some eigenvalue is zero, then the picture is much more complicated and we will make additional assumptions in order to avoid technicalities:
\begin{itemize}
\item If  $\bar x$  is a local minimum of $f$ (not necessarily strict), then $\bar x$ is a Lyapunov-stable equilibrium point of the gradient flow of a real-analytic function $f$ \cite[Section 3]{Simon}. Thus we 
pose the additional constraint
that all local maxima and minima are also global.

\item If 
$\xv$ is a degenerate saddle, we assume that its basin of attraction has a lower dimension and thus, saddles can be avoided by introducing a suitable perturbation. We will actually assume that the union of all basins of attraction, denoted by $\Lambda$, is also lower dimensional.
\end{itemize}

\section{Frobenius Theorem}\label{appendix:Frobenius}

Let  $\S= \{f_i:\Omega \rightarrow \R, i = 1\dots M \}$ be a set of features.
Fix $p$ a positive integer, $p\leq M$. A $p$-dimensional \emph{distribution} $\mathcal D$ in $\Omega$ is a (smooth) choice of a $p$-dimensional subspace of the linear span of the $\{\nabla f_i(\xv)\}_{i=1}^M$
 for every point in $\xv\in\Omega$. We denote the plane at $\xv$ by $\mathcal D(\xv)$.  We say that the set of features  $\mathcal S$ is of \emph{maximal rank} in $\Omega$ if the gradients are linearly independent at every point $\xv$ so that the distribution spanned by the gradients has $p=M$. This implies, in particular, that we must choose $\Omega$ not containing critical points of the $f_i$.

We say that $\mathcal D$ satisfies the \emph{Frobenius condition} if for every $f,g\in\mathcal S$, writing the gradient vectors as
\begin{equation}\label{Frobenius-XY}
X=\sum_\ell\partial_{x_\ell}f\frac{\partial}{\partial x_\ell},\quad Y=\sum_j\partial_{x_j}g\frac{\partial}{\partial x_j},
\end{equation}
we have that the combination
\begin{equation}\label{Frobenius-check}
XY-YX=\sum_{j,\ell}\left(\partial_{x_\ell}f \partial_{x_\ell x_j}g-\partial_{x_\ell}g\partial_{x_j x_\ell}f\right)\frac{\partial}{\partial x_j}
\end{equation}
also belongs to $\mathcal D$.

The classical Frobenius theorem \cite[Chapter 6]{Spivak} states that  a distribution $\mathcal D$ that satisfies the Frobenius condition in $\Omega$ is integrable. 
That is, one can find a submanifold $\I(\xv_0,\S)$ passing by $\xv_0$ whose tangent hyperplane at each location $\xv$ is the linear span of $\{\nabla f_i(\xv), i=1\dots M\}$. In addition, this submanifold is $p$-dimensional, and $\Omega$ is foliated by these submanifolds. A different presentation of Frobenius theorem from the dynamical systems point of view can be found in \cite[Chapter VI]{Hartman}.\\

\subsection{Back to Proposition \ref{prop:iff}}\label{appendix:iff}
We give now the proof of the {\em only if} statement  in Proposition \ref{prop:iff}. Assume that, given set of features $\S$, there exists an invariant mapping $\hat \xv_\S$ exists, and denote  by $J:=J_{\hat \xv_\S}$ its Jacobian matrix. We will show that the gradients of those features in $\S$ satisfy the Frobenius condition C2.  

For this, it is enough to pick two any two features in $\S$, say $f$ and $g$. By the definition of invariant mapping, we must have
\begin{equation*}
\begin{split}
&J \cdot \nabla f=\textbf 0,\\
&J \cdot \nabla g=\textbf 0,
\end{split} 
\end{equation*}
at each point in the domain (which is not written in to simplify the notation).
Differentiating both equations w.r.t. the $\ell$-th coordinate, component by component, we obtain
\begin{equation*}
\begin{split}
&\partial_{x_\ell} J \cdot \nabla f +J\cdot\partial_{x_\ell}\nabla f=\textbf 0,\\
&\partial_{x_\ell} J \cdot \nabla g +J\cdot\partial_{x_\ell}\nabla g=\textbf 0,
\end{split} 
\end{equation*}
from where we can isolate the second derivatives of each feature
\begin{equation*}
\begin{split}
&\partial_{x_\ell}\nabla f= -J^{-1}\cdot\partial_{x_\ell} J \cdot \nabla f,\\
&\partial_{x_\ell}\nabla g= -J^{-1}\cdot\partial_{x_\ell} J \cdot \nabla g.
\end{split} 
\end{equation*}
Now we extract the $j$-th component of both vectors above, to have a formula for $\partial_{x_\ell x_j} f$ and $\partial_{x_\ell x_j} g$. This is,
\begin{equation*}
\begin{split}
&\partial_{x_\ell x_j} f= -\sum_{k,s}(J^{-1})_{jk}\cdot(\partial_{x_\ell} J)_{ks} \cdot \partial_{x_s} f,\\
&\partial_{x_\ell x_j} g= -\sum_{k,s}(J^{-1})_{jk}\cdot(\partial_{x_\ell} J)_{ks} \cdot \partial_{x_s} g.
\end{split} 
\end{equation*}
Next, in order to check Frobenius condition  \eqref{Frobenius-check} we need to calculate
\begin{equation*}
\begin{split}
   \sum_\ell&( \partial_{x_\ell}f \partial_{x_\ell x_j}g-\partial_{x_\ell}g\partial_{x_j x_\ell}f)\\
   &=
-\sum_{\ell,k,s}\Big[\partial_{x_\ell}f\cdot(J^{-1})_{jk}\cdot(\partial_{x_\ell} J)_{ks} \cdot \partial_{x_s} g\\
&\qquad\quad\quad-\partial_{x_\ell}g\cdot(J^{-1})_{jk}\cdot(\partial_{x_\ell} J)_{ks} \cdot \partial_{x_s} f\Big].
\end{split}   
\end{equation*}
If one is able to interchange the indexes $s$ and $\ell$, the above quantity is identically zero. And this is possible since, for a Jacobian matrix, 
\begin{equation*}
\partial_{x_\ell} J_{ks}=\partial_{x_\ell} \partial_{x_s} (\hat \xv_\S)_k=\partial_{x_s} \partial_{x_\ell} (\hat \xv_\S)_k=
\partial_{x_s} J_{k\ell}.
\end{equation*}
We have shown that Frobenius condition \eqref{Frobenius-check} holds for any two features in the set $\S$, which completes the proof.

\subsection{Examples}
\subsubsection{Average of scalar functions}
If the set $\mathcal S$ consists on  features of the form
\[
f_j(\xv) =  \frac{1}{N}\sum_{n=1}^{N}{m_j(x_n)},\quad j=1..M,
\]
for scalar functions $m_j:\mathbb R\to\mathbb R$,
then it clearly satisfies the Frobenius condition. This follows by simple inspection of \eqref{Frobenius-check}, because the Hessians $\partial_{x_\ell x_j}f_i$ are multiples of the identity.

\subsubsection{Second-order moments at the output of a set of filters}
Another example of features that satisfy Frobenius is a set of second-order moments measured at the output of a set of filters. To fix notation, we take two functions
\begin{equation*}
f(\xv)= \frac{1}{N}(\xv*h)^{\odot2},
\quad g(\xv)=\frac{1}{N}(\xv*h')^{\odot2}.
\end{equation*}
Calculate
\begin{equation*}
\begin{split}
&\partial_{x_i}f=
\sum_k (\xv *h)_k h_{k-i}=\frac{2}{N}(\xv *(  h*\tilde h))_i=\frac{2}{N}(\xv * \hat h)_i,\\
&\partial_{x_i x_j}f=\hat h_{i-j},
\end{split}
\end{equation*}
where we have denoted $(\tilde h)_{s}=h_{i-s}$, $\hat h=h*\tilde h$, and similarly for $h'$.
Then, in the notation of \eqref{Frobenius-XY}, 
dropping the multiplicative constants,
\begin{equation*}
\begin{split}
XY-YX&=\sum_{i,j,s} x_s \hat h_{i-s} \hat h'_{i-j}\frac{\partial}{\partial x_j}
-\sum_{i,j,s} x_s \hat h'_{i-s} \hat h_{i-j}\frac{\partial}{\partial x_j}\\
&=
\sum_j \left\{[(\xv * \hat h)*\hat h']_j-[(\xv * \hat h')*\hat h]_j\right\}\frac{\partial}{\partial x_j}
=0,
\end{split}
\end{equation*}
as desired.\\

\section{Adding perturbations}
\label{appendix:crit-perturb}

\subsection{Marginal moments}
\label{subsec:crit_perturb_moments}

\subsubsection{Critical points of the decoupled moments and their basins of attraction}
\label{subsec:critical_points_moments}

In order to apply the methods proposed in Section~\ref{sec:NeNs}, conditions B1 (no local non-global extrema) and B2 (lower dimensional basins for saddles) must hold.
Here we test this, and search for explicit formulas for all critical points of the
new decoupled features.
In particular, this is essential to understand the structure of basins of attraction and  discuss the effectiveness of perturbations.

From the method in Subsection~\ref{subsec:critical_points}, obtaining the critical points of the decoupled features is sort of dual to finding the decoupled gradients within their corresponding reference manifolds. Whereas for the latter we imposed orthogonality on each new gradient with respect to the previous ones (see Subsection~\ref{subsec:higher_order_moments}), for finding the critical points we impose co-linearity on each gradient with respect to the previous ones, see Eq.~\eqref{eq:crit_point_condition_NeNs}.
A particular solution corresponds to finding common  critical points (through order $k$) of the original features, where all gradients for $j\leq k$ vanish. In this case there are no local (non-global) extrema, as gradients are made of monomials, which either have a single minimum at zero, for even orders, or a saddle point at zero and no extrema in $\R$ (odd orders). Therefore, the only solution coming from the original features' gradients vanishing corresponds to $\xv_0^* = {\mathbf 0}$, i.e., at zero all decoupled moments have a critical point, same as the original moments.

For the rest of solutions, we start by solving for the critical points of $\g_2$, denoted by $\xv_1^*$~\footnote{Although for notation simplicity critical points here are denoted as a vector $\xv_j^*$, they actually represent sets of vectors.}, in the equation $\nabla f_2(\xv_1^*) = \lambda_{1,2} f_1(\xv_1^*)$, which, substituting its corresponding expressions, gives us $\xv_1^* = c \v1,\,c\in\R$. No surprisingly, constant signals provide the minimal (zero) variance, the only extreme value of this feature. Moreover, as explained in Subsection~\ref{subsec:critical_points}, decoupled features are not defined at $\xv_1^*$ (being the sample variance null it can not be normalized to one). This manifold only intersects $\RM_1$ in $\xv = {\bf 0}$, and has no intersections with $\RM_2$ and subsequent.
Next, the calculation of the critical points of $\hat{f}_3$ comes from solving for $\xv_2^*$ in $f_3(\xv_2^*) = \lambda_{1,3} f_1(\xv_2^*) + \lambda_{2,3} f_2(\xv_2^*)$. This is a quadratic equation, whose solution is a vector $\xv_2^*$ made of only two arbitrary values, repeated in arbitrary proportions ($p$ and $1-p$), for all the coefficients. If we impose, in addition, $f_1(\xv_2^*)=0$ and $f_2(\xv_2^*)=1$ (conditions of $\xv_2^*\in\RM_2$), we obtain, after some operations, that the only possible values of the coefficients, for $\xv_2^*$ in $\RM_2$, are:
\begin{eqnarray}
x_{2,1}^* & = & \sqrt{p}/(1-p) \nonumber \\
x_{2,2}^* & = & -\sqrt{1-p}/p,
\end{eqnarray}
where $0<p<1$ denotes the proportion of the first value. From this we can easily compute the skewness of $\xv_2^*$, that only depends on $p$, not on the particular values: $\hat{f}_3(\xv_2^*) = (2p-1)/\sqrt{p(1-p)}$. Although the previous expression has no extrema for a continuous $p$, given the discrete nature of $p$ for discrete signals (from $1/N$ to $(N-1)/N$) we obtain that the maximal and minimal skewness are produced when $p$ is either $1/N$ or $(N-1)/N$, that is, when the vector is constant except for one coefficient. The rest of the values of $p$ produce the vast majority of the critical points of $\g_3$, which are saddles. There are no local (non-global) extrema (condition B1).
These critical points have subsequent undefined decoupled moments (see Subsection~\ref{subsec:critical_points}), with the only exception of the case $p=1/2$ (possible just when $N$ is even), the only intersection of $\xv_2^*$ with $\RM_3$, which, having already zero skewness, its (minimal) kurtosis is also its orthokurtosis. One can see that the skewness of a bivaluated vector can not be adjusted by applying a coefficient-wise reversible (monotonous) non-linearity, because the skewness only depends on $p$. 

For $k>3$ we apply the same procedure for obtaining the  critical points of $\g_k$ in $\RM_{k-1}$, namely, solving an algebraic equation of degree $k-1$, and obtaining any combination of distinct $k-1$ solutions for the values in the vector. Same as for the previous case only the $k-2$ distinct proportions ($k-1$ in total, adding up to 1) of each of these $k-1$ different values matter for the computation of $\g_k$. And, again, the vast majority of these critical points are saddles, presenting no local (non-global) extrema (condition B1). For instancce, in the case of $k=4$, critical points have three different distinct values or less, the minimal orthokurtosis point being a degenerate case (having two single values with $p=1/2$, as mentioned above), whereas the maximal orthokurtosis is produced for $p_1 = (N-2)/N$, $p_2 = 1/N$, corresponding to all pixels having the same value, except for two pixels, now having each of these two a different value from the dominant and from each other. These vectors can be adjusted to have the desired mean, variance and skewness, but, having only three distinct values, we run out of degrees of freedom to adjust their orthokurtosis too. Therefore, except for the case when the orthokurthosis is already three (its reference value), $\hat{\xv}_4(\xv_3^*)$ does not exist, and, as a consequence, the fifth order and subsequent decoupled moment are not defined at these points.
Note also that the range of the decoupled features generally change with respect to their original counterparts.
Whereas the second-order moment range does not change when decoupled (it is still $\R^+$), the fourth-order (skewness) is constrained from $\R$  to $[-\frac{N-2}{\sqrt{N-1}},\frac{N-2}{\sqrt{N-1}}]$, and the fourth order (orthokurtosis) from $\R^+$ to $[1,N/2]$. Finally, we note that the existence of higher-order decoupled moments also depends on having a large enough number of samples, $N$, In particular, the last result implies that, for $N<6$, there are no decoupled moment of higher-than-four orders, as the reference value for normalizing the orthokurtosis, 3, is not reachable within its valid range. This limitation also comes from the already explained requirement of having enough distinct values for the samples.

Whereas a typical vector will have more than a few distinct quantization values, and, thus, it will not produce critical points for the first few decoupled moments, a different situation is created by the basins of attraction of the saddles, which, as pointed out above, constitute the vast majority of the critical points.
It is easy to informally check that any vector having coefficients with repeated maximal (or minimal, for odd orders) values lies within the basin of attraction of a saddle.
To illustrate this, let us consider a gray-level image in the range $[0,255]$, with two pixels having the $255$ value. As we increase its fourth-order decoupled moment (the skewness) staying in $\RM_2$, these two values are going to grow at exactly the same pace, relatively to the rest of the coefficients, that, due to the normalization, will get relatively lower and progressively closer to each other. If we keep on increasing the skewness we would approach, in the limit, to an image made of all pixels sharing the same value, except for two pixels, both sharing another value. This mental experiment shows how our original image, having more than one pixel with the maximal value, lies in $\Lambda$, i.e., in the basin of attraction of a saddle. This situation, unless avoided by adding a proper perturbation, provokes the algorithm to eventually get stuck in the saddle, thus not letting the vector to be adjusted along its full range (as shown before, the maximal/minimal skewness is achieved when all samples except for one have the same value). The desired effect of a perturbation is to ``break the tie'', thus allowing the gradient to further advance towards the absolute extrema of the feature.
An analogous reasoning can be applied for higher-order moments.

Then, how likely is that a vector belongs to $\Lambda$? The answer, dealing with vectors having regular density distributions in $\R^N$ is: zero probability (as the basins of attraction of these saddles are lower-than-$N$ dimensional, condition B1). However, the answer for digital (discrete quantized samples) is totally different: potentially {\em very} likely.
Let us assume a vector having $N$ samples quantized in $Q$ levels. Then, the probability of that a particular value $v$ (e.g., the highest) is repeated, assuming a uniform and independent distribution for each of the vector coefficients is given by a binomial distribution:\footnote{This does not pretend to be an accurate estimation of the probability of the repetition of the maximal/minimal value $v$ in a typical signal (e.g., an image). However, it does provide a useful reference value.} $P(n(v)>1) = 1 - (1 - 1/Q)^N - N/Q(1 - 1/Q)^{N-1}$, $n(v)$ being the number of occurrences of $v$. For instance, for a very small size image of $64\times64$ pixels ($N=4096$ - for larger images it gets worse) with pixels ranging from 0 to 255 ($Q=256$), the probability of a given quantization level appearing more than once is extremely high: $1-1.86\times10^{-6}$.
This example shows the importance of adding a proper perturbation to our digital signal $\xv$, as the one proposed in Subsection~\ref{subsec:perturbation}.

\subsubsection{Adding a perturbation}
\label{subsec:perturbation_moments}

In this subsection we propose a perturbation method that
not only ensures that all perturbed coefficient values $x'_i = x_i+\epsilon_i,\,i=1\dots N$ are different (high entropy) but it also maximizes the minimal possible difference between them.   
The latter feature is achieved by using for the perturbation $N$ uniform extra levels within each single quantizing step and assigning a different level to each $\epsilon_i$.
On the other hand, noticeable local oscillations are avoided by choosing a very low frequency pattern for the perturbation, which minimizes its perceptual impact.

Algorithm~\ref{alg:perturbation} explains the process.
First, a random angle tangent is chosen for generating a ramp for the image grid (same method can be straightforwardly generalized to $n$-dimensional grids).
The tangent value must not be a rational $p/q$ number with small $p,q\in\N$, because that would create repeated values on the ramp.
Then, after checking that there are no repeated values on the generated ramp, its cells are sorted in a ranking according to their value, corresponding the number $1$ to the lowest value and $N$ the highest. Finally, these integers are normalized to the interval $[-1/2,1/2)$ and returned as the perturbation. The result is a (slightly curved, sigmoid) ramp made of all different values, having an exactly uniform distribution.

\begin{algorithm}
\begin{algorithmic}[1]
\REQUIRE An empty array of $N_x\times N_y = N$ pixels
\REPEAT
    \STATE Generate a pseudo-random number $r\in [0,1]$
    \STATE Compute a ramp $v(n_x,n_y) = n_x + r*n_y$
    \STATE Check for repeated values in $v$
\UNTIL there are no repeated values in $v$
\STATE Compute $o(n_x,n_y) = rank(v(n_x,n_y))\in \{1\dots N\}$
\STATE $\epsilon(n_x,n_y) = (o(n_x,n_y)-1-N/2)/N \in [-1/2,1/2)$
\RETURN $\epsilon$ ($N_x\times N_y$ array)
\end{algorithmic}
\caption{High-entropy, low-impact perturbation}
\label{alg:perturbation}
\end{algorithm}

\subsection{Second-order moments at the output of a filter bank}
\label{subsec:crit_perturb_filters}

\subsubsection{Critical points, active frequencies and perturbations}
\label{subsec:critical_etc_variance}

Being the sample second-order moment at the output of a filter a positive definite quadratic function, it has no saddle points.
Therefore, in this case, in contrast with Section~\ref{subsec:critical_points_moments}, we do not face the problem of their basins of attraction.

The critical points of the decoupled features are those vectors where the gradients are either zero (critical points inherited from the original features) or co-linear. Substituting Eq. \eqref{eq:gradient_variance} into Eq. \eqref{eq:crit_point_condition_NeNs} it yields:
\[
|H_k(\xiv)|^2 X_k^*(\xiv) = \sum_{j=1}^{k-1} \lambda_{j,k} |H_j(\xiv)|^2 X_k^*(\xiv), \quad\{\lambda_{j,k}\neq 0\}.
\]
This equation always admits the solution $\xv_0^* = {\mathbf 0}$.
In addition, in the case there are regions of the signal spectrum that are not covered by any filter (e.g., signals made of a constant value, for band-pass or high-pass filters), the corresponding vectors having only those frequencies will also be critical points. Apart from the previous solutions, all corresponding to the ``null space'' of the filters' output (which provide the absolute minima of the original features), the equation may only hold if the squared filters are themselves co-linear. Thus, the latter possibility must be prevented -  otherwise {\em all} points in $\bar\Omega$ would be critical!  

We see that, although in this case the decoupled features do not introduce additional critical points, zeros in the frequency domain, both of the signal and of the filters, act as partial ``gradient killers'': signal will not change at those frequencies where kernels are zero, and, similarly, the signal will neither change at those frequencies where the signal itself vanishes.

Whereas in some applications it is normal to ignore some regions of the spectrum that are not useful for a given task, and thus they are left uncovered by the filters bank, it seems advisable, nevertheless: i) to filter out those frequencies (having a support $D$) also in $\xv$ (otherwise those spectral components will remain unchanged in $\xv$), and ii) to introduce a small perturbation $\epsilon$ in $\xv$, such that $\xv+\epsilon$ will not be zero or too small at any frequency in the above defined support $D$. 
Then, a first reasonable concrete choice for the perturbation may be
\begin{equation}
\epsilon = \arg \min_{\zv} \|\zv\|\,s.t.\,|X(\xiv) + Z(\xiv)|\geq \theta, \forall \xiv\in D \nonumber
\end{equation}
which yields, in the Fourier domain, for $\xiv\in D$:
\begin{equation}
E(\xiv) = 
\left\{
\begin{tabular}{ll}
$0$, & \text{if} $|X(\xiv)|>\theta$  \\
$(\theta - |X(\xiv)|)e^{i2\pi\arg(X(\xiv))}$, & \text{if} $\theta>|X(\xiv)|>0$ \\
$\theta e^{i2\pi r}$, & \text{if} $|X(\xiv)|=0$,
\end{tabular}
\right.
\end{equation}
and 0 for $\xiv\not\in D$, where $E(\xiv)={\cal F}(\epsilon)$.
Here $r$ is a uniform random value in $[0,1]$.
The so defined $\epsilon$ is a perturbation of maximal entropy amongst all minimal Euclidean norm ensuring a spectral content above a threshold $\theta$ in $D$.\footnote{A perceptually-based perturbation may be easily obtained from here by considering perceptual metrics/threshold instead.}
The threshold $\theta$ can be chosen as the supreme of the set $\{ \theta: q\left(\xv+\epsilon(\xv,\theta)\right) = q(\xv)\}$, or a similar criterion. Furthermore, depending on the complete set of features, different types of perturbations (see Subsection \ref{subsec:perturbation}) can be fused into a single one fulfilling all the requirements.

\section{Regression experiments}
\label{appendix:reg-exp}
In this section we give some specific details about the regression experiments.
The probability density function of the Generalized Gaussian Distribution (GGD) is given by:

\begin{equation*}
f(x)=\frac{\beta}{2\alpha\Gamma(1/\beta)}e^{-(\left|x-\mu\right|/\alpha)^{\beta}},
\end{equation*}
where $\Gamma$ denotes the gamma function, $\beta$ is the shape parameter, $\alpha$ the scale parameter and $\mu$ the location parameter.
The probability density function of the Gamma Distribution (GMD) is given by:
\begin{equation*}
f(x)=\frac{1}{\Gamma(\beta)\theta^{\beta}}x^{\beta-1}e^{-x/\theta},
\end{equation*}
where $\beta$ is the shape parameter and $\theta$ the scale parameter.

The probability density function of the absolute value of a Normal distribution raised to $\beta$ (GND), $X=|T|^{\beta}$, $T\sim \mathcal{N}(0,\,1)$, is given by, for every positive $\beta$:
\begin{equation*}
f(x)=\frac{2}{\beta \sqrt{2\pi}} x^{(1/\beta)-1}e^{-\frac{1}{2}x^{2/\beta}},
\end{equation*}
where $\beta$ is the shape parameter. Note that, for $\beta=2$, this distribution leads to the chi-squared distribution of one  degree of freedom, $\chi^{2}(1)$.

The RMSE results for the different regression methods and sample sizes $N$ are shown in Tables \ref{tbl:GMD_results} (Generalized Gaussian Distribution distribution, GGD), \ref{tbl:GG_results} (Gamma distribution, GMD) and \ref{tbl:GN_results} (absolute value of a Normal distribution raised to $\beta$, GND). 
The regression methods compared are: linear regression models (LRM), regression trees (RT), support vector regression (SVR), Gaussian process regression (GPR), ensembles of trees (ET) and neural networks (NNR). 
MSM stands for the set of classical marginal standardized moments, and $\text{DF}_{\text{MSM}}$ for its corresponding decoupled set. Highlighted in bold, the regression method that minimizes the RMSE for each $N$ value.

\begin{table}[t]
	\begin{center}
	\caption{\label{tbl:GMD_results} RMSE Results for the different regression methods, Generalized Gaussian distribution (GGD).}

	\begin{tabular}{|c|c||c|c|c|c|c|c|}
	\hline 
	&	$N$				           & LRM & RT & SVR & ET & GPR & NNR\\

\hline
\hline
\multirow{6}{*}{MSM}& 64	   & 0.71 &  0.59  & 0.54 & 0.55 & 0.52 &  \textbf{0.51}   \\         
 \cline{2-8}
& 128    & 0.75 & 0.46 & 0.46 & 0.44 & 0.41 &   \textbf{0.41}       \\ 
 \cline{2-8}
& 256    & 0.85 & 0.36 & 0.44 & 0.35 & 0.34 &   \textbf{0.32}  \\
 \cline{2-8}
& 512    & 0.95 & 0.30 & 0.47 & 0.29 & 0.29 &   \textbf{0.27}  \\
 \cline{2-8}
& 1024    & 1.02 & 0.26 & 0.55 & 0.25 & 0.25 &   \textbf{0.24}  \\
 \cline{2-8}
& 2048    & 1.13 & 0.22 & 0.63 & 0.22 & 0.22 &   \textbf{0.21}  \\
  \hline 
  \hline 
\multirow{6}{*}{$\text{DF}_{\text{MSM}}$}& 64	   & 0.79 &  0.48  & 0.51 & 0.45 & 0.47 &  \textbf{0.44}   \\         
 \cline{2-8}
& 128    & 0.67 & 0.34 & 0.44 & 0.32 & 0.38 &   \textbf{0.31}       \\ 
 \cline{2-8}
& 256    & 0.85 & 0.25 & 0.38 & 0.23 & 0.31 &   \textbf{0.22}  \\
 \cline{2-8}
& 512    & 0.51 & 0.18 & 0.31 & 0.17 & 0.26 &   \textbf{0.17}  \\
 \cline{2-8}
& 1024    & 0.45 & 0.13 & 0.29 & 0.13 & 0.26 &   \textbf{0.13}  \\
 \cline{2-8}
& 2048    & 0.41 & \textbf{0.10} & 0.32 & 0.10 & 0.31 &  0.10 \\
\hline
	\end{tabular}
	\end{center}
	\label{tab:RMSE_GG}
\end{table}
\begin{table}[t]
	\begin{center}
	\caption{\label{tbl:GG_results} RMSE Results for the different regression methods, Gamma distribution (GMD).}

	\begin{tabular}{|c|c||c|c|c|c|c|c|}
	\hline 
	&	$N$				           & LRM & RT & SVR & ET & GPR & NNR\\

\hline
\hline
\multirow{6}{*}{MSM}& 64	   & 0.58 &  0.76  & 0.59 & 0.71 & \textbf{0.50} &  0.52   \\         
 \cline{2-8}
& 128    & 0.45 & 0.59 & 0.46 & 0.56 & \textbf{0.39} &  0.40       \\ 
 \cline{2-8}
& 256    & 0.37 & 0.48 & 0.38 & 0.46 & \textbf{0.31} &  0.31  \\
 \cline{2-8}
& 512    & 0.27 & 0.39 & 0.30 & 0.36 & \textbf{0.23} &   0.24  \\
 \cline{2-8}
& 1024    & 0.21 & 0.29 & 0.22 & 0.28 & 0.21 &   \textbf{0.19}  \\
 \cline{2-8}
& 2048    & 0.18 & 0.22 & 0.18 & 0.21 & 0.15 &   \textbf{0.13}  \\
  \hline 
  \hline 
\multirow{6}{*}{$\text{DF}_{\text{MSM}}$}& 64	   & 0.45 &  0.45  & 0.39 & 0.40 & 0.39 &  \textbf{0.38}   \\         
 \cline{2-8}
& 128    & 0.32 & 0.32 & 0.28 & 0.29 & \textbf{0.27} & 0.28       \\ 
 \cline{2-8}
& 256    & 0.25 & 0.24 & 0.21 & 0.22 & 0.20 &   \textbf{0.20}  \\
 \cline{2-8}
& 512    & 0.19 & 0.18 & 0.16 & 0.15 & \textbf{0.14} &  0.14  \\
 \cline{2-8}
& 1024    & 0.15 & 0.14 & 0.12 & 0.12 & \textbf{0.10} &  0.10  \\
 \cline{2-8}
& 2048    & 0.12 & 0.10 & 0.11 & 0.08 & 0.07 &  \textbf{0.07} \\
\hline
	\end{tabular}
	\end{center}
	\label{tab:RMSE_GammaD}
\end{table}

\begin{table}[t]
	\begin{center}
	\caption{\label{tbl:GN_results} RMSE Results for the different regression methods, absolute value of a Normal raised to $\beta$ (GND).}

	\begin{tabular}{|c|c||c|c|c|c|c|c|}
	\hline 
	&	$N$				           & LRM & RT & SVR & ET & GPR & NNR\\

\hline
\hline
\multirow{6}{*}{MSM}& 64	   & 0.64 &  0.77  & 0.68 & 0.73 & \textbf{0.55} &  0.59   \\         
 \cline{2-8}
& 128    & 0.62 & 0.74 & 0.65 & 0.71 & \textbf{0.53} &  0.56       \\ 
 \cline{2-8}
& 256    & 0.53 & 0.65 & 0.56 & 0.61 & \textbf{0.43} &  0.47  \\
 \cline{2-8}
& 512    & 0.46 & 0.56 & 0.47 & 0.52 & \textbf{0.36} &   0.40  \\
 \cline{2-8}
& 1024    & 0.40 & 0.49 & 0.41 & 0.45 & \textbf{0.31} &   0.34  \\
 \cline{2-8}
& 2048    & 0.36 & 0.41 & 0.35 & 0.38 & 0.34 &   \textbf{0.29}  \\
  \hline 
  \hline 
\multirow{6}{*}{$\text{DF}_{\text{MSM}}$}& 64	   & 0.41 &  0.45  & 0.38 & 0.39 & \textbf{0.37} &  0.38   \\         
 \cline{2-8}
& 128    & 0.37 & 0.40 & 0.34 & 0.35 & \textbf{0.34} & 0.34       \\ 
 \cline{2-8}
& 256    & 0.27 & 0.31 & 0.25 & 0.26 & \textbf{0.25} &  0.25  \\
 \cline{2-8}
& 512    & 0.20 & 0.21 & 0.18 & 0.18 & \textbf{0.17} &  0.18  \\
 \cline{2-8}
& 1024    & 0.16 & 0.16 & 0.14 & 0.14 & \textbf{0.13} &  0.13  \\
 \cline{2-8}
& 2048    & 0.13 & 0.12 & 0.11 & 0.10 & \textbf{0.10} &  0.10 \\
\hline
	\end{tabular}
	\end{center}
	\label{tab:RMSE_GaussN}
\end{table}

\ifCLASSOPTIONcompsoc
  \section*{Acknowledgments}
\else
  \section*{Acknowledgment}
\fi

The authors would like to thank Matteo Bonforte, Rafael Molina, Ivan Selesnick, Gustau Camps-Valls and Eero Simoncelli for fruitful discussions.

\ifCLASSOPTIONcaptionsoff
  \newpage
\fi

\bibliographystyle{IEEEtran}
\bibliography{Decoupled_2022}

\end{document}